\documentclass{article} %
\usepackage{configuration/iclr2026_conference,times}

\usepackage{hyperref}
\usepackage{url}

\usepackage{transparent}

\usepackage{amsmath,amssymb,amsthm}

\providecommand{\norm}[1]{\left\lVert#1\right\rVert}

\providecommand{\R}{\mathbb{R}} %

\providecommand{\E}{{\mathbb E}}
\providecommand{\E}[1]{{\mathbb E}\left.#1\right. }        %
\providecommand{\Eb}[1]{{\mathbb E}\left[#1\right] }       %
\providecommand{\EEb}[2]{{\mathbb E}_{#1}\left[#2\right] } %

\providecommand{\dm}{\mathrm{d}}

\renewcommand{\aa}{\mathbf{a}}
\providecommand{\bb}{\mathbf{b}}
\providecommand{\cc}{\mathbf{c}}

\providecommand{\rr}{\mathbf{r}}
\renewcommand{\ss}{\mathbf{s}}

\providecommand{\vv}{\mathbf{v}}

\providecommand{\xx}{\mathbf{x}}
\providecommand{\yy}{\mathbf{y}}
\providecommand{\zz}{\mathbf{z}}

\providecommand{\mepsilon}{\boldsymbol{\epsilon}}
\providecommand{\mphi}{\boldsymbol{\phi}}

\providecommand{\mtheta}{\boldsymbol{\theta}}

\providecommand{\mf}{\boldsymbol{f}}
\providecommand{\mv}{\boldsymbol{v}}
\providecommand{\mg}{\boldsymbol{g}}
\providecommand{\mmF}{\boldsymbol{F}}

\providecommand{\cF}{\mathcal{F}}

\providecommand{\cL}{\mathcal{L}}

\newenvironment{talign*}
{\let\displaystyle\textstyle\csname align*\endcsname}
{\endalign}

\usepackage[utf8]{inputenc}         %
\usepackage[T1]{fontenc}            %
\usepackage{url}                    %
\usepackage{booktabs}               %
\usepackage{amsfonts}               %
\usepackage{nicefrac}               %
\usepackage{microtype}              %
\usepackage{xcolor}                 %
\usepackage{algorithm}
\usepackage{algorithmic}
\usepackage{placeins}
\usepackage{graphicx}
\usepackage{subcaption}
\usepackage[flushleft]{threeparttable}
\usepackage{float}
\usepackage{multirow}
\usepackage{xspace}
\usepackage{enumitem}
\usepackage[font=small]{caption}
\usepackage{autobreak}
\usepackage{sidecap}
\usepackage{wrapfig}
\usepackage{bbding}
\usepackage[toc, page, header]{appendix}
\usepackage{tikz}
\usetikzlibrary{arrows.meta,positioning,calc}
\usepackage{xcolor}
\usepackage{pifont}
\usepackage{mdframed}
\usepackage{colortbl}
\usepackage{tcolorbox}
\tcbuselibrary{skins, breakable, theorems}

\definecolor{coral}{RGB}{255,127,80}
\definecolor{darkgreen}{RGB}{0,100,0}
\definecolor{darkyellow}{RGB}{204,153,0}
\definecolor{salmon}{RGB}{250,128,114}
\definecolor{darkred}{RGB}{150,0,0}
\definecolor{darkblue}{RGB}{0,70,150}
\definecolor{algorithmgray}{RGB}{160,160,160}
\definecolor{statusorange}{RGB}{204,112,0}

\hypersetup{
  colorlinks=true,
  linkcolor=darkred,
  citecolor=darkred,
  urlcolor=darkred
}

\newcommand{\darkredtext}[1]{{\color{darkred}#1}}
\newcommand{\statusyes}[1]{\textbf{\textcolor{darkgreen}{#1}}}
\newcommand{\statusno}[1]{\textbf{\textcolor{darkred}{#1}}}
\newcommand{\statusother}[1]{\textbf{\textcolor{statusorange}{#1}}}

\newcommand{\secref}[1]{\hyperref[#1]{\darkredtext{Sec.~\ref*{#1}}}}
\newcommand{\thmref}[1]{\hyperref[#1]{\darkredtext{Thm.~\ref*{#1}}}}
\newcommand{\defref}[1]{\hyperref[#1]{\darkredtext{Def.~\ref*{#1}}}}
\newcommand{\propref}[1]{\hyperref[#1]{\darkredtext{Prop.~\ref*{#1}}}}
\newcommand{\assumpref}[1]{\hyperref[#1]{\darkredtext{Assump.~\ref*{#1}}}}
\newcommand{\remarkref}[1]{\hyperref[#1]{\darkredtext{Rem.~\ref*{#1}}}}
\newcommand{\hypref}[1]{\hyperref[#1]{\darkredtext{Hyp.~\ref*{#1}}}}
\newcommand{\conjref}[1]{\hyperref[#1]{\darkredtext{Conj.~\ref*{#1}}}}
\newcommand{\lemref}[1]{\hyperref[#1]{\darkredtext{Lem.~\ref*{#1}}}}
\newcommand{\corref}[1]{\hyperref[#1]{\darkredtext{Cor.~\ref*{#1}}}}
\newcommand{\noteref}[1]{\hyperref[#1]{\darkredtext{Nota.~\ref*{#1}}}}
\newcommand{\claimref}[1]{\hyperref[#1]{\darkredtext{Clm.~\ref*{#1}}}}
\newcommand{\obsref}[1]{\hyperref[#1]{\darkredtext{Obs.~\ref*{#1}}}}
\newcommand{\algref}[1]{\hyperref[#1]{\darkredtext{Alg.~\ref*{#1}}}}
\newcommand{\figref}[1]{\hyperref[#1]{\darkredtext{Fig.~\ref*{#1}}}}
\newcommand{\tabref}[1]{\hyperref[#1]{\darkredtext{Tab.~\ref*{#1}}}}
\newcommand{\appref}[1]{\hyperref[#1]{\darkredtext{App.~\ref*{#1}}}}

\newtheoremstyle{custom}
{1pt} %
{1pt} %
{\itshape} %
{} %
{\bfseries} %
{} %
{ } %
{\thmname{#1} \thmnumber{#2} \thmnote{(#3)} . } %

\theoremstyle{custom}

\newtheorem{innerdefinition}{Definition}
\newtheorem{innerproposition}{Proposition}
\newtheorem{innerassumption}{Assumption}
\newtheorem{innerremark}{Remark}
\newtheorem{innertheorem}{Theorem}
\newtheorem{innerhypothesis}{Hypothesis}
\newtheorem{innerconjecture}{Conjecture}
\newtheorem{innerlemma}{Lemma}
\newtheorem{innercorollary}{Corollary}
\newtheorem{innernotation}{Notation}
\newtheorem{innerclaim}{Claim}
\newtheorem{innerproblem}{Problem}

\newtheorem{innerobservation}{Observation}

\newmdenv[
  backgroundcolor=gray!10,
  linecolor=gray!100,
  linewidth=0.8pt,
  skipabove=2pt,
  skipbelow=2pt,
  innertopmargin=10pt,
  innerbottommargin=5pt,
  innerleftmargin=5pt,
  innerrightmargin=5pt,
]{definitionframe}

\newmdenv[
  backgroundcolor=blue!10,
  linecolor=blue!100,
  linewidth=0.8pt,
  skipabove=2pt,
  skipbelow=2pt,
  innertopmargin=10pt,
  innerbottommargin=5pt,
  innerleftmargin=5pt,
  innerrightmargin=5pt,
]{propositionframe}

\newmdenv[
  backgroundcolor=green!10,
  linecolor=green!100,
  linewidth=0.8pt,
  skipabove=2pt,
  skipbelow=2pt,
  innertopmargin=10pt,
  innerbottommargin=5pt,
  innerleftmargin=5pt,
  innerrightmargin=5pt,
]{assumptionframe}

\newmdenv[
  backgroundcolor=yellow!10,
  linecolor=yellow!100,
  linewidth=0.8pt,
  skipabove=2pt,
  skipbelow=2pt,
  innertopmargin=10pt,
  innerbottommargin=5pt,
  innerleftmargin=5pt,
  innerrightmargin=5pt,
]{remarkframe}

\newmdenv[
  backgroundcolor=red!10,
  linecolor=red!100,
  linewidth=0.8pt,
  skipabove=2pt,
  skipbelow=2pt,
  innertopmargin=10pt,
  innerbottommargin=5pt,
  innerleftmargin=5pt,
  innerrightmargin=5pt,
]{theoremframe}

\newmdenv[
  backgroundcolor=purple!10,
  linecolor=purple!100,
  linewidth=0.8pt,
  skipabove=2pt,
  skipbelow=2pt,
  innertopmargin=10pt,
  innerbottommargin=5pt,
  innerleftmargin=5pt,
  innerrightmargin=5pt,
]{hypothesisframe}

\newmdenv[
  backgroundcolor=orange!10,
  linecolor=orange!100,
  linewidth=0.8pt,
  skipabove=2pt,
  skipbelow=2pt,
  innertopmargin=10pt,
  innerbottommargin=5pt,
  innerleftmargin=5pt,
  innerrightmargin=5pt,
]{conjectureframe}

\newmdenv[
  backgroundcolor=cyan!10,
  linecolor=cyan!100,
  linewidth=0.8pt,
  skipabove=2pt,
  skipbelow=2pt,
  innertopmargin=10pt,
  innerbottommargin=5pt,
  innerleftmargin=5pt,
  innerrightmargin=5pt,
]{lemmaframe}

\newmdenv[
  backgroundcolor=magenta!10,
  linecolor=magenta!100,
  linewidth=0.8pt,
  skipabove=2pt,
  skipbelow=2pt,
  innertopmargin=10pt,
  innerbottommargin=5pt,
  innerleftmargin=5pt,
  innerrightmargin=5pt,
]{corollaryframe}

\newmdenv[
  backgroundcolor=pink!10,
  linecolor=pink!100,
  linewidth=0.8pt,
  skipabove=2pt,
  skipbelow=2pt,
  innertopmargin=10pt,
  innerbottommargin=5pt,
  innerleftmargin=5pt,
  innerrightmargin=5pt,
]{notationframe}

\newmdenv[
  backgroundcolor=violet!10,
  linecolor=violet!100,
  linewidth=0.8pt,
  skipabove=2pt,
  skipbelow=2pt,
  innertopmargin=10pt,
  innerbottommargin=5pt,
  innerleftmargin=5pt,
  innerrightmargin=5pt,
]{claimframe}

\newmdenv[
  backgroundcolor=salmon!10,
  linecolor=salmon!100,
  linewidth=0.8pt,
  skipabove=2pt,
  skipbelow=-2pt,
  innertopmargin=10pt,
  innerbottommargin=5pt,
  innerleftmargin=5pt,
  innerrightmargin=5pt,
]{problemframe}

\newmdenv[
  backgroundcolor=lavender!10,
  linecolor=lavender!100,
  linewidth=0.8pt,
  skipabove=2pt,
  skipbelow=2pt,
  innertopmargin=10pt,
  innerbottommargin=5pt,
  innerleftmargin=5pt,
  innerrightmargin=5pt,
]{observationframe}

\tcbset{
  thmbox/.style={
    enhanced,
    breakable,
    sharp corners,
    boxrule=0pt,
    leftrule=3pt,
    top=0pt,
    bottom=0pt,
    left=5pt,
    right=5pt,
    before skip=10pt,
    after skip=10pt,
  }
}

\newenvironment{proposition}
{\begin{tcolorbox}[thmbox, colback=blue!5!white, colframe=blue!75!black]\begin{innerproposition}}
      {\end{innerproposition}\end{tcolorbox}}

\newenvironment{theorem}
{\begin{tcolorbox}[thmbox, colback=red!5!white, colframe=red!75!black]\begin{innertheorem}}
      {\end{innertheorem}\end{tcolorbox}}

\newenvironment{corollary}
{\begin{tcolorbox}[thmbox, colback=magenta!5!white, colframe=magenta!75!black]\begin{innercorollary}}
      {\end{innercorollary}\end{tcolorbox}}

\newenvironment{problem}
{\begin{tcolorbox}[thmbox, colback=salmon!5!white, colframe=salmon!75!black]\begin{innerproblem}}
      {\end{innerproblem}\end{tcolorbox}}

\newcommand{\method}{\textsc{TBSM}\xspace} %
\newcommand{\sg}{\mathop{{\color{darkblue}\operatorname{sg}}}\nolimits}

\title{\fontsize{16.5}{20}\selectfont Three-Body Scattering for Generative Modeling}

\author{Peng Sun$^{1,2}$, Zhenglin Cheng$^{1,2}$, Deyuan Liu$^1$, Jun Xie$^{1,2}$, Xinyi Shang$^3$, Tao Lin$^{1}$ \\
$^1$Westlake University, $^2$Zhejiang University, $^3$University College London\\
\texttt{sp12138sp@gmail.com}\\
}

\iclrfinalcopy %
\begin{document}

\maketitle

\begin{abstract}
    Modern generative models typically rely on an adversarial critic, a prescribed noise-to-data path, or an autoregressive factorization.
    Instead, we show that a proper distributional energy can induce sample-level motion and provide direct regression supervision for a one-step generator.
    Three-Body Scattering Modeling (\method) for generation turns the energy distance into a constant-size per-projectile interaction: each projectile is attracted toward one real source and repelled from one independently generated source.
    Conditioned on the projectile and its condition, its expectation equals the $2$-Wasserstein gradient-flow velocity of $\frac12D_E^2(P_{\mtheta},Q)$.
    A batch of $B$ frozen-target events yields $O(B)$ sample-level losses, each using one reference for its condition instead of the minibatch-wide all-pairs field used by methods such as Drifting Models.
    Tracking this conditional expectation online can reduce field noise.
    Using scattering in frozen image features, \method trains one-step generators on ImageNet-256, achieving FID${}=2.23$ with pixel-space PixelDiT-XL and FID${}=1.63$ with latent-space DiT-XL at NFE${}=1$.
    We provide a design map relating diffusion-related supervision, Drift-like dynamics, and GAN-like objectives.
    These results establish tracked scattering as a route to high-dimensional one-step generation.
    Code: \url{https://github.com/sp12138/TBSM}.
\end{abstract}

\begin{figure*}[!b]
    \centering
    \includegraphics[width=\linewidth]{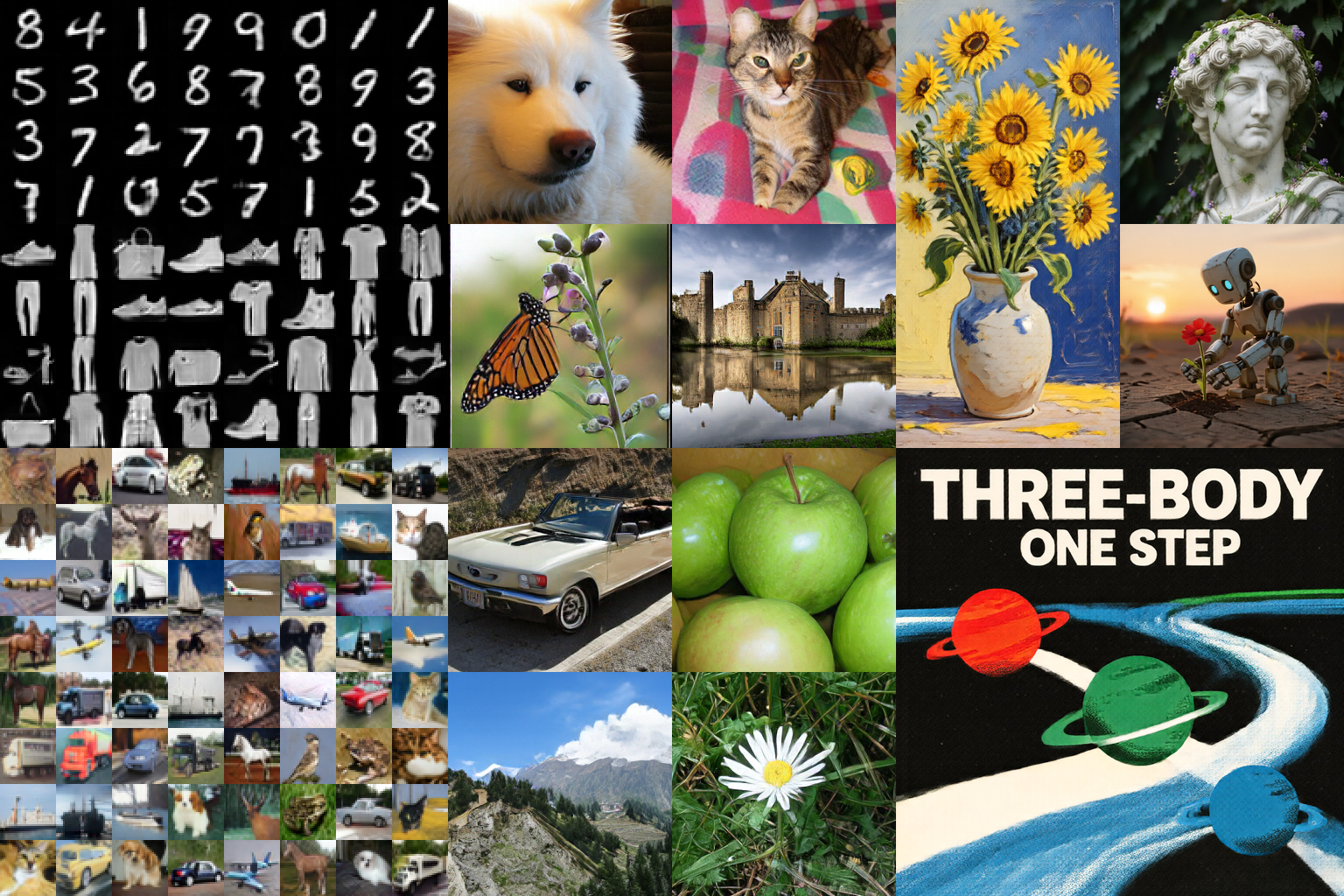}

    \caption{
        \textbf{Samples generated in one step by \method{}-trained models.}
        From left to right: MNIST, Fashion-MNIST, and CIFAR-10 (top to bottom); \textbf{\textit{pixel-space}} ImageNet-1K samples; and text-to-image samples generated at \textbf{NFE${}=1$ by a Qwen-Image-20B~\citep{wu2025qwenimagetechnicalreport} model fine-tuned using only our \method{} (see \appref{app:text-to-image})}.
    }
    \label{fig:scale-overview}
\end{figure*}

\section{Introduction}
\label{sec:intro}

Generative models rely on adversarial comparison~\citep{goodfellow2014generative,brock2018large}, path-indexed diffusion, score, or flow supervision~\citep{ho2020denoising,song2020score,lipman2022flow}, or autoregressive prediction~\citep{oord2016pixel,tian2024visual,sun2024autoregressive}.
Direct one-step distribution-matching methods usually estimate supervision from minibatch fields, statistics, or variational objectives~\citep{deng2026drifting,yang2026representation,caucheteux2026unifying}.

\begin{problem}
\textit{Can a proper distribution-matching objective yield constant-size per-projectile stochastic interactions for high-dimensional one-step generation without teacher queries?}
\end{problem}

We start from the exact population objective $\frac{1}{2}D_E^2(P_{\mtheta},Q)$, a proper discrepancy whose first variation yields particle-wise descent vectors~\citep{szekely2013energy,ambrosio2008gradient}.

Three-Body Scattering Modeling (\method) for generation treats a generated sample as a projectile attracted to one real source and repelled from an independent generated source; the latter supplies the model--model interaction absent from pure attraction.
Averaging this interaction recovers the $2$-Wasserstein descent velocity, while one triplet per projectile gives $O(B)$ detached-source targets for $B$ projectiles.
We call this supervision sample-level because each condition-specific loss requires only one paired real observation, naturally fitting text--image datasets with one image per caption rather than same-condition reference sets.
This projectile-level update is the classical stochastic gradient behind linear-time distance-kernel MMD~\citep{gretton2012kernel}, deployed with frozen targets and online tracking in high dimensions.

Following detached-target interfaces in direct distribution dynamics and representation matching~\citep{deng2026drifting,yang2026representation}, frozen-target regression makes the current-parameter gradient the generator-Jacobian pullback of the sampled vector up to loss scaling.
An online tracker approximates the conditional expectation of this cheap but noisy vector.
At the exact endpoint, oracle tracking preserves the population field and removes source-sampling variance; finite tracker error admits exact field- and update-space decompositions.
\figref{fig:method-overview} illustrates this construction.

\begin{figure*}[t]
    \centering
    \definecolor{tbsmParticleRed}{HTML}{CF6268}
    \definecolor{tbsmParticlePurple}{HTML}{9F69BF}
    \definecolor{tbsmParticleBlue}{HTML}{5695CA}
    \resizebox{\linewidth}{!}{\input{resources/figures/framework_overview.tex}}
    \caption{
        \textbf{Overview of \method.}
        Left: a real--generated--generated event yields a noisy scattering vector.
        Center: instant scattering drives convergence toward the target along noisy trajectories; tracking smooths the motion.
        Right: at the fully tracked energy-distance endpoint, an online tracker estimates the conditional expectation of the instant scattering vector at each projectile and supervises the generator.
        Center-panel \textcolor{tbsmParticleRed}{red}, \textcolor{tbsmParticleBlue}{blue}, and \textcolor{tbsmParticlePurple}{purple} curves show generated-particle paths from \textcolor{tbsmParticleBlue}{blue initial points}; \textcolor{tbsmParticleRed}{red dots} mark targets; $\sg[\cdot]$ denotes stop gradient.
    }
    \label{fig:method-overview}
\end{figure*}

The exact energy-distance field anchors the theory; experiments retain the regression loop in frozen representation spaces with smoothed bearings and sometimes reweighted intra-source interaction.
The \method objective requires no path-model teacher queries, prescribed denoising target, adversarial discriminator, or ordered factorization.
Training and inference use no classifier-free guidance or guidance-scale sweep; \appref{app:positioning} contrasts pipelines that bake guidance into a training field or distillation target.
Primary ImageNet runs use multi-step checkpoints only for initialization (\appref{app:exp}).
\tabref{tab:paradigm-positioning} compares paradigms by sampling form, scaling evidence, and supervision granularity.
At NFE${}=1$, PixelDiT-XL (pixel) and DiT-XL/2 (latent) reach FID${}=2.23$ and $1.63$ on ImageNet-256; DiT-XL/4 reaches $1.92$ on ImageNet-512.
\figref{fig:scale-overview} adds small-scale and text-to-image samples.

Under an explicit slope condition, sufficiently small relative tracker excess risk preserves exact full-space energy-distance convergence.
Stronger parameter-space assumptions yield expected finite-generator convergence.
Finite-error bounds quantify deployed surrogates and smoothing, while representation-space guarantees concern only projected distributions.
These results do not establish convergence for general neural-network training.
Our contributions are:
\begin{enumerate}[label=(\alph*), nosep, leftmargin=16pt]
    \item \textbf{High-dimensional three-body scattering.} We turn energy-distance motion into constant-size frozen-target supervision for high-dimensional one-step generators with pixel or latent outputs using frozen image-representation fields.
    \item \textbf{Online tracked scattering.} We cast tracking as an online conditional-expectation projection and derive exact field- and update-space error decompositions, finite-error stationarity, and energy-distance convergence under explicit assumptions; matched-update corners favor tracked over instant scattering.
    \item \textbf{An interpretive design map.} We connect \method to Drift-like dynamics, a path-regularized GAN-like displacement analogue, and diffusion-related supervision via input and target changes.
\end{enumerate}

\begin{table*}[t]
    \centering
    \caption{
        \textbf{Positioning of major generative modeling paradigms.}
        We compare direct single-pass sampling, the breadth of empirical scaling evidence, and the granularity of generator supervision.
    }
    \scriptsize
    \setlength{\tabcolsep}{3pt}
    \renewcommand{\arraystretch}{1.08}
    \begin{tabular*}{\textwidth}{@{\extracolsep{\fill}}
        >{\RaggedRight\arraybackslash}p{0.32\textwidth}
        >{\RaggedRight\arraybackslash}p{0.20\textwidth}
        >{\RaggedRight\arraybackslash}p{0.20\textwidth}
        >{\RaggedRight\arraybackslash}p{0.20\textwidth}@{}}
        \toprule
        \textbf{Paradigm / representative work}
        & \textbf{Direct single-pass}\newline\textbf{sampling?}
        & \textbf{Scaling evidence?}
        & \textbf{Generator-supervision}\newline\textbf{granularity}
        \\
        \midrule
        GANs~\citep{goodfellow2014generative,brock2018large,sauer2022stylegan,kang2023scaling}
        & \statusyes{Yes}\newline direct single-pass generation
        & \statusother{Mixed}\newline several large-scale successes
        & \statusyes{Per-sample}\newline signal from a learned critic
        \\
        \midrule
        Diffusion, score, and flow models~\citep{ho2020denoising,song2020score,lipman2022flow,ma2024sit}
        & \statusno{No}\newline iterative path-based sampling
        & \statusyes{Strong}\newline extensive evidence at large scale
        & \statusyes{Per-sample}\newline denoising, score, or velocity target
        \\
        \midrule
        Autoregressive models~\citep{oord2016pixel,tian2024visual,sun2024autoregressive}
        & \statusno{No}\newline iterative structured decoding
        & \statusyes{Strong}\newline extensive evidence at large scale
        & \statusyes{Per-token}\newline predictive likelihood target
        \\
        \midrule
        One-step or few-step diffusion acceleration~\citep{song2023consistency,frans2024one,geng2025mean,yin2024one}
        & \statusother{Method-dependent}\newline one or a few generator evaluations
        & \statusother{Growing}\newline demonstrated on large image models
        & \statusother{Method-dependent}\newline teacher, consistency, or auxiliary target
        \\
        \midrule
        Direct distribution dynamics and distributional losses~\citep{deng2026drifting,yang2026representation,caucheteux2026unifying}
        & \statusyes{Yes}\newline direct single-pass generation
        & \statusother{Growing}\newline recent ImageNet-scale results
        & \statusother{Batch-estimated}\newline fields, statistics, or objectives
        \\
        \bottomrule
    \end{tabular*}
    \label{tab:paradigm-positioning}
\end{table*}

\section{Methodology}
\label{sec:methodology}

\subsection{Distributional Objective and Stochastic Optimization}
\label{sec:preliminaries}

\paragraph{Population objective.}
Let $Q(\cdot\mid\cc)$ be the real probability law on $\R^d$ and let $\mg_{\mtheta}(\zz,\cc)$ be a conditional generator with $\zz \sim p(\zz)$ and optional condition $\cc$.
For each condition, the generator induces the probability law $P_{\mtheta}(\cdot\mid\cc)=(\mg_{\mtheta}(\cdot,\cc))_\#p(\zz)$; in the unconditional case we write these laws as $P_{\mtheta}$ and $Q$.
We use $\xx$ for a generic spatial argument of a potential or vector field and reserve $\xx_{\mathrm{p}}$ for a sampled generated projectile in a stochastic three-body event.
The learning goal is condition-wise distribution matching:
\begin{equation}
    \cF(\mtheta)
    =
    \EEb{\cc\sim p_{\mathrm{data}}(\cc)}
    {
        \frac{1}{2}
        D_E^2
        \left(
        P_{\mtheta}(\cdot\mid\cc),
        Q(\cdot\mid\cc)
        \right)
    }.
    \label{eq:conditional-energy}
\end{equation}
The unconditional case is recovered by dropping $\cc$.
For finite paired datasets with one observation per condition, $Q(\cdot\mid\cc)$ is a population object rather than the literal empirical distribution for that condition.
We use the squared energy distance as the population discrepancy:
\begin{equation}
    \begin{aligned}
        D_E^2(P_{\mtheta},Q)
        =
         & \;2\EEb{\xx_{\mathrm{p}}\sim P_{\mtheta},\xx_{\mathrm{r}}\sim Q}
        {\norm{\xx_{\mathrm{p}}-\xx_{\mathrm{r}}}}
        \\
         & -
        \EEb{\xx_{\mathrm{p}},\xx_{\mathrm{s}}\sim P_{\mtheta}}
        {\norm{\xx_{\mathrm{p}}-\xx_{\mathrm{s}}}}
        -
        \EEb{\xx_{\mathrm{r}},\xx_{\mathrm{r}}'\sim Q}
        {\norm{\xx_{\mathrm{r}}-\xx_{\mathrm{r}}'}}.
    \end{aligned}
    \label{eq:energy-distance}
\end{equation}
Euclidean space is of strong negative type, so $D_E^2(P_{\mtheta},Q)\ge 0$ and equality holds exactly when $P_{\mtheta}=Q$ under finite first moments.
Equivalently, $\frac12D_E^2$ is a squared distance-kernel MMD; \appref{app:linear-mmd-relation} states its precise relation to the stochastic update below.

\paragraph{Population vector field.}
For a fixed condition, suppress $\cc$ and write $\cF(P_{\mtheta})=\frac{1}{2}D_E^2(P_{\mtheta},Q)$.
Ignoring the real-real constant, a representative of its first variation, defined up to an additive constant, is
\begin{equation}
    \frac{\delta \cF}{\delta P_{\mtheta}}(\xx)
    =
    \EEb{
        \substack{
            \xx_{\mathrm{r}}\sim Q,\;
            \xx_{\mathrm{s}}\sim P_{\mtheta}
        }
    }
    {
        \norm{\xx-\xx_{\mathrm{r}}}
        -
        \norm{\xx-\xx_{\mathrm{s}}}
    }.
    \label{eq:first-variation-potential}
\end{equation}
The coefficient of the generated-source term is one because the model-model term in \eqref{eq:energy-distance} depends on $P_{\mtheta}$ through both arguments; the symmetric variation cancels the factor $1/2$ in the definition of $\cF$.
The $2$-Wasserstein negative-gradient flow of a functional $\cF(P)$ has velocity $\vv(\xx)=-\nabla_{\xx}\frac{\delta \cF}{\delta P}(\xx)$.
Applying this identity to \eqref{eq:first-variation-potential} condition-wise gives the generated-particle scattering vector field
\begin{equation}
    \vv_{\mtheta}(\xx,\cc)
    =
    \EEb{\xx_{\mathrm{r}}\sim Q(\cdot\mid\cc)}
    {
        \frac{\xx_{\mathrm{r}}-\xx}
        {\norm{\xx_{\mathrm{r}}-\xx}}
    }
    -
    \EEb{\xx_{\mathrm{s}}\sim P_{\mtheta}(\cdot\mid\cc)}
    {
        \frac{\xx_{\mathrm{s}}-\xx}
        {\norm{\xx_{\mathrm{s}}-\xx}}
    }.
    \label{eq:population-field}
\end{equation}
These inter- and intra-source expected bearings have unit-norm integrands away from coincidence.
For estimator identities we set each bearing to zero at exact coincidence; code uses denominator smoothing, which is the exact gradient of a differentiable radial potential and converges pointwise to the bearing field as detailed in \appref{app:denominator-smoothing}.
The ideal flow follows $\dm\xx/\dm t=\vv_{\mtheta}(\xx,\cc)$, and its generator update pulls this field back through the generator Jacobian:
\begin{equation}
    \nabla_{\mtheta}\cF
    =
    -
    \EEb{\cc\sim p_{\mathrm{data}}(\cc),\,\zz\sim p(\zz)}
    {
        J_{\mtheta}\mg_{\mtheta}(\zz,\cc)^{\top}
        \vv_{\mtheta}(\mg_{\mtheta}(\zz,\cc),\cc)
    }.
    \label{eq:generator-gradient-field}
\end{equation}
\paragraph{Stochastic estimator.}
A full empirical evaluation of \eqref{eq:energy-distance} uses pairwise distances, whereas classical random-pair MMD estimators are linear in batch size~\citep{gretton2012kernel}.
We use one side of this stochastic gradient as a projectile-level field with one differentiable projectile, one real source, one detached generated source, and an explicit target that an auxiliary field can track; its $O(B)$ scaling refers to sample interactions, not per-sample network cost.
The exact relation to the symmetrized estimator is given in \appref{app:linear-mmd-relation}.

\begin{algorithm}[t]
    \caption{Three-Body Scattering Regression for Simple One-Step Generator Training}
    \label{alg:three-body-training}
    \begin{algorithmic}[1]
        \STATE Initialize generator parameters $\mtheta$.
        \REPEAT
        \STATE Sample real source-condition pairs $(\xx_{\mathrm{r}},\cc)$ from dataset.
        \STATE Sample noise $\zz,\zz_{\mathrm{s}}\sim p(\zz)$, and generate $\xx_{\mathrm{p}}=\mg_{\mtheta}(\zz,\cc)$ and $\xx_{\mathrm{s}}=\sg[\mg_{\mtheta}(\zz_{\mathrm{s}},\cc)]$.
        \STATE Compute inter-source bearing:
        $
            \bb_{\mathrm{r}} =
            (\xx_{\mathrm{r}}-\xx_{\mathrm{p}})
            /
            \norm{\xx_{\mathrm{r}}-\xx_{\mathrm{p}}}.
        $
        \STATE Compute intra-source bearing:
        $
            \bb_{\mathrm{s}} =
            (\xx_{\mathrm{s}}-\xx_{\mathrm{p}})
            /
            \norm{\xx_{\mathrm{s}}-\xx_{\mathrm{p}}}.
        $
        \STATE Update $\mtheta$ by descending
        $
            \norm{\xx_{\mathrm{p}}-\sg[\xx_{\mathrm{p}}+\bb_{\mathrm{r}}-\bb_{\mathrm{s}}]}_2^2.
        $
        \UNTIL{convergence}
    \end{algorithmic}
\end{algorithm}

\subsection{Three-Body Scattering Vector Field}
\label{sec:three-body-field}

\paragraph{Triplet estimator.}
We estimate \eqref{eq:population-field} with a constant-size signed interaction; \appref{app:minimal-three-body} makes precise why a real-only estimator cannot recover its model-dependent term.
For a condition $\cc$ and projectile $\xx_{\mathrm{p}}\sim P_{\mtheta}(\cdot\mid\cc)$, sample one real source $\xx_{\mathrm{r}}\sim Q(\cdot\mid\cc)$ and one independently generated source $\xx_{\mathrm{s}}\sim P_{\mtheta}(\cdot\mid\cc)$.
The corresponding sample-level scattering estimator is
\begin{equation}
    \widehat{\vv}_{\mathrm{scat}}(\xx_{\mathrm{p}};\xx_{\mathrm{r}},\xx_{\mathrm{s}})
    =
    \bb_{\mathrm{r}}-\bb_{\mathrm{s}},
    \qquad
    \bb_{\mathrm{r}}
    =
    \underbrace{
        \frac{\xx_{\mathrm{r}}-\xx_{\mathrm{p}}}
        {\norm{\xx_{\mathrm{r}}-\xx_{\mathrm{p}}}}
    }_{\text{inter-source bearing}},
    \qquad
    \bb_{\mathrm{s}}
    =
    \underbrace{
        \frac{\xx_{\mathrm{s}}-\xx_{\mathrm{p}}}
        {\norm{\xx_{\mathrm{s}}-\xx_{\mathrm{p}}}}
    }_{\text{intra-source bearing}},
    \label{eq:scattering-direction}
\end{equation}
Averaging over the two sources recovers the population scattering vector field:
\begin{equation}
    \EEb{\xx_{\mathrm{r}}\sim Q(\cdot\mid\cc),\,
    \xx_{\mathrm{s}}\sim P_{\mtheta}(\cdot\mid\cc)}
    {
    \widehat{\vv}_{\mathrm{scat}}(\xx_{\mathrm{p}};\xx_{\mathrm{r}},\xx_{\mathrm{s}})
    }
    =
    \vv_{\mtheta}(\xx_{\mathrm{p}},\cc).
    \label{eq:three-body-field-expectation}
\end{equation}
Equivalently, the same vector is the negative projectile gradient of the single-event energy $\norm{\xx_{\mathrm{p}}-\xx_{\mathrm{r}}}-\norm{\xx_{\mathrm{p}}-\xx_{\mathrm{s}}}$, whose source expectation is \eqref{eq:first-variation-potential} evaluated at the projectile.

\paragraph{Scattering interpretation.}
Here, three-body denotes a projectile's signed interaction with real and model sources, not Newtonian dynamics; \thmref{thm:three-body-equivalence} formalizes its objective-to-update identity.

\subsection{Scattering Target Regression}
\label{sec:scattering-regression}

\paragraph{Frozen target.}
For each loss event, we sample a projectile, real source, and independently generated source, displace the projectile by the sampled inter-minus-intra vector, and regress the generator output toward this detached target.
The one-step target is
\begin{equation}
    \xx_{\mathrm{target}}
    =
    \sg
    \left[
    \xx_{\mathrm{p}}
    + \widehat{\vv}_{\mathrm{scat}}(\sg[\xx_{\mathrm{p}}];\xx_{\mathrm{r}},\xx_{\mathrm{s}})
    \right],
    \qquad
    \xx_{\mathrm{p}}=\mg_{\mtheta}(\zz,\cc),
    \quad
    \xx_{\mathrm{s}}=\sg[\mg_{\mtheta}(\zz_{\mathrm{s}},\cc)].
    \label{eq:three-body-target}
\end{equation}
The generator objective averages this target regression over triplets:
\begin{equation}
    \cL_{\mathrm{gen}}(\mtheta)
    =
    \frac{1}{2}
    \EEb{
        (\xx_{\mathrm{r}},\cc)\sim p_{\mathrm{data}}(\xx,\cc),\,
        \zz,\zz_{\mathrm{s}}\sim p(\zz)
    }
    {
        \norm{
            \mg_{\mtheta}(\zz,\cc) - \xx_{\mathrm{target}}
        }_2^2
    }.
    \label{eq:target-loss}
\end{equation}
Detaching freezes the stochastic field estimate, so gradients pass only through the projectile and pull the fixed vector back through the generator Jacobian.
\paragraph{Formal scope.}
Fix $\cc$ and current parameters $\bar{\mtheta}$, abbreviate $Q=Q(\cdot\mid\cc)$, write
$P_{\mtheta}=(\mg_{\mtheta}(\cdot,\cc))_\#p(\zz)$ and
$\cF_{\cc}(\mtheta)=\frac{1}{2}D_E^2(P_{\mtheta},Q)$, and let
$\cL_{\mathrm{gen},\cc}(\mtheta;\bar{\mtheta})$ freeze all target-side quantities at $\bar{\mtheta}$.
For independent $\xx_{\mathrm r}\sim Q$ and $\zz,\zz_{\mathrm s}\sim p(\zz)$, set
$\xx_{\mathrm p}=\mg_{\bar{\mtheta}}(\zz,\cc)$,
$J_{\bar{\mtheta}}=\left.\nabla_{\mtheta}\mg_{\mtheta}(\zz,\cc)\right|_{\mtheta=\bar{\mtheta}}$,
$\xx_{\mathrm s}=\mg_{\bar{\mtheta}}(\zz_{\mathrm s},\cc)$, and
$\widehat{\vv}=\widehat{\vv}_{\mathrm{scat}}(\xx_{\mathrm p};\xx_{\mathrm r},\xx_{\mathrm s})$.
We assume exact bearings, finite first moments, zero collision probability, almost-sure generator differentiability, and valid differentiation under the expectation; \appref{app:theory} states the full scope.

\begin{theorem}[Energy-distance three-body local gradient equivalence]
    \label{thm:three-body-equivalence}
    Source independence gives
    $\Eb{\widehat{\vv}\mid\xx_{\mathrm p},\cc}=\vv_{\bar{\mtheta}}(\xx_{\mathrm p},\cc)$ and, for the generator pullback, $\Eb{\widehat{\vv}\mid\zz,J_{\bar{\mtheta}},\cc}=\vv_{\bar{\mtheta}}(\xx_{\mathrm p},\cc)$.
    Consequently, at the current parameters, exact-bearing frozen-target regression obeys the local identity
    \begin{equation}
        \left.
        \nabla_{\mtheta}\cL_{\mathrm{gen},\cc}(\mtheta;\bar{\mtheta})
        \right|_{\mtheta=\bar{\mtheta}}
        =
        \nabla_{\mtheta}\cF_{\cc}(\bar{\mtheta}).
        \label{eq:core-three-body-gradient-equivalence}
    \end{equation}
\end{theorem}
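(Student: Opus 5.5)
The proof has three parts: the conditional expectation of the scattering vector, the gradient of the frozen-target loss, and the gradient of $\cF_{\cc}$. The first two are direct. The third is the only real analytic step.

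\emph{Conditional expectations.} Condition on $(\zz,\cc)$. Then $\xx_{\mathrm p}=\mg_{\bar{\mtheta}}(\zz,\cc)$ and $J_{\bar{\mtheta}}$ are deterministic, and $\xx_{\mathrm r}\sim Q$ and $\xx_{\mathrm s}=\mg_{\bar{\mtheta}}(\zz_{\mathrm s},\cc)\sim P_{\bar{\mtheta}}$ are independent of $\zz$. Integrating $\bb_{\mathrm r}-\bb_{\mathrm s}$ against the product law of the two sources gives \eqref{eq:population-field} at $\xx_{\mathrm p}$, which is exactly \eqref{eq:three-body-field-expectation}. Zero collision probability makes the bearings well defined almost surely, and since the integrands have unit norm, integrability is automatic. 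The identity conditional on $(\xx_{\mathrm p},\cc)$ follows from the one conditional on $(\zz,J_{\bar{\mtheta}},\cc)$ by the tower property, because $\xx_{\mathrm p}$ is measurable with respect to $\zz$.

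\emph{Gradient of the frozen loss.} With the target frozen at $\bar{\mtheta}$, the loss is $\frac12\Eb{\norm{\mg_{\mtheta}(\zz,\cc)-\sg[\xx_{\mathrm p}+\widehat{\vv}]}^2}$. Differentiating under the expectation, which is assumed valid, and evaluating at $\mtheta=\bar{\mtheta}$ gives $\Eb{J_{\bar{\mtheta}}^{\top}(\xx_{\mathrm p}-\xx_{\mathrm p}-\widehat{\vv})}=-\Eb{J_{\bar{\mtheta}}^{\top}\widehat{\vv}}$. Now condition on $(\zz,J_{\bar{\mtheta}},\cc)$ and use the first step to replace $\widehat{\vv}$ by $\vv_{\bar{\mtheta}}(\xx_{\mathrm p},\cc)$. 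The result is $-\Eb{J_{\bar{\mtheta}}^{\top}\vv_{\bar{\mtheta}}(\xx_{\mathrm p},\cc)}$, the right side of \eqref{eq:generator-gradient-field} for fixed $\cc$.

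\emph{Gradient of $\cF_{\cc}$ (main obstacle).} It remains to show that \eqref{eq:generator-gradient-field} really is $\nabla_{\mtheta}\cF_{\cc}(\bar{\mtheta})$, rather than taking it as a formal Wasserstein identity. I would argue directly through the reparameterization.

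1. Write
\[
\cF_{\cc}(\mtheta)=\Eb{\norm{\mg_{\mtheta}(\zz)-\xx_{\mathrm r}}}-\tfrac12\Eb{\norm{\mg_{\mtheta}(\zz)-\mg_{\mtheta}(\zz')}}+\mathrm{const}.
\]
2. Differentiate the first term. Since $\nabla_{\xx}\norm{\xx-\yy}=-(\yy-\xx)/\norm{\yy-\xx}$ away from collisions, it gives $-\Eb{J^{\top}\bb_{\mathrm r}}$.
3. Differentiate the second term. Both arguments depend on $\mtheta$, so there are two Jacobian contributions. Exchanging $\zz\leftrightarrow\zz'$ shows they are equal, and their sum cancels the factor $\frac12$, leaving $+\Eb{J^{\top}\bb_{\mathrm s}}$ with $\xx_{\mathrm s}=\mg(\zz')$.
4. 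Combining gives $\nabla_{\mtheta}\cF_{\cc}=-\Eb{J^{\top}(\bb_{\mathrm r}-\bb_{\mathrm s})}$, which matches the frozen-loss gradient. In fact it matches before any conditioning, so the conditional-expectation identity is needed only to express both sides through $\vv_{\bar{\mtheta}}$.

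The delicate points are the non-differentiability of the norm on the collision set and the interchange of derivative and expectation. The first is handled by zero collision probability plus almost-sure differentiability of $\mg$; the second is covered by the stated assumption of valid differentiation under the expectation.
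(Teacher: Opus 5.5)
Your proposal is correct. Its first two parts match the paper's proof. Independence of $(\xx_{\mathrm r},\zz_{\mathrm s})$ from the projectile latent gives conditional unbiasedness. The frozen target then gives $\nabla_{\mtheta}\cL_{\mathrm{gen},\cc}|_{\bar{\mtheta}}=-\Eb{J_{\bar{\mtheta}}^{\top}\widehat{\vv}}=-\Eb{J_{\bar{\mtheta}}^{\top}\vv_{\bar{\mtheta}}}$.

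You differ from the paper only in the final equality. The paper simply invokes \eqref{eq:generator-gradient-field}. In the main text that identity comes from pulling the Wasserstein velocity $-\nabla_{\xx}\frac{\delta\cF}{\delta P}$ back through the generator Jacobian, a first-variation chain rule that is stated rather than proved. You instead differentiate the reparameterized energy directly. The $\zz\leftrightarrow\zz'$ exchange shows the two model--model Jacobian terms are equal, so together they cancel the $\frac12$.

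This makes your argument self-contained under exactly the stated assumptions: zero collision probability, a.s.\ differentiability of the generator, and differentiation under the expectation. It also shows the identity already holds between unconditional expectations, $\nabla_{\mtheta}\cF_{\cc}(\bar{\mtheta})=-\Eb{J_{\bar{\mtheta}}^{\top}(\bb_{\mathrm r}-\bb_{\mathrm s})}$. Conditional unbiasedness is therefore needed only to rewrite both sides through $\vv_{\bar{\mtheta}}$. The paper's route keeps the statement tied to the Wasserstein field $\vv_{\bar{\mtheta}}$, which the later tracker-conditioning and dissipation results build on.
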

Moreover, $\norm{\widehat{\vv}}_2\leq2$ almost surely and $\Eb{\norm{\widehat{\vv}-\vv_{\bar{\mtheta}}(\xx_{\mathrm p},\cc)}_2^2\mid\xx_{\mathrm p},\cc}\leq2$, a dimension-free bound; \propref{prop:triplet-minimality-optimality} gives the exact variance and one-draw-class optimality.
Whenever the $2$-Wasserstein chain rule is valid, the exact population flow dissipates energy at rate $\frac{\dm}{\dm t}\frac{1}{2}D_E^2(P_t,Q)=-\int\norm{\vv_t(\xx)}_2^2P_t(\dm\xx)\leq0$.
One joint real--condition draw gives an unbiased gradient estimate for \eqref{eq:conditional-energy}, since the real--real term is $\mtheta$-independent; the objective vanishes exactly when conditional laws agree $p_{\mathrm{data}}(\cc)$-almost surely.

\paragraph{Gradient surrogate.}
Since $\xx_{\mathrm{target}}$ is detached, the per-projectile loss has output gradient $-\widehat{\vv}_{\mathrm{scat}}$.
Thus the regression is a local loss surrogate: with the $1/2$ convention, its expected current-parameter gradient equals $\nabla_{\mtheta}\cF$, while its value is not an estimator of the energy distance.
\algref{alg:three-body-training} omits that conventional factor because the optimizer learning rate absorbs the resulting gradient scale.
The per-sample pullback and the formal local-objective proof are given in \appref{app:gradient-surrogate}.

\subsection{Tracked Scattering Targets}
\label{sec:tracked-scattering}

\paragraph{Motivation.}
The estimator in \eqref{eq:scattering-direction} is cheap but noisy, using one source per distribution.
A \emph{scattering tracker} approximates this conditional expectation online, lowering target error when its error is below residual source-sampling variance.

\paragraph{Exact and weighted targets.}
Write $\widehat{\vv}_{\lambda}=\bb_{\mathrm{r}}-\lambda\bb_{\mathrm{s}}$, where the coupled design parameter $\lambda\in[0,1]$ controls both the intra-source coefficient and the tracker-query range defined below.
At the exact energy-distance endpoint, $\lambda=1$ and the tracker is queried at the projectile, where $\Eb{\widehat{\vv}_{\mathrm{scat}}\mid\xx_{\mathrm p},\cc}=\vv_{\mtheta}(\xx_{\mathrm p},\cc)$.
This population field is conservative. If a tracker constrained as $\mv_{\mphi}(\xx,\cc)=\nabla_{\xx}u_{\mphi}(\xx,\cc)$ matches it throughout a connected domain, then $u_{\mphi}$ equals the negative first variation there up to an additive constant.

More generally, before tracker conditioning, the exact-bearing source mean of $\widehat{\vv}_{\lambda}$ is the negative Wasserstein gradient field of
\begin{equation}
    \cF_{\lambda}(P;Q)
    =
    \EEb{\xx\sim P,\yy\sim Q}{\norm{\xx-\yy}}
    -
    \frac{\lambda}{2}
    \EEb{\xx,\xx'\sim P}{\norm{\xx-\xx'}}.
    \label{eq:lambda-weighted-functional}
\end{equation}
At $\lambda=1$, $\cF_{\lambda}$ equals $\cF(P)=\frac12D_E^2(P,Q)$ up to a $Q$-dependent constant, and $\widehat{\vv}_{\lambda}=\widehat{\vv}_{\mathrm{scat}}$.
For $\lambda<1$, $\cF_{\lambda}=\cF_1+\frac{1-\lambda}{2}\Eb{\norm{\xx-\xx'}}$, so this positive pairwise term favors lower generated-sample dispersion and generally shifts the optimum away from $Q$.

\paragraph{Tracker query and mixing.}
The tracker is queried at $\tilde{\xx}=\alpha\xx_{\mathrm{r}}+(1-\alpha)\sg[\xx_{\mathrm{p}}]$, with $\alpha\sim\mathcal{U}(0,1-\lambda)$ and point mass at $\alpha=0$ when $\lambda=1$.
This design choice, not a consequence of \eqref{eq:lambda-weighted-functional}, moves the query from the projectile at $\lambda=1$ to the fake-to-real corridor at $\lambda=0$.
For $\lambda<1$, $\tilde{\xx}$ generally does not determine the projectile, so its population regression target is a query-conditioned expectation generally distinct from the projectile-conditioned negative-gradient field induced by $\cF_{\lambda}$.
The tracker is trained by the regression objective $\cL_{\mathrm{trk}}(\mphi)=\Eb{\norm{\mv_{\mphi}(\tilde{\xx},\cc)-\sg[\widehat{\vv}_{\lambda}]}_2^2}$.
The tracked-supervision weight $\rho\in[0,1]$ then mixes the instantaneous and tracked vectors as $\widehat{\vv}_{\mathrm{mix}}=(1-\rho)\widehat{\vv}_{\lambda}+\rho\mv_{\mphi}(\tilde{\xx},\cc)$.
The generator target is still \eqref{eq:three-body-target}, with $\widehat{\vv}_{\mathrm{scat}}$ replaced by $\widehat{\vv}_{\mathrm{mix}}$.
For a fixed current generator and square-integrable tracker, at $\lambda=1$ define the conditional source variance
$s_{\mtheta}(\xx_{\mathrm p},\cc)=\operatorname{tr}\operatorname{Var}(\widehat{\vv}_{\mathrm{scat}}\mid\xx_{\mathrm p},\cc)$ and tracker error
$e_{\mphi}(\xx_{\mathrm p},\cc)=\norm{\mv_{\mphi}(\xx_{\mathrm p},\cc)-\vv_{\mtheta}(\xx_{\mathrm p},\cc)}_2^2$.

\Needspace{0.18\textheight}
\begin{corollary}[Finite-tracker error at the energy-distance endpoint]
    \label{cor:finite-tracker-error}
    At $\lambda=1$, the mixed scattering vector satisfies the exact decomposition
    \setlength{\belowdisplayskip}{-4pt}
    \begin{equation}
        \Eb{
        \norm{
        \widehat{\vv}_{\mathrm{mix}}
        -
        \vv_{\mtheta}(\xx_{\mathrm p},\cc)
        }_2^2
        \mid\xx_{\mathrm p},\cc
        }
        =
        (1-\rho)^2s_{\mtheta}(\xx_{\mathrm p},\cc)
        +
        \rho^2e_{\mphi}(\xx_{\mathrm p},\cc).
        \label{eq:finite-tracker-error-main}
    \end{equation}
\end{corollary}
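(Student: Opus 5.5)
At $\lambda=1$ the query point is deterministic, $\tilde{\xx}=\xx_{\mathrm p}$, because $\alpha$ is a point mass at $0$. We also have $\widehat{\vv}_{\lambda}=\widehat{\vv}_{\mathrm{scat}}$. The plan is a bias--variance style split of the mixed error, conditioned on $(\xx_{\mathrm p},\cc)$. Writing $\vv=\vv_{\mtheta}(\xx_{\mathrm p},\cc)$ and $\mv=\mv_{\mphi}(\xx_{\mathrm p},\cc)$, we decompose
\begin{equation*}
    \widehat{\vv}_{\mathrm{mix}}-\vv
    =
    (1-\rho)\bigl(\widehat{\vv}_{\mathrm{scat}}-\vv\bigr)
    +
    \rho\bigl(\mv-\vv\bigr),
\end{equation*}
using $(1-\rho)+\rho=1$. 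We then expand the squared norm into three terms: $(1-\rho)^2\norm{\widehat{\vv}_{\mathrm{scat}}-\vv}_2^2$, $\rho^2\norm{\mv-\vv}_2^2$, and the cross term $2\rho(1-\rho)\langle\widehat{\vv}_{\mathrm{scat}}-\vv,\mv-\vv\rangle$.

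Next we take the conditional expectation given $(\xx_{\mathrm p},\cc)$, term by term.

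\begin{itemize}
    \item \textbf{Second term.} The tracker is a fixed (current, frozen) function evaluated at $\xx_{\mathrm p}$, so $\mv-\vv$ is $(\xx_{\mathrm p},\cc)$-measurable. Its conditional expectation is therefore exactly $e_{\mphi}(\xx_{\mathrm p},\cc)$.
    \item \textbf{First term.} By \thmref{thm:three-body-equivalence}, $\Eb{\widehat{\vv}_{\mathrm{scat}}\mid\xx_{\mathrm p},\cc}=\vv$. So $\Eb{\norm{\widehat{\vv}_{\mathrm{scat}}-\vv}_2^2\mid\xx_{\mathrm p},\cc}$ is the trace of the conditional covariance, which is $s_{\mtheta}(\xx_{\mathrm p},\cc)$ by definition.
    \item \textbf{Cross term.} We pull the measurable factor $\mv-\vv$ out of the conditional expectation, leaving $\langle\Eb{\widehat{\vv}_{\mathrm{scat}}-\vv\mid\xx_{\mathrm p},\cc},\mv-\vv\rangle=0$, again by the unbiasedness in \thmref{thm:three-body-equivalence}.
\end{itemize}

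The only point needing care is the integrability and measurability bookkeeping that justifies pulling out the factor and using the tower property. Here $\norm{\widehat{\vv}_{\mathrm{scat}}}_2\le 2$ almost surely, and the tracker is assumed square-integrable. By Cauchy--Schwarz the cross product is then integrable, so conditional expectations are well defined and linearity applies.

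We must also make sure the tracker parameters $\mphi$ are treated as fixed, i.e.\ not dependent on the fresh sources $\xx_{\mathrm r},\xx_{\mathrm s}$ drawn in this event. This is exactly the "fixed current generator and square-integrable tracker" scope of the statement. Given that, the $\sigma$-algebra generated by $(\xx_{\mathrm p},\cc)$ makes $\mv$ measurable. Combining the three pieces gives \eqref{eq:finite-tracker-error-main}.
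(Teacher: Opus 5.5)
Your proof is correct and is the paper's argument: the paper proves the general query-conditioned identity in \thmref{thm:finite-tracker-decomposition} with the same convex split $(1-\rho)(Y-m_{\lambda})+\rho(h_{\mphi}-m_{\lambda})$, removes the cross term by conditional orthogonality, and then specializes to $\lambda=1$, where the query is the projectile and $m_1=\vv_{\mtheta}$ by \thmref{thm:three-body-equivalence}. Treating the tracker as fixed and independent of the fresh sources corresponds to the paper conditioning on the history $\mathcal H$ before the triplet is drawn.
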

Pointwise, the MSE-optimal mixture is $\rho^\star=s_{\mtheta}/(s_{\mtheta}+e_{\mphi})$ when the denominator is positive; at $\lambda=1$, full oracle tracking is the Rao--Blackwellization of the instant field estimator, eliminating source variance. Learned full tracking improves on the instant target exactly when tracker error is below that variance.
\thmref{thm:finite-tracker-decomposition} extends the identity to corridor-conditioned targets, and \corref{cor:parameter-tracker-decomposition} gives its generator-update-space counterpart.
At $\lambda=1$, tracker excess risk directly controls population energy dissipation and, under a slope condition, convergence; after Jacobian pullback, finite errors control stationarity.
For $\lambda\neq1$, the instant projectile-conditioned expectation follows \eqref{eq:lambda-weighted-functional}, whereas tracker regression targets a generally distinct query-conditioned expectation; even an oracle tracker need not preserve the $\cF_{\lambda}$ generator gradient after pullback, and finite tracker error adds approximation error.

\paragraph{Full tracked algorithm.}
In \algref{alg:tracked-three-body-training}, the three \textcolor{algorithmgray}{gray steps} after \textsc{repeat} are inherited from \algref{alg:three-body-training}.
For $\rho>0$, each sampled event supplies a detached mixed target for the generator and an instantaneous regression target for the tracker.
At $\rho=0$, the tracker query and update may be omitted; $\lambda=1$ then makes the generator update identical to \algref{alg:three-body-training}.
All reported experiments use this generator--tracker update structure with the representation interfaces in \secref{sec:practical-deployment}; \algref{alg:path-conditioned-gan-like-specialization} below is an expository displacement analogue only.

\begin{algorithm}[t]
    \caption{Full \method: Tracked Three-Body Scattering Regression}
    \label{alg:tracked-three-body-training}
    \begin{algorithmic}[1]
        \STATE Initialize generator parameters $\mtheta$ and tracker parameters $\mphi$.
        \STATE Choose tracked-supervision weight $\rho\in[0,1]$ and intra-source weight $\lambda\in[0,1]$.
        \REPEAT
        \STATE \textcolor{algorithmgray}{Sample real source-condition pairs $(\xx_{\mathrm{r}},\cc)$ from dataset.}
        \STATE \textcolor{algorithmgray}{Sample noise $\zz,\zz_{\mathrm{s}}\sim p(\zz)$, and generate $\xx_{\mathrm{p}}=\mg_{\mtheta}(\zz,\cc)$ and $\xx_{\mathrm{s}}=\sg[\mg_{\mtheta}(\zz_{\mathrm{s}},\cc)]$.}
        \STATE \textcolor{algorithmgray}{Compute bearings $\bb_{\mathrm{r}}=(\xx_{\mathrm{r}}-\xx_{\mathrm{p}})/\norm{\xx_{\mathrm{r}}-\xx_{\mathrm{p}}}$ and $\bb_{\mathrm{s}}=(\xx_{\mathrm{s}}-\xx_{\mathrm{p}})/\norm{\xx_{\mathrm{s}}-\xx_{\mathrm{p}}}$.}
        \STATE Form $\lambda$-weighted scattering vector $\widehat{\vv}_{\lambda}=\bb_{\mathrm{r}}-\lambda\bb_{\mathrm{s}}$.
        \STATE Sample $\alpha\sim\mathcal{U}(0,1-\lambda)$ and set $\tilde{\xx}=\alpha\xx_{\mathrm{r}}+(1-\alpha)\sg[\xx_{\mathrm{p}}]$.
        \STATE Query tracked vector $\vv_{\mathrm{trk}}=\mv_{\mphi}(\tilde{\xx},\cc)$.
        \STATE Form mixed scattering vector $\widehat{\vv}_{\mathrm{mix}}=(1-\rho)\widehat{\vv}_{\lambda}+\rho\vv_{\mathrm{trk}}$.
        \STATE Update $\mtheta$ by descending $\norm{\xx_{\mathrm{p}}-\sg[\xx_{\mathrm{p}}+\widehat{\vv}_{\mathrm{mix}}]}_2^2$.
        \STATE Update $\mphi$ by descending $\norm{\vv_{\mathrm{trk}}-\sg[\widehat{\vv}_{\lambda}]}_2^2$.
        \UNTIL{convergence}
    \end{algorithmic}
\end{algorithm}
\FloatBarrier

\begin{algorithm}[t]
    \caption{GAN-like Displacement Analogue of \method{} at $\rho=1,\lambda=0$}
    \label{alg:path-conditioned-gan-like-specialization}
    \begin{algorithmic}[1]
        \STATE Initialize generator parameters $\mtheta$ and tracker parameters $\mphi$.
        \REPEAT
        \STATE Sample real source-condition pairs $(\xx_{\mathrm{r}},\cc)$ from dataset.
        \STATE Sample noise $\zz\sim p(\zz)$ and generate projectile $\xx_{\mathrm{p}}=\mg_{\mtheta}(\zz,\cc)$.
        \STATE Sample $\alpha\sim\mathcal U(0,1)$ and set
        $
            \tilde{\xx}
            =
            \alpha\xx_{\mathrm{r}}
            +
            (1-\alpha)\sg[\xx_{\mathrm{p}}].
        $
        \STATE Query tracker field
        $
            \vv_{\mathrm{trk}}
            =
            \mv_{\mphi}(\tilde{\xx},\cc).
        $
        \STATE Update $\mtheta$ by descending
        $
            \norm{
                \xx_{\mathrm{p}}
                -
                \sg[
                    \xx_{\mathrm{p}}
                    +
                    \vv_{\mathrm{trk}}
                ]
            }_2^2.
        $
        \STATE Update $\mphi$ by descending
        $
            \norm{
                \vv_{\mathrm{trk}}
                -
                \sg
                [
                    \xx_{\mathrm{r}}-\sg[\xx_{\mathrm{p}}]
                ]
            }_2^2.
        $
        \UNTIL{convergence}
    \end{algorithmic}
\end{algorithm}

\subsection{Displacement Analogue of the GAN-Like State}
\label{sec:path-conditioned-fallback}

\paragraph{Fake-to-real state at $\rho=1,\lambda=0$.}
At $\rho=1,\lambda=0$, \algref{alg:tracked-three-body-training} uses the unit-bearing target $\bb_{\mathrm r}$ and queries the learned tracker along the fake-to-real corridor.
\algref{alg:path-conditioned-gan-like-specialization} preserves this corridor but replaces the unit bearing with the displacement $\xx_{\mathrm r}-\xx_{\mathrm p}$, so it is not a parameter-only specialization.
The resulting tracker learns both orientation and magnitude without a generated-source branch, while its detached output supervises the generator; feature normalization and the generator learning rate set the effective update scale.

\paragraph{Line-integral critic interpretation.}
The GAN-like connection becomes precise when the tracker is conservative,
$\mv_{\mphi}(\xx,\cc)=\nabla_{\xx}u_{\mphi}(\xx,\cc)$.
Let $\Delta=\xx_{\mathrm r}-\xx_{\mathrm p}$ and
$\xx_{\alpha}=(1-\alpha)\xx_{\mathrm p}+\alpha\xx_{\mathrm r}$.
The line-integral identity along $\xx_{\alpha}$ implies that, for a fixed generator,
\begin{equation}
    \cL_{\mathrm{trk}}^{\mathrm{disp}}(\mphi)
    =
    \Eb{\norm{\nabla_{\xx}u_{\mphi}(\xx_{\alpha},\cc)}_2^2}
    -2\left(
    \Eb{u_{\mphi}(\xx_{\mathrm r},\cc)}
    -
    \Eb{u_{\mphi}(\xx_{\mathrm p},\cc)}
    \right)
    + C,
    \label{eq:gan-like-critic-objective}
\end{equation}
where $C=\Eb{\norm{\Delta}_2^2}$ is independent of $\mphi$ during the tracker update.
Thus, for this conservative displacement analogue, tracker regression maximizes the real--generated potential gap minus one half of the corridor-averaged squared gradient norm; \appref{app:gan-like-line-integral} derives the identity and its conservative-field projection.
This is still not identical to conventional GAN minimax training: the generator queries the field at a random corridor point rather than necessarily at $\xx_{\mathrm p}$, and a general vector-valued tracker has no scalar-potential line-integral identity.

\subsection{Practical Deployment Guidance}
\label{sec:practical-deployment}

\paragraph{Exact endpoint and deployed surrogates.}
At $\lambda=1$, exact and smoothed source means are the negative Wasserstein-gradient fields of the proper objectives $\cF$ and $\cF_\varepsilon$ (\appref{app:denominator-smoothing}).
Tracker identities apply to both; tracked convergence additionally requires the corresponding chain rule, slope, and relative-error conditions.
For $\lambda<1$, \propref{prop:near-endpoint-reweighting} gives objective consistency as $\lambda\uparrow1$, while \propref{prop:practical-mixed-field-stability} combines reweighting, corridor, and tracker errors; under a linear endpoint slope, these errors yield an explicit convergence neighborhood.

\paragraph{Representation-space scattering.}
We use raw-pixel scattering for low-dimensional data and small images and scattering through frozen encoders for natural images.
For a fixed representation map $\mf$, \algref{alg:tracked-three-body-training} is applied to $\mf(\xx_{\mathrm p})$, $\mf(\xx_{\mathrm r})$, and $\mf(\xx_{\mathrm s})$; projectile gradients backpropagate through $\mf$ into the generator, while source features remain detached.
Pixel outputs enter $\mf$ directly, whereas latent outputs are first decoded, so this projects the loss computation rather than changing the generator output domain.
At $\lambda=1$, the exact and denominator-smoothed objectives are proper only for the projected distributions; at $\lambda<1$, the weighted functional is a projected surrogate.
Neither case guarantees full image matching without sufficient representation informativeness.

\paragraph{Tracker parameterization.}
The conservative tracker in \secref{sec:tracked-scattering} can be implemented as the input gradient of a scalar potential network, but higher-order autograd is expensive at image scale.
A lightweight vector-valued network with the representation dimension avoids higher-order autograd but drops exact conservativity.
See \appref{app:exp} for architecture details.

\paragraph{Multiple representation fields.}
Each fixed representation map defines its own feature-space field, query, and regression loss; vectors from incompatible spaces are never added.
At $\lambda=1$, exact or smoothed, the expected current-parameter gradient of the summed instant or oracle-tracked losses equals that of the corresponding sum of proper projected discrepancies; finite-tracker errors combine only after Jacobian pullback.
Zero summed discrepancy matches every projected marginal, not their joint or the full image law unless the representations are jointly measure-determining.

\begin{figure*}[!t]
    \centering
    \input{resources/figures/lambda_rho_comparison.tex}
    \caption{
        \textbf{One-step generated samples across the $(\rho,\lambda)$ design map.}
        See \tabref{tab:imagenet-training-config} for optimization settings.
    }
    \label{fig:lambda-rho-plane}
\end{figure*}

\section{The Generative Design Map}
\label{sec:lambda-rho-plane}

The $(\rho,\lambda)$ map organizes four reference configurations and interior settings within \algref{alg:tracked-three-body-training}.

\subsection{Reference Configurations and Matched Generator Updates}
\label{sec:corner-cases}

Here $\rho$ mixes the instantaneous stochastic vector and learned tracker field, while $\lambda$ jointly sets the intra-source coefficient in $\widehat{\vv}_{\lambda}=\bb_{\mathrm{r}}-\lambda\bb_{\mathrm{s}}$ and the tracker-query range $\alpha\in[0,1-\lambda]$.
Because $\lambda$ couples two changes, this is neither a factorized ablation nor a claim of exact equivalence to neighboring method families.

\paragraph{Instant scattering / Drift-like.}
At $\rho=0,\lambda=1$, the generator uses the instant energy-distance scattering update derived above; this is closest to Drift-like particle dynamics, but with a constant-size three-body estimator rather than a batch-level pairwise field.

\paragraph{Tracked scattering / energy-distance endpoint.}
At $\rho=1,\lambda=1$, the tracker is trained to estimate the same inter-minus-intra field; \corref{cor:finite-tracker-error} gives its exact error decomposition.

\paragraph{Inter-only scattering and the denoising connection.}
At $\rho=0,\lambda=0$, the one-step update reduces to instantaneous attraction toward an independently sampled real source and is not itself a diffusion objective.
An exact denoising-style reduction additionally requires paired $(\xx_t,\xx_0)$ across noise levels, conditioning on $t$, and a clean-sample, noise, or velocity target rather than a unit bearing; pure-noise input alone is insufficient.

\paragraph{Fake-to-real scattering and the GAN connection.}
At $\rho=1,\lambda=0$, \algref{alg:tracked-three-body-training} follows a learned field regressed from unit-bearing targets and queried along the fake-to-real corridor.
The displacement analogue in \algref{alg:path-conditioned-gan-like-specialization} exposes the potential-tracker interpretation in \eqref{eq:gan-like-critic-objective}, which does not transfer unchanged to the unit-bearing target.

\paragraph{Empirical comparison.}
At matched generator updates, $\rho=1,\lambda=1$ gives the lowest FID in \figref{fig:lambda-rho-plane}, consistent with retaining attraction and self-repulsion while replacing source-sampling noise with a learned field.
The $\rho=0,\lambda=0$ corner has the worst FID but highest IS, a pattern compatible with the contraction pressure induced by $\cF_{\lambda}$ when $\lambda<1$.
The comparison is not factorized: $\rho=0$ leaves the $\lambda$-specific field instantaneous, whereas changing $\lambda$ also changes the tracker corridor.
Interior $\rho$ trades instant variance against tracker error, while $\lambda$ near one retains self-repulsion; this motivates the near-endpoint benchmark settings in \secref{sec:image-exp}, without making the $\lambda<1$ objective proper.

\subsection{Connections to Related Distribution-Matching Objectives}
\label{sec:representation-objective-connections}

The following relations require restrictions or input/target modifications beyond choosing $(\rho,\lambda)$ and are interpretive rather than additional experimental algorithms.

\paragraph{A condition-only tracker and FD-loss.}
In the displacement analogue of \algref{alg:path-conditioned-gan-like-specialization}, let $\mu_{\mathrm r}(\cc)$ and $\mu_{\mtheta}(\cc)$ be the conditional real and generated feature means.
Restricting the representation-space tracker to $\cc$ alone gives $\mv^\star(\cc)=\mu_{\mathrm r}(\cc)-\mu_{\mtheta}(\cc)$ and generator descent on $\frac12\norm{\mu_{\mtheta}(\cc)-\mu_{\mathrm r}(\cc)}_2^2$.
This is an online, first-moment-only analogue of representation FD-loss~\citep{yang2026representation}, whose full Fr\'echet objective also matches feature covariances.

\paragraph{A displacement tracker and DMD.}
The displacement analogue in \algref{alg:path-conditioned-gan-like-specialization} resembles Distribution Matching Distillation (DMD)~\citep{yin2024one} through an auxiliary model following the generated distribution and supplying a detached generator direction.
DMD approximates a reverse-KL gradient from a fixed diffusion-teacher score and an online generated-distribution score at noise-perturbed generated samples; the analogue instead regresses fake-to-real displacements along straight corridors between independent endpoints, without teacher queries, score estimation, or a noise-time schedule.

\paragraph{Noisy-input inter-only scattering and perceptual supervision.}
If the inter-only corner receives a time-indexed noisy input $\xx_t$, its paired clean sample $\xx_0$, and condition $t$, its template approaches Perceptual Flow Matching (PFM)~\citep{zhao2026perceptual} and PixelGen~\citep{ma2026pixelgen}: each compares a clean prediction with $\xx_0$ through frozen perceptual features.
The objectives remain distinct because inter-only \method{} uses a normalized feature bearing and frozen-target regression, whereas PFM and PixelGen retain time-indexed flow/diffusion training with their perceptual losses.

\begin{table}[!t]
    \centering
    \caption{
        \textbf{Representation-field ablation on ImageNet-256.}
        Runtime is per step; see \tabref{tab:imagenet-training-config} for training settings.
    }
    \scriptsize
    \setlength{\tabcolsep}{4pt}
    \renewcommand{\arraystretch}{1.08}
    \begin{tabular*}{\textwidth}{@{\extracolsep{\fill}}lccccccc@{}}
        \toprule
        \textbf{Metric}
        & \textbf{ResNet-18}
        & \textbf{SigLIP2-B}
        & \textbf{MAE-B}
        & \shortstack{\textbf{ResNet-18}\\\textbf{+ SigLIP2-B}}
        & \shortstack{\textbf{SigLIP2-B}\\\textbf{+ MAE-B}}
        & \shortstack{\textbf{ResNet-18}\\\textbf{+ MAE-B}}
        & \shortstack{\textbf{ResNet-18}\\\textbf{+ MAE-B}\\\textbf{+ SigLIP2-B}}
        \\
        \midrule
        FID $\downarrow$ & 13.47 & 21.87 & 16.18 & 9.12 & 9.36 & 11.45 & \textbf{8.29} \\
        FDr$^6$ $\downarrow$   & 36.49 & 39.01 & 27.74 & 27.49 & \textbf{21.80} & 27.50 & 22.25 \\
        IS $\uparrow$          & 104.58 & 131.49 & 93.25 & 149.91 & \textbf{161.36} & 113.14 & 152.23 \\
        Avg. sec/step $\downarrow$ & \textbf{0.1389} & 0.1736 & 0.1583 & 0.1982 & 0.2184 & 0.1832 & 0.2411 \\
        \bottomrule
    \end{tabular*}
    \label{tab:representation-ablation}
\end{table}

\section{Experiments}
\label{sec:exp}

We evaluate representation fields and one-step ImageNet benchmarks; \figref{fig:lambda-rho-plane} compares design-map corners at matched generator updates, and \appref{app:additional-image-results} reports conversion and refinement studies.

\subsection{Experimental Setup}
\label{sec:expset}

\paragraph{Datasets.}
Our quantitative evaluation uses ImageNet-1K~\citep{deng2009imagenet} at $256\times256$ (ImageNet-256); \appref{app:additional-image-results} adds a $512\times512$ conversion study (ImageNet-512).
MNIST, Fashion-MNIST, and CIFAR-10 are used only for qualitative observations in \figref{fig:scale-overview}.

\paragraph{Network architectures.}
On ImageNet, we evaluate JiT~\citep{li2025back}, DiT~\citep{peebles2023scalable}, and PixelDiT~\citep{yu2025pixeldit} backbones.
JiT and PixelDiT operate in pixel space, while DiT operates on pretrained SD-VAE~\citep{rombach2022high} latents, following standard high-resolution latent-generation practice~\citep{dhariwal2021diffusion,karras2024analyzing,song2023consistency}.
For these qualitative observations, we use compact U-Net generators on all three small-scale datasets.
Tracker architectures and representation interfaces are detailed in \appref{app:exp}.

\paragraph{Implementation details.}
All models are implemented in PyTorch~\citep{paszke2019pytorch} and optimized with AdamW~\citep{loshchilov2017decoupled}.
To reduce training compute, all ImageNet-1K \method runs except the FD-loss refinement in \figref{fig:fdloss-refinement} initialize the generator from a pretrained multi-step model trained with diffusion or flow matching; see the \emph{Generator initialization} paragraph in \appref{app:exp}.
The instant baseline uses $\rho=0,\lambda=1$, with the unused tracker update omitted.
All reported \method{} ImageNet metrics use $50{,}000$ generated samples and training-set reference statistics.
We report FID~\citep{heusel2017gans}, FDr$^6$~\citep{yang2026representation} (the mean generated-to-training FD normalized by validation-to-training FD over six feature spaces), and Inception Score (IS).
NFE (number of function evaluations) counts the generator forward passes required to produce one sample.
The corner comparison matches the generator-update budget, but tracked variants additionally optimize the tracker, so total training compute is not matched.

\subsection{Representation-Field Ablation}
\label{sec:ablations}

Following FD-loss~\citep{yang2026representation}, we compare scattering vectors computed in single and multiple frozen feature spaces.
The encoders include a supervised ResNet-18~\citep{he2016deep}, SigLIP2-B~\citep{tschannen2025siglip2}, and MAE-B~\citep{he2022masked}.
For each encoder, we EMA-whiten and channelwise $\ell_2$-normalize its final-layer map, then use a separate tracker and equally weighted loss (\appref{app:exp}).
\tabref{tab:representation-ablation} is the only ImageNet-1K experiment that varies this set; every other ImageNet-1K experiment sums all three losses.

Within this matched-budget ablation, the three-encoder field gives the best FID, whereas SigLIP2-B + MAE-B gives the best FDr$^6$ and IS at lower per-step cost.
All combined fields containing SigLIP2-B have better reported values than every single encoder across the three quality metrics, while adding the third encoder trades additional computation for the lowest FID.

\begin{table}[!t]
    \centering
    \caption{
        \textbf{ImageNet benchmark comparison.}
        Space denotes the generator output domain, and Params counts inference-time generator parameters.
        \method{} rows show their $(\rho,\lambda)$ settings; $\dagger$ denotes longer training.
    }
    \scriptsize
    \setlength{\tabcolsep}{4pt}
    \renewcommand{\arraystretch}{1.08}
    \begin{tabular*}{\textwidth}{@{\extracolsep{\fill}}p{0.25\textwidth}p{0.08\textwidth}p{0.19\textwidth}p{0.06\textwidth}p{0.08\textwidth}ccc@{}}
        \toprule
        \textbf{Method}
        & \textbf{Space}
        & \textbf{Backbone}
        & \textbf{NFE}
        & \textbf{Params}
        & \textbf{FID $\downarrow$}
        & \textbf{FDr$^6$ $\downarrow$}
        & \textbf{IS $\uparrow$}
        \\
        \midrule
        \multicolumn{8}{c}{\textbf{Reference statistics}} \\
        \midrule
        Validation set images
        & pixel
        & -
        & -
        & -
        & 1.68
        & 1.00
        & 232.2
        \\
        \midrule
        \multicolumn{8}{c}{\textbf{Iterative diffusion and flow baselines}} \\
        \midrule
        ADM~\citep{dhariwal2021diffusion}
        & pixel
        & U-Net cascade
        & $250{+}250$
        & 608M
        & 3.94
        & --
        & 215.8
        \\
        VDM++~\citep{kingma2023understanding}
        & pixel
        & U-ViT-L/2
        & $512{\times}2$
        & 2B
        & 2.12
        & --
        & 267.7
        \\
        DiT~\citep{peebles2023scalable}
        & latent
        & DiT-XL/2
        & $250{\times}2$
        & 675M
        & 2.27
        & --
        & 278.2
        \\
        JiT~\citep{li2025back}
        & pixel
        & JiT-H/16
        & 200
        & 953M
        & 1.97
        & 7.66
        & 296.0
        \\
        PixelDiT~\citep{yu2025pixeldit}
        & pixel
        & PixelDiT-XL/16
        & $100{\times}2$
        & 797M
        & 1.61
        & --
        & 292.7
        \\
        \midrule
        \multicolumn{8}{c}{\textbf{Autoregressive visual generators}} \\
        \midrule
        VAR~\citep{tian2024visual}
        & discrete
        & VAR-d30
        & $10{\times}2$
        & 2B
        & 1.97
        & 6.70
        & 304.6
        \\
        BAR~\citep{yu2026bar}
        & discrete
        & BAR-L
        & 2048
        & 1.1B
        & 1.01
        & 3.57
        & 281.9
        \\
        MAR~\citep{li2024autoregressive}
        & latent
        & MAR-H
        & $51{,}200$
        & 942M
        & 1.56
        & 5.61
        & 299.5
        \\
        \midrule
        \multicolumn{8}{c}{\textbf{GAN-based one-step generators}} \\
        \midrule
        BigGAN~\citep{brock2018large}
        & pixel
        & BigGAN-deep
        & 1
        & --
        & 6.95
        & --
        & 171.4
        \\
        StyleGAN~\citep{sauer2022stylegan}
        & pixel
        & StyleGAN-XL
        & 1
        & --
        & 2.30
        & --
        & 265.1
        \\
        \midrule
        \multicolumn{8}{c}{\textbf{One-step and few-step diffusion acceleration}} \\
        \midrule
        iCT~\citep{song2023improved}
        & latent
        & DiT-XL/2
        & 1
        & 675M
        & 34.24
        & --
        & --
        \\
        Shortcut~\citep{frans2024one}
        & latent
        & DiT-XL/2
        & 1
        & 676M
        & 10.60
        & --
        & --
        \\
        MeanFlow~\citep{geng2025mean}
        & latent
        & DiT-XL/2
        & 1
        & 676M
        & 3.43
        & --
        & --
        \\
        pMF~\citep{lu2026pixelmeanflow}
        & pixel
        & pMF-H/16
        & 1
        & 935M
        & 2.29
        & 6.87
        & 267.2
        \\
        \midrule
        \multicolumn{8}{c}{\textbf{Direct distribution-dynamics and representation-distance methods}} \\
        \midrule
        Drift~\citep{deng2026drifting}
        & pixel
        & DiT-L/16
        & 1
        & 465M
        & 1.43
        & 10.51
        & 305.8
        \\
        FD-loss~\citep{yang2026representation}
        & pixel
        & JiT-H/16
        & 1
        & 953M
        & 0.75
        & 2.65
        & 313.0
        \\
        \midrule
        \multicolumn{8}{c}{\textbf{\method{} variants}} \\
        \midrule
        \rowcolor{gray!10}
        \method{} $(\rho{=}1.0,\lambda{=}1.0)$
        & pixel
        & JiT-B/16
        & 1
        & 131M
        & 3.09
        & 10.73
        & 208.6
        \\
        \rowcolor{gray!10}
        \method{} $(\rho{=}0.9,\lambda{=}1.0)$
        & pixel
        & JiT-B/16
        & 1
        & 131M
        & 3.35
        & 11.60
        & 205.3
        \\
        \rowcolor{gray!10}
        \method{} $(\rho{=}0.9,\lambda{=}0.9)$
        & pixel
        & JiT-B/16
        & 1
        & 131M
        & 2.92
        & 13.21
        & 228.7
        \\
        \rowcolor{gray!10}
        \method{}$^\dagger$ $(\rho{=}0.9,\lambda{=}0.9)$
        & pixel
        & JiT-B/16
        & 1
        & 131M
        & 2.69
        & 13.21
        & 247.4
        \\
        \rowcolor{gray!10}
        \method{} $(\rho{=}0.9,\lambda{=}1.0)$
        & pixel
        & PixelDiT-XL/16
        & 1
        & 797M
        & 2.23
        & 7.15
        & 210.0
        \\
        \rowcolor{gray!10}
        \method{} $(\rho{=}0.9,\lambda{=}0.9)$
        & pixel
        & PixelDiT-XL/16
        & 1
        & 797M
        & 2.23
        & 7.59
        & 230.9
        \\
        \rowcolor{gray!10}
        \method{} $(\rho{=}0.9,\lambda{=}0.9)$
        & latent
        & DiT-XL/2
        & 1
        & 675M
        & 1.63
        & 6.76
        & 243.0
        \\
        \bottomrule
    \end{tabular*}
    \label{tab:imagenet-benchmark}
\end{table}

\subsection{Benchmark Results}
\label{sec:image-exp}

\tabref{tab:imagenet-benchmark} shows that \method{} is competitive at NFE${}=1$ in both output spaces, reaching FID${}=2.23$ with pixel-space PixelDiT-XL and FID${}=1.63$ with latent-space DiT-XL/2; the JiT-B results further demonstrate compatibility with a substantially smaller pixel backbone.
At fixed $\rho=0.9$, reducing $\lambda$ from $1.0$ to $0.9$ produces an empirical tuning effect reminiscent of classifier-free guidance (CFG)~\citep{ho2022classifier}: on JiT-B, FID decreases from $3.35$ to $2.92$ while IS rises from $205.3$ to $228.7$; on PixelDiT-XL, FID rounds to $2.23$ in both settings while IS rises from $210.0$ to $230.9$.
This is an empirical analogy only: $\lambda$ weights intra-source scattering rather than implementing CFG.
Using the high-resolution curriculum in \appref{app:additional-image-results}, one-step DiT-XL/4 reaches FID${}=1.92$ and PixelDiT-XL reaches FID${}=3.84$ on ImageNet-512.
FD-loss~\citep{yang2026representation} has the strongest table scores; in a separate JiT-B case study, the random-sample grid after \method{} training in \figref{fig:fdloss-refinement} displays fewer block-like and grid-like artifacts despite worse metrics, illustrating a possible metric--artifact mismatch.
ImageNet optimizer, batch-size, and training-step settings appear in \appref{app:exp}.
The table is not exhaustive: W-Flow and Representation Distribution Matching are discussed in \appref{app:positioning} rather than included as numerical baselines because their supervision is built from minibatch-level distribution estimates, whereas this study isolates constant-size sample-level targets; we claim no numerical superiority over them~\citep{han2026wflow,feng2026representation}.

\section{Discussion}
\label{sec:discussion}

At its energy-distance endpoint, \method turns a proper distributional energy into sample-level motion.
The framework is close in spirit to particle systems and kernel discrepancies, but differs from classical sample optimization because the moving particles are outputs of a shared neural generator.
The main practical finding is that this constant-size interaction remains effective for high-dimensional image generation.
The resulting family also provides a common language for relating explicit distributional motion to Drift-like dynamics, GAN-like learned fields, and representation-space distribution matching.

\section{Limitations}
\label{sec:limitations}

First, population-flow convergence assumes the exact $\lambda=1$ field, chain-rule regularity, a slope inequality, and relative tracking; finite-generator convergence instead assumes a smooth exact objective, realizability, gradient dominance, relative bias, bounded variance, and diminishing steps.
Neither establishes convergence for general neural generators trained by SGD.
Second, vector estimates may be noisy near coincident particles; at $\lambda=1$, fixed denominator smoothing stabilizes them and retains a proper smoothed objective, but changes its geometry.
Third, the efficiency and quality of \method from random initialization at ImageNet scale remain untested.
Fourth, even at $\lambda=1$, representation-space properness concerns only projected distributions, so full image matching depends on the information retained by the encoder.
Fifth, the benchmark establishes sample quality at NFE${}=1$, but not compute-matched training efficiency against mature diffusion and autoregressive systems.
Finally, in strongly conditional settings where each condition is paired with only one real sample, the empirical conditional energy objective may encourage matching that single paired sample unless diversity is supplied through data augmentation, shared statistical structure across conditions, or additional regularization.

\section{Conclusion}
\label{sec:conclusion}

We presented \method as a direct particle-interaction paradigm for high-dimensional one-step generation without teacher predictions.
Starting from the energy distance, its core algorithm contrasts data--model attraction with model--model self-interaction in local three-body events, and our experiments take this rule from particle optimization to image generation.
Tracked scattering converts noisy instantaneous observations into an online learned field, while the design map clarifies connections to Drift-like dynamics, GAN-like critic fields, and representation-space objectives.

\bibliography{resources/reference}
\bibliographystyle{configuration/iclr2026_conference}

\appendix
\onecolumn
{
    \hypersetup{linkcolor=black}
    \parskip=0em
    \renewcommand{\contentsname}{Contents}
    \tableofcontents
    \addtocontents{toc}{\protect\setcounter{tocdepth}{3}}
}

\newpage

\begin{figure}[!t]
    \centering
    \begin{subfigure}[t]{0.485\textwidth}
        \centering
        \includegraphics[width=\linewidth]{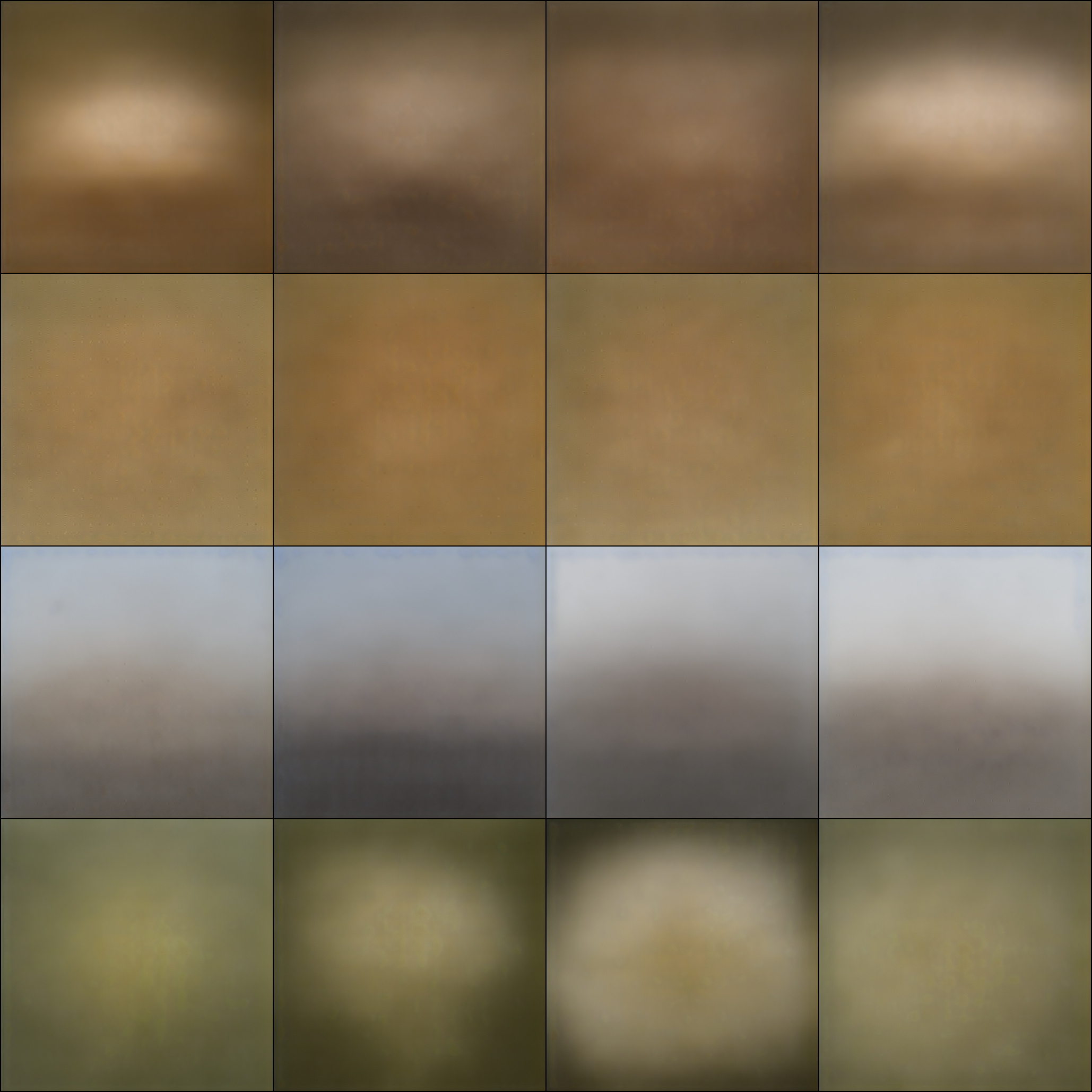}
        \caption{FID${}=398.20$\quad FDr$^6{}=291.50$\quad IS${}=1.17$.}
    \end{subfigure}
    \hfill
    \begin{subfigure}[t]{0.485\textwidth}
        \centering
        \includegraphics[width=\linewidth]{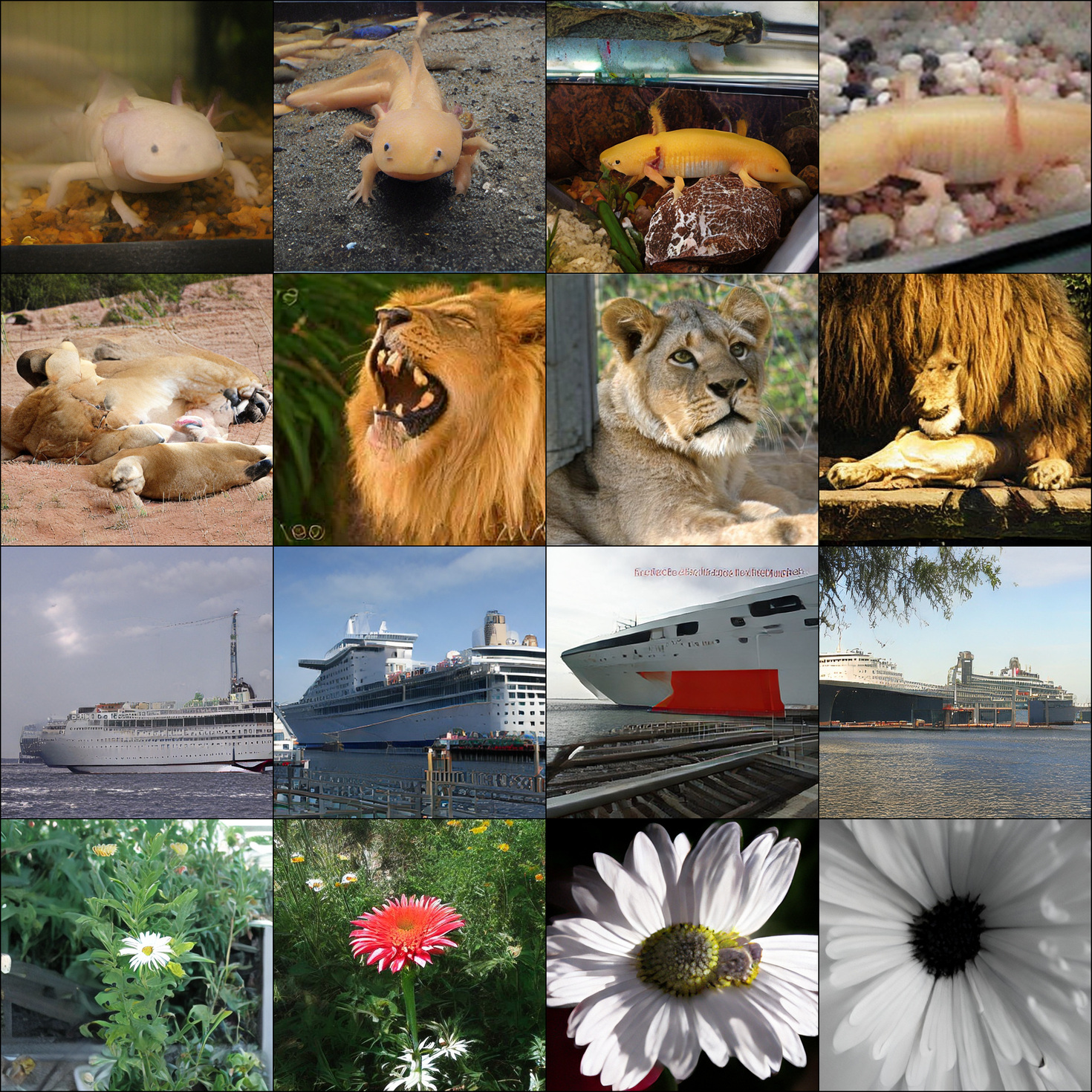}
        \caption{FID${}=1.92$\quad FDr$^6{}=5.37$\quad IS${}=252.1$.}
    \end{subfigure}
    \caption{
        \textbf{Converting a multi-step ImageNet-512 model into a one-step generator.}
        Single-pass samples from a pretrained multi-step DiT-XL/4 before (left) and after \method{} training with $\rho=\lambda=0.9$ (right).
        Both grids are random and uncurated.
    }
    \label{fig:imagenet512-conversion}

    \vspace{0.25em}

    \begin{subfigure}[t]{0.485\textwidth}
        \centering
        \includegraphics[width=\linewidth]{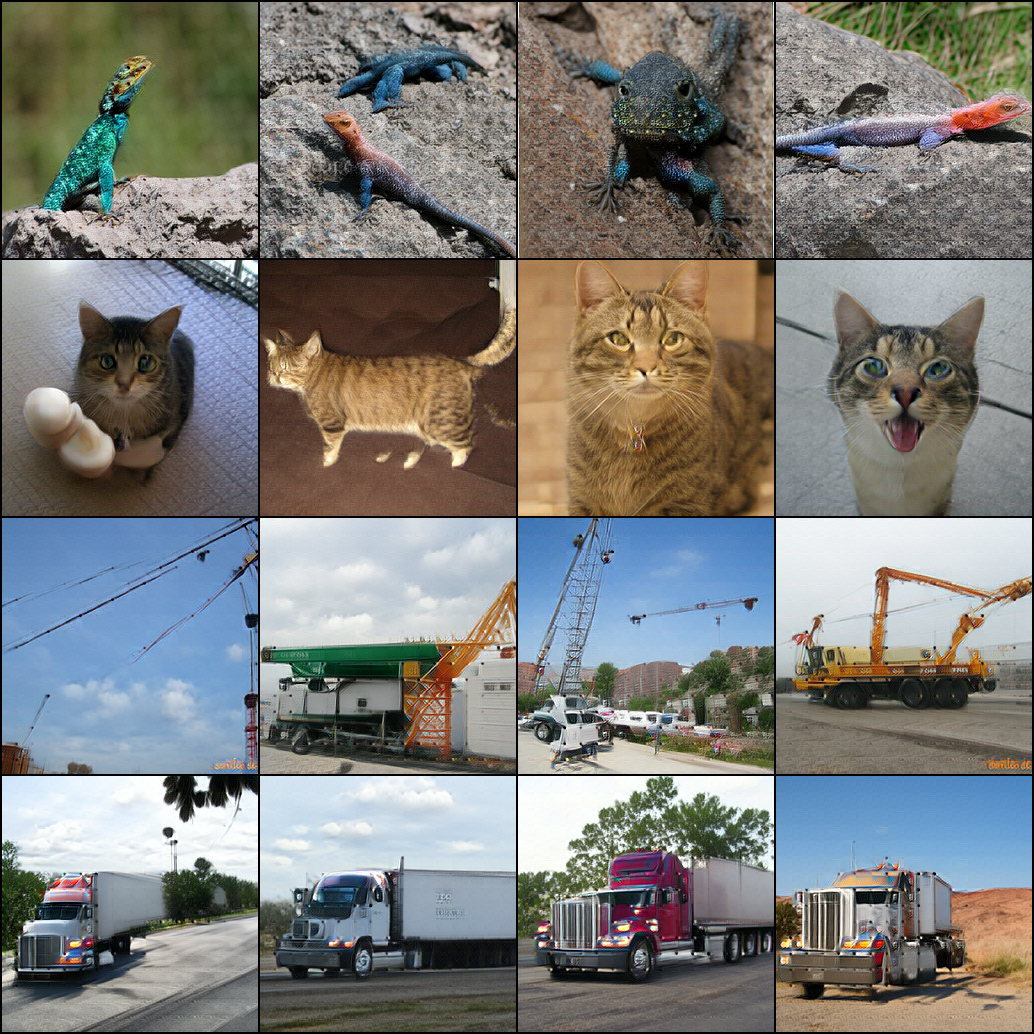}
        \caption{FID${}=0.99$\quad FDr$^6{}=5.64$\quad IS${}=328.0$.}
    \end{subfigure}
    \hfill
    \begin{subfigure}[t]{0.485\textwidth}
        \centering
        \includegraphics[width=\linewidth]{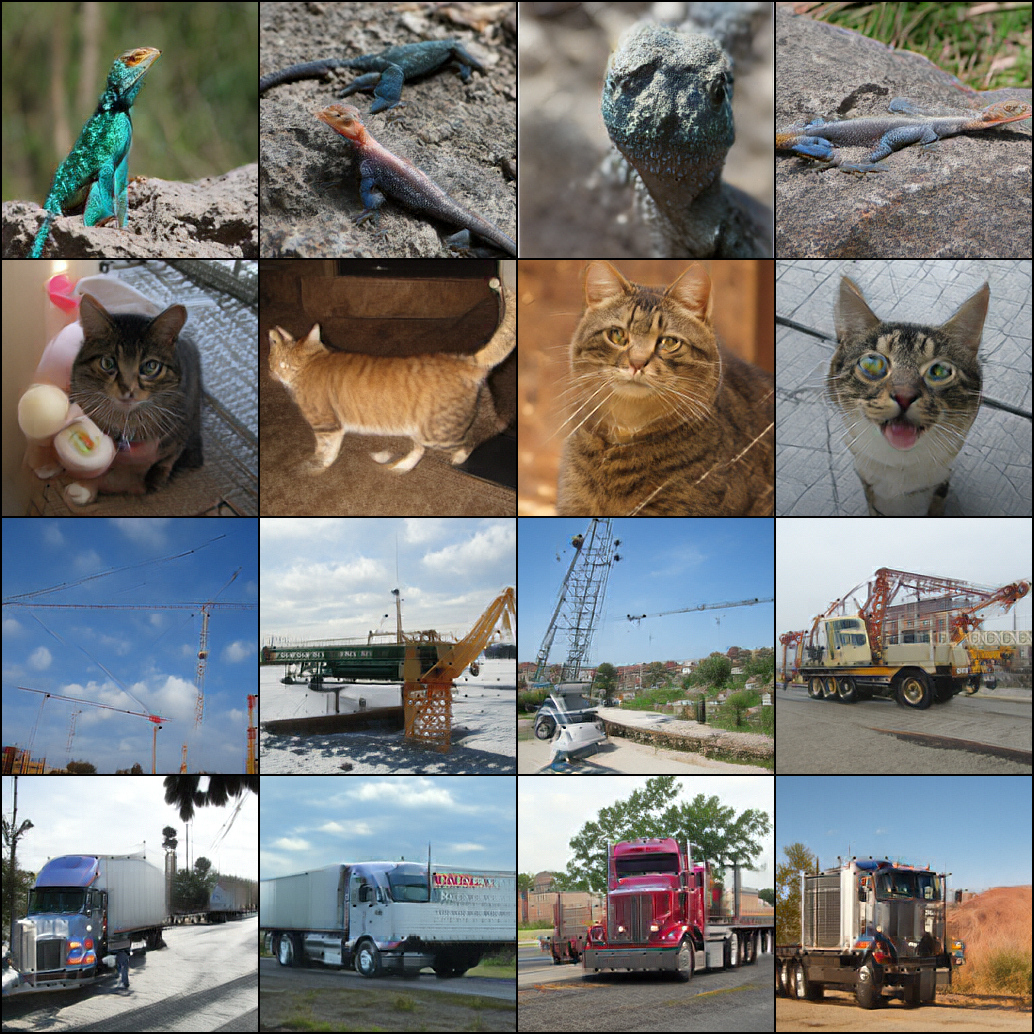}
        \caption{FID${}=1.84$\quad FDr$^6{}=8.34$\quad IS${}=247.3$.}
    \end{subfigure}
    \caption{
        \textbf{FD-loss-trained JiT-B checkpoint refinement on ImageNet-256.}
        Random, uncurated single-pass samples before (left) and after continued \method{} training with $\rho=\lambda=0.9$ (right).
    }
    \label{fig:fdloss-refinement}
\end{figure}

\section{Additional ImageNet Results}
\label{app:additional-image-results}

\tabref{tab:imagenet-training-config} gives optimizer, batch-size, and training-step settings for both studies.

\paragraph{Multi-step-to-one-step conversion at $512\times512$.}
The left panel of \figref{fig:imagenet512-conversion} evaluates a pretrained multi-step DiT-XL/4 at NFE${}=1$, outside its intended sampler; after \method{} training, the same architecture operates as an NFE${}=1$ ImageNet-512 generator.
For the first half of this high-resolution experiment, we use an empirical paired noisy-input curriculum with \algref{alg:tracked-three-body-training}: we sample $t\in\{0.5,1.0\}$ with equal probability, form $\xx_t=t\zz+(1-t)\xx_{\mathrm r}$, set $\xx_{\mathrm p}=\mg_{\mtheta}(\xx_t,t,\cc)$, and additionally condition the tracker on $t$.
Throughout, the generated source $\xx_{\mathrm s}=\sg[\mg_{\mtheta}(\zz_{\mathrm s},1,\cc)]$ uses an independent pure-noise draw $\zz_{\mathrm s}$ and does not depend on the sampled $t$.
When $t=0.5$, however, $\xx_{\mathrm p}$ is constructed from $\xx_{\mathrm r}$ and therefore does not satisfy real-source--projectile independence; moreover, $\xx_{\mathrm p}$ and $\xx_{\mathrm s}$ are not sampled iid from the same current generated law, as assumed by the exact field and convergence analysis.
We use this curriculum solely as an empirical stabilization heuristic; none of our theoretical claims relies on it.
For the second half, we fix $t=1.0$, restoring the pure-noise generator input used at one-step deployment.
Using the same optimization settings, PixelDiT-XL reaches FID${}=3.84$, IS${}=236.44$, and FDr$^6{}=8.17$ on ImageNet-512.

\paragraph{FD-loss checkpoint case study.}
In this separate JiT-B case, the post-training grid displays fewer block-like and grid-like artifacts although all three aggregate metrics worsen (\figref{fig:fdloss-refinement}; \citealp{yang2026representation}).
This illustrates a possible metric--artifact mismatch in the displayed samples, not a general limitation of FD-loss.

\FloatBarrier

\section{Implementation Details}
\label{app:exp}

\begin{table}[!t]
    \centering
    \caption{
        \textbf{ImageNet training configurations: shared optimizer settings and per-experiment batch sizes and steps.}
    }
    \scriptsize
    \setlength{\tabcolsep}{3pt}
    \renewcommand{\arraystretch}{1.08}
    \begin{tabular*}{\textwidth}{@{\extracolsep{\fill}}lccccc@{}}
        \toprule
        \textbf{Configuration} & \multicolumn{5}{c}{\textbf{Shared setting}} \\
        \midrule
        Generator optimizer & \multicolumn{5}{c}{AdamW} \\
        Generator learning rate & \multicolumn{5}{c}{$1.0\times10^{-5}$} \\
        Generator betas & \multicolumn{5}{c}{$(0.9, 0.999)$} \\
        Tracker optimizer & \multicolumn{5}{c}{AdamW} \\
        Tracker learning rate & \multicolumn{5}{c}{$1.0\times10^{-3}$} \\
        Tracker betas & \multicolumn{5}{c}{$(0.9, 0.999)$} \\
        Generator EMA decay & \multicolumn{5}{c}{$0.999$} \\
        Training precision & \multicolumn{5}{c}{BF16} \\
        \midrule
        \textbf{Configuration}
        & \shortstack{\textbf{JiT-B}\\\textbf{(\tabref{tab:imagenet-benchmark})}}
        & \shortstack{\textbf{PixelDiT-XL}\\\textbf{(\tabref{tab:imagenet-benchmark})}}
        & \shortstack{\textbf{DiT-XL/2}\\\textbf{(\tabref{tab:imagenet-benchmark})}}
        & \shortstack{\textbf{DiT-XL/4}\\\textbf{(\figref{fig:imagenet512-conversion})}}
        & \shortstack{\textbf{JiT-B}\\\textbf{(\tabref{tab:representation-ablation}; \figref{fig:lambda-rho-plane}, \figref{fig:fdloss-refinement})}}
        \\
        \midrule
        Global batch size & $256$ & $128$ & $256$ & $256$ & $256$ \\
        Training steps & $100{,}000$ ($200{,}000$ for $\dagger$) & $100{,}000$ & $50{,}000$ & $25{,}000$ & $5{,}000$ \\
        \bottomrule
    \end{tabular*}
    \label{tab:imagenet-training-config}
\end{table}

Each training step performs one generator update and, when $\rho>0$, one tracker update on the same triplet minibatch.

\paragraph{Numerical stabilization.}
All reported experiments compute each bearing as $(\yy-\xx)/(\norm{\yy-\xx}_2+\varepsilon)$ with fixed, untuned $\varepsilon=10^{-6}$ to prevent division by zero.
\secref{app:denominator-smoothing} characterizes the induced smoothed objective and its relation to the exact bearing field.

\paragraph{Tracker architecture.}
The tracker is a lightweight conditional Transformer with approximately $30$M parameters, comprising four layers with hidden dimension $736$, eight attention heads, and an MLP expansion ratio of $4$.
Input features are linearly embedded and augmented with two-dimensional sinusoidal positional encodings and a joint class--timestep conditioning embedding, then processed by four self-attention--MLP blocks with RMSNorm, residual connections, and spectral normalization.
Reported image experiments use a linear vector-valued head that projects the resulting tokens back to the input feature dimension.
The conservative-field alternative instead predicts a scalar potential per token; these potentials are summed, and differentiating the resulting scalar with respect to the input yields the vector field.

\paragraph{Representation interfaces.}
For latent-output models, each generated latent is decoded by the corresponding frozen decoder before the resulting image is passed through the frozen representation encoders used to compute the scattering loss; ImageNet DiT uses the SD-VAE.
On the projectile branch, gradients propagate through both decoder and encoders into the generator, while source features remain detached as in \algref{alg:tracked-three-body-training}.
Thus, ``latent'' and ``pixel'' in \tabref{tab:imagenet-benchmark} denote the generator output domain; scattering is computed in frozen image-representation spaces in both cases.
For every representation model, we use only its final-layer feature map; unlike Drifting Models~\citep{deng2026drifting}, which combines features from multiple stages, scales, and locations, we do not aggregate intermediate layers.
For a feature tensor $\mmF\in\R^{B\times C\times H\times W}$, each minibatch supplies channel-wise means and standard deviations over $(B,H,W)$ in $\R^{1\times C\times1\times1}$; we maintain detached exponential moving averages with decay $0.999$.
Each sample is whitened with the pre-update running statistics and then $\ell_2$-normalized along $C$ at each spatial location.
For scattering, we then identify each normalized $C\times H\times W$ sample with its vectorization in $\R^{CHW}$, so every bearing denominator uses the global Euclidean norm (equivalently, the tensor Frobenius norm) across all channels and spatial locations, not a positionwise norm.
Only after the losses are formed are the running statistics updated with the current minibatch.
Thus the preprocessing map is fixed conditional on the history before each loss event, so the current-step estimator and tracker identities remain valid conditionally on that history.
The theory below treats the vectors entering the scattering objective as the primitive sample space and does not cover the slow time variation induced by these EMA updates.
Each representation model has its own tracker; all representation branches use the same feature processing and tracker architecture, and their losses are summed only after being formed in their respective feature spaces.

\paragraph{Generator initialization.}
Except for the FD-loss checkpoint case in \figref{fig:fdloss-refinement}, all ImageNet-1K \method runs initialize the generator from an architecture- and resolution-matched multi-step checkpoint trained with a diffusion or flow-matching objective.
For JiT and PixelDiT, we use the authors' public checkpoints for their best-reported multi-step configurations~\citep{li2025back,yu2025pixeldit}, available from the official project repositories.
For DiT, we instead initialize DiT-XL/2 and DiT-XL/4 from the multi-step checkpoints released by UCGM~\citep{sun2025unified}.
The initialized network is then optimized with the relevant setting of \algref{alg:tracked-three-body-training}.
Unlike consistency distillation~\citep{song2023consistency}, which couples predictions across noise levels, and DMD~\citep{yin2024one}, which queries diffusion and generated-distribution scores, this warm start supplies only initial parameters: no consistency target, teacher prediction, score, or noise-time target enters the \method update.
Thus, the reported one-step models use \method supervision and a single-pass sampler; ``one-step'' does not imply random initialization.
For the primary runs, this practical warm start improves training efficiency while leaving the \method objective independent of the initialization source.
\tabref{tab:imagenet-training-config} reports the shared optimizer settings and per-experiment batch sizes and step counts for this \method training.

\FloatBarrier

\begin{figure}[!t]
    \centering
    \includegraphics[width=0.995\textwidth]{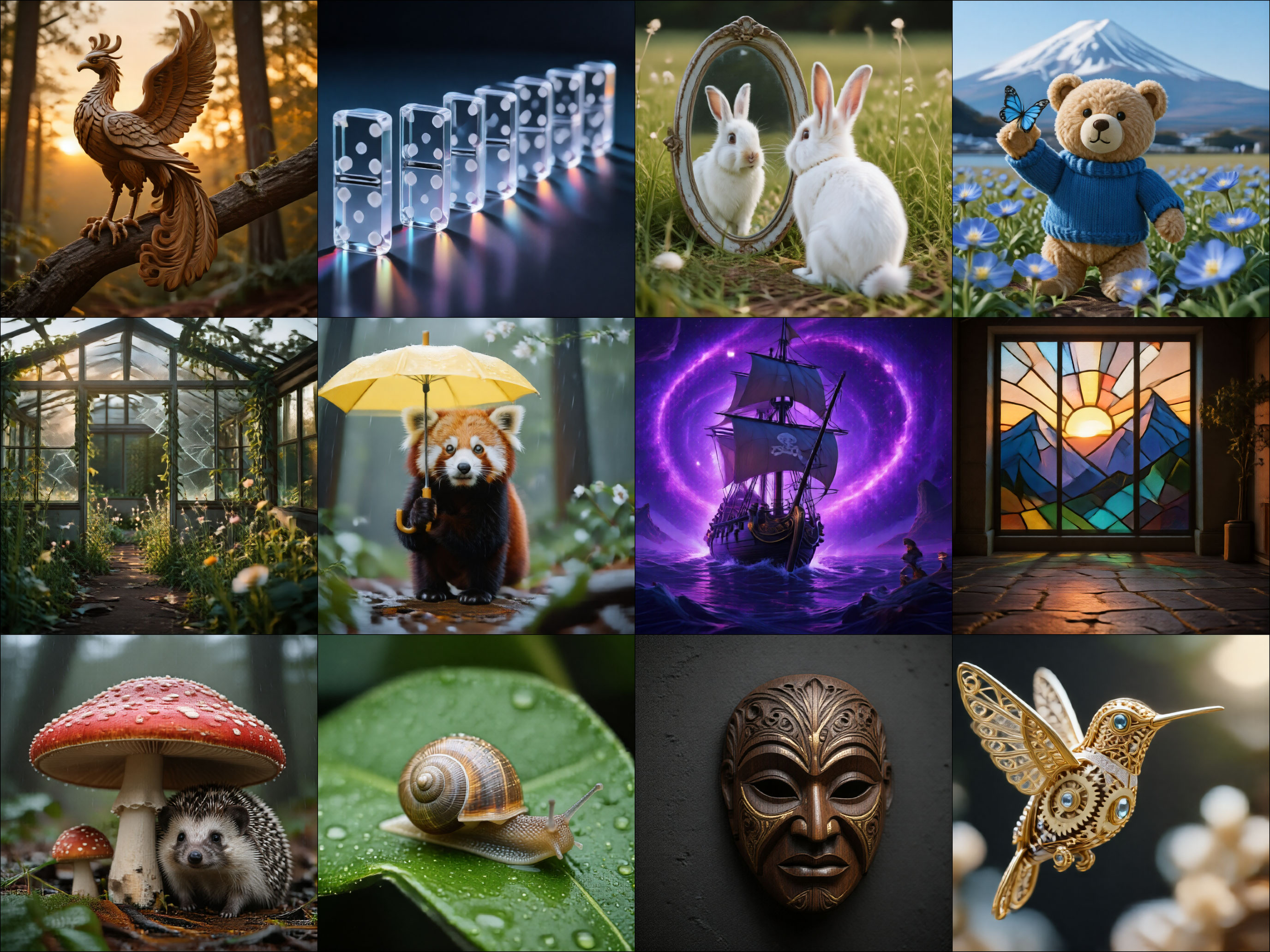}
    \caption{
        \textbf{One-step text-to-image generation.}
        NFE${}=1$ samples generated at $1024\times1024$ resolution by a \method{}-fine-tuned Qwen-Image-20B model.
    }
    \label{fig:qwen-image-grid}
\end{figure}

\section{Text-to-Image Generation}
\label{app:text-to-image}

\paragraph{Initialization and supervision.}
We initialize the generator directly from the original pretrained multi-step Qwen-Image-20B~\citep{wu2025qwenimagetechnicalreport} checkpoint, without an intermediate distillation stage or teacher-assisted conversion.
Thereafter, all generator and tracker updates use only \algref{alg:tracked-three-body-training}.
Relative to teacher-based distillation and adversarial post-training pipelines~\citep{yin2024one,lin2026continuousadversarial}, the supervision interface is simpler in this respect: it requires no path-model teacher query, score estimate, denoising-trajectory target, or adversarial critic.

\paragraph{Model and representation interfaces.}
The samples in \figref{fig:scale-overview} and \figref{fig:qwen-image-grid} are generated at NFE${}=1$ without classifier-free guidance.
Relative to the ImageNet implementation, each tracker introduces cross-attention to the text-conditioning sequence from the Qwen2.5-VL~\citep{bai2025qwen25vl} text encoder in Qwen-Image-20B.
We form separate scattering losses from the final-layer features of three frozen representation models: the same Qwen2.5-VL encoder, DINOv3~\citep{simeoni2025dinov3}, and SigLIP~\citep{zhai2023sigmoid}.
Each representation has its own tracker and follows the whitening, normalization, and loss-aggregation procedure in \secref{app:exp}.

\paragraph{Training configuration.}
We set $\rho=1$ and $\lambda=0$ and train for $1{,}000$ generator updates with batch size $16$.
The generator uses learning rate $5\times10^{-6}$ and AdamW betas $(0.9, 0.99)$; each tracker uses learning rate $1\times10^{-5}$ and betas $(0.9, 0.999)$.
Training likewise uses no classifier-free guidance.

\paragraph{Evaluation scope.}
The current draft reports this initial $1{,}000$-step study.
Subsequent updates will evaluate additional generator families, longer training, and broader benchmarks with direct baseline comparisons.

\FloatBarrier

\section{Related Work and Positioning}
\label{app:positioning}

We position \method from its exact energy-distance endpoint: an explicit distributional energy whose population field is estimated with constant-size stochastic interactions, without adversarial, path-indexed, autoregressive, or distillation-based supervision.

\subsection{Adversarial One-Step Generators}
\label{app:positioning-gan}

GANs~\citep{goodfellow2014generative,brock2018large,sauer2022stylegan,kang2023scaling} provide direct single-pass sampling but train the generator through a jointly learned discriminator or critic.
Adversarial Flow Models use adversarial training for native one- or multi-step flow-style generation, while Continuous Adversarial Flow Models adversarially post-train continuous flow models~\citep{lin2025adversarialflow,lin2026continuousadversarial}.
\method instead uses non-adversarial frozen-target regression; at its exact endpoint, the sampled scattering vector estimates the $2$-Wasserstein negative-gradient field of the energy-distance functional.
A GAN-like objective arises only for the conservative displacement analogue in \secref{sec:path-conditioned-fallback}, where \eqref{eq:gan-like-critic-objective} gives a quadratically path-regularized potential objective.

\subsection{Diffusion, Score, and Flow-Based Models}
\label{app:positioning-diffusion-flow}

Standard diffusion, score-based, and flow-matching models~\citep{ho2020denoising,song2020score,song2020denoising,lipman2022flow,ma2024sit} use path-indexed supervision: a network predicts a denoising target, score, or velocity along a noise-to-data trajectory and is typically evaluated repeatedly during sampling.
The exact energy-distance formulation requires no noise schedule or transport path; it updates the generator distribution directly through real-generated-generated particle interactions.

\subsection{Autoregressive Visual Generation}
\label{app:positioning-ar}

Autoregressive visual generators~\citep{oord2016pixel,yu2023language,li2024autoregressive,tian2024visual,sun2024autoregressive} formulate image generation as ordered conditional prediction over discrete or continuous token, patch, or scale representations.
They exploit sequence-modeling infrastructure, while sampling follows a chosen order or scale schedule; parallel and masked variants can reduce the number of decoding rounds.
\method is non-autoregressive: it emits a full continuous sample in one generator evaluation and trains it through the population scattering vector field.

\subsection{One-Step and Few-Step Diffusion Acceleration}
\label{app:positioning-acceleration}

The sampling cost of diffusion and flow models has motivated a large body of one-step and few-step acceleration methods, including progressive distillation~\citep{salimans2022progressive}, consistency models~\citep{song2023consistency,lu2024simplifying,song2023improved}, shortcut models~\citep{frans2024one}, mean flows~\citep{geng2025mean}, and distribution matching distillation~\citep{yin2024one,yin2024improved}.
They obtain one- or few-step sampling through method-specific combinations of path- or noise-time supervision, consistency constraints, teachers, or auxiliary distribution estimators.
At its energy-distance endpoint, \method instead trains the one-step generator with a population distribution-matching objective that does not require teacher outputs, trajectory targets, or score estimates.
Our primary ImageNet experiments use a multi-step checkpoint only for parameter initialization, as detailed in \secref{app:exp}; this practical warm start is distinct from making a path model part of the \method supervision.

\subsection{Generative Wasserstein Flows and Particle Dynamics}
\label{app:positioning-wgf}

The closest line directly optimizes distributional discrepancies or their induced dynamics.
Energy distance is equivalent to MMD with a distance-induced kernel~\citep{sejdinovic2013equivalence}; both yield proper population objectives under their standard assumptions~\citep{szekely2013energy,gretton2012kernel}.
Generative Moment Matching Networks directly backpropagate an MMD objective~\citep{li2015generative}, whereas MMD-GAN and Cram\'er GAN learn the comparison kernel or feature map adversarially~\citep{li2017mmdgan,bellemare2017cramer}.
Coulomb GAN uses a learned potential field with data attraction and generated-particle repulsion~\citep{unterthiner2018coulomb}.
Under the normalization below, the $\lambda=1$ population field in \eqref{eq:population-field} is exactly the velocity field of the $2$-Wasserstein gradient flow of distance-kernel $\mathrm{MMD}^2$; alternative constant normalizations only rescale time.
MMD Wasserstein flows have been developed for interacting-particle transport, one-dimensional negative-distance dynamics, sliced Riesz kernels, posterior sampling, and deep generative modeling~\citep{arbel2019maximum,duong2024distance,hertrich2024generative,hagemann2023posterior,galashov2025deep}.
Smoothed distance kernels have also been constructed to restore differentiability and flow well-posedness near particle collisions~\citep{rux2026smoothed}.
Recent work strengthens this theory in complementary directions: Sobolev-regularized MMD flow penalizes witness gradients and proves global MMD convergence under embedding-regularity assumptions, while kernel-gradient drifting expresses general-kernel drifts as score differences between kernel-smoothed laws, yielding a smoothed-KL descent interpretation~\citep{tian2026sobolev,esteban2026kernel}.
The population energy flow is therefore not claimed as new; our theory concerns its constant-size source-detached estimator, online conditional-expectation tracking, and finite-error transfer to generator updates.
More broadly, Wasserstein gradient-flow theory and Stein variational methods provide related distributional and interacting-particle views~\citep{jordan1998variational,ambrosio2008gradient,liu2016stein}.
MMD admits both full pairwise empirical estimates and linear-time random-pair approximations~\citep{gretton2012kernel}.

\paragraph{Relation to linear-time MMD estimation.}
\label{app:linear-mmd-relation}
For distributions with finite first moments and any anchor $\aa$, the distance-induced kernel
$k_{\aa}(\xx,\yy)=\frac12(\norm{\xx-\aa}+\norm{\yy-\aa}-\norm{\xx-\yy})$
makes $\frac{1}{2}D_E^2(P_{\mtheta},Q)$ equal to the squared MMD.
For independent $\xx_1,\xx_2\sim P_{\mtheta}$ and $\xx_{\mathrm r,1},\xx_{\mathrm r,2}\sim Q$, one block of its classical linear-time estimator and its generator gradient can be written as
\begin{equation}
    \begin{aligned}
        \widehat{\cF}_{\mathrm{lin}}
         & =
        \frac{1}{2}
        \left(
        \norm{\xx_1-\xx_{\mathrm r,2}}
        +\norm{\xx_2-\xx_{\mathrm r,1}}
        -\norm{\xx_1-\xx_2}
        -\norm{\xx_{\mathrm r,1}-\xx_{\mathrm r,2}}
        \right),
        \\
        \nabla_{\mtheta}\widehat{\cF}_{\mathrm{lin}}
         & =
        -\frac{1}{2}
        \left(
        J_1^{\top}\widehat{\vv}_{\mathrm{scat}}(\xx_1;\xx_{\mathrm r,2},\xx_2)
        +
        J_2^{\top}\widehat{\vv}_{\mathrm{scat}}(\xx_2;\xx_{\mathrm r,1},\xx_1)
        \right),
    \end{aligned}
    \label{eq:linear-mmd-three-body-gradient}
\end{equation}
where $J_i=\nabla_{\mtheta}\xx_i$ and the identity holds away from coincident samples.
Thus a symmetrized linear-time distance-kernel MMD gradient decomposes into two three-body projectile updates.
By exchangeability, sampling either projectile term uniformly and omitting its factor $1/2$ has the same expectation as the displayed average.
\algref{alg:three-body-training} realizes this update through regression with the generated source detached; up to conventional loss scaling, \thmref{thm:three-body-equivalence} proves equality with the energy-distance gradient at the current parameters.
Thus \algref{alg:three-body-training} is a source-detached projectile-level realization of a classical stochastic gradient, not a new linear-time MMD estimator.

Recent direct distribution-training methods differ mainly in how they estimate distribution-level supervision.
Drifting Models construct minibatch kernel attraction--repulsion fields, with subsequent work analyzing when such fields are conservative~\citep{deng2026drifting,franz2026drifting}; FD-loss instead optimizes representation-space feature means and covariances using population statistics and batch-level gradients~\citep{yang2026representation}.
Representation Distribution Matching uses a frozen Nystr\"om data reference but retains exact within-batch generated-sample repulsion, making the fresh generated batch an operative optimization variable~\citep{feng2026representation}.
Variational generative Wasserstein flows use JKO or dual formulations~\citep{caucheteux2026unifying}, whereas Sinkhorn-Drifting and W-Flow construct minibatch cross- and self-transport plans~\citep{he2026sinkhorndrifting,han2026wflow}.
W-Flow is especially close at the training interface: it regresses a one-step generator toward detached particle updates obtained from minibatch Sinkhorn plans.
At its exact endpoint, \method estimates each projectile's energy-distance field from one real and one independently generated source.
A batch of $B$ projectiles therefore uses $O(B)$ sampled source interactions and explicit frozen targets, without solving a minibatch transport problem; tracked scattering learns their conditional expectation online.
Operationally, this sample-level interface requires one observed real source per condition-specific loss rather than a same-condition real minibatch, fitting text--image data with one image per caption.
It does not remove the statistical limitation of observing one sample from each conditional law, discussed in \secref{sec:limitations}.

\paragraph{Guidance-free training interface.}
Classifier-free guidance is not intrinsic to the neighboring families, but leading implementations may build it into training: Drifting samples a guidance scale and unconditional real negatives, although its best reported FID uses the scale it identifies as ``no CFG''; W-Flow adopts velocity guidance as its default; and text-to-image DMD distills teachers at fixed guidance scales~\citep{deng2026drifting,han2026wflow,yin2024one,ho2022classifier}.
The reported \method pipeline uses neither unconditional reference particles nor a guided teacher or sampler and introduces no separate guidance hyperparameter beyond its intrinsic $(\rho,\lambda)$ controls.
This is a claim about guidance-related pipeline simplicity, not that these methods cannot operate without CFG.

\section{Derivation and Theoretical Foundations}
\label{app:theory}

Unless stated otherwise, distribution-level identities are for one fixed condition.
They extend by averaging over $\cc$ when the required expectations and interchanges are valid; condition-wise rates require corresponding integrable or uniform control.
Parameter-space results apply to the aggregate conditional objective only when all their stated assumptions hold for it.
Throughout this section, $\xx\in\R^d$ denotes the vector supplied to the scattering objective.
We assume finite first moments and set every bearing to zero at exact coincidence.
Classical spatial and parameter gradients additionally require zero collision probability and valid differentiation under the expectation; with collision mass, zero is only a generalized-gradient selection.
Population $2$-Wasserstein flow statements further assume $P_t\in\mathcal P_2(\R^d)$ solves the continuity equation regularly enough for the $2$-Wasserstein chain rule.
At $\lambda=1$, the code's denominator smoothing retains a proper objective; the frozen-target, tracker-error, and conditional convergence arguments apply to its corresponding smoothed field as detailed in \secref{app:denominator-smoothing}.

\subsection{Exact Scattering Field and Frozen-Target Equivalence}
\label{app:three-body-derivation}

For one fixed condition, or in the unconditional case, write $P=P_{\mtheta}$ and retain $Q$ for the corresponding data law; the energy from the main text is
\begin{equation}
    \cF(P)
    =\Eb{\norm{\xx_{\mathrm p}-\xx_{\mathrm r}}}
    -\frac12\Eb{\norm{\xx_{\mathrm p}-\xx_{\mathrm s}}}
    -\frac12\Eb{\norm{\xx_{\mathrm r}-\xx_{\mathrm r}'}} ,
    \label{eq:app-energy-functional}
\end{equation}
where $\xx_{\mathrm p},\xx_{\mathrm s}\sim P$ and $\xx_{\mathrm r},\xx_{\mathrm r}'\sim Q$ are independent within and across laws.
The real--real term is constant in $P$, while symmetry of the model--model term removes its factor $1/2$ under variation.
Thus, up to an additive constant,
\begin{align}
    \frac{\delta \cF}{\delta P}(\xx)
     & =\Eb{\norm{\xx-\xx_{\mathrm r}}}-\Eb{\norm{\xx-\xx_{\mathrm s}}},
    \label{eq:app-first-variation}
    \\
    \nabla_{\xx}
    \frac{\delta \cF}{\delta P}(\xx)
     & =\Eb{\frac{\xx-\xx_{\mathrm r}}{\norm{\xx-\xx_{\mathrm r}}}}
    -\Eb{\frac{\xx-\xx_{\mathrm s}}{\norm{\xx-\xx_{\mathrm s}}}},
    \label{eq:app-spatial-gradient}
    \\
    \vv(\xx)
     & =-\nabla_{\xx}\frac{\delta \cF}{\delta P}(\xx)
    =\Eb{\frac{\xx_{\mathrm r}-\xx}{\norm{\xx_{\mathrm r}-\xx}}}
    -\Eb{\frac{\xx_{\mathrm s}-\xx}{\norm{\xx_{\mathrm s}-\xx}}}.
    \label{eq:app-population-velocity}
\end{align}
Evaluating the last line at a projectile and sampling one source from each law gives \eqref{eq:scattering-direction}.

\paragraph{Frozen-target pullback.}
\label{app:gradient-surrogate}
For directly optimized particles, descending
$\frac12\norm{\xx_{\mathrm p}-\sg[\xx_{\mathrm p}+\widehat{\vv}_{\mathrm{scat}}]}_2^2$
takes the Euler step $\xx_{\mathrm p}\leftarrow\xx_{\mathrm p}+\eta\widehat{\vv}_{\mathrm{scat}}$.
For a generator, detach fixes this sampled field and the chain rule pulls it back through the projectile Jacobian.

\begin{proof}[Proof of Theorem~\ref{thm:three-body-equivalence}]
    Independence of both sources from the projectile branch gives
    $\Eb{\widehat{\vv}\mid\xx_{\mathrm p},\cc}=\vv_{\bar{\mtheta}}(\xx_{\mathrm p},\cc)$ and, for the pullback, the same identity conditional on the projectile latent and its Jacobian; also $\norm{\widehat{\vv}}_2\leq2$.
    Under the regularity assumptions in the theorem, all target-side quantities are fixed at $\bar{\mtheta}$, so at the current parameters
    \begin{equation}
        \left.\nabla_{\mtheta}\cL_{\mathrm{gen},\cc}(\mtheta;\bar{\mtheta})
        \right|_{\mtheta=\bar{\mtheta}}
        =-\Eb{J_{\bar{\mtheta}}^{\top}\widehat{\vv}}
        =-\Eb{J_{\bar{\mtheta}}^{\top}\vv_{\bar{\mtheta}}}
        =\nabla_{\mtheta}\cF_{\cc}(\bar{\mtheta}).
        \label{eq:app-frozen-target-gradient-equals-energy-gradient}
    \end{equation}
    The middle equality uses conditional unbiasedness, and the last is \eqref{eq:generator-gradient-field}.
    Without detach, derivatives through the generated source and target construction would add terms and change this identity.
\end{proof}
Dropping the conventional factor $1/2$ doubles the regression gradient and is absorbed by the learning rate.

\subsection{Tracker Accuracy and Bias--Variance}
\label{app:tracker-conditional-expectation}

Let $\mathcal H$ denote the training history before a fresh triplet is drawn, and evaluate the current tracker before updating it on that triplet, so the current generator and tracker are fixed conditional on $\mathcal H$.
Write $\mathbf q=(\tilde{\xx},\cc)$, $Y=\widehat{\vv}_{\lambda}$, $m_{\lambda}(\mathbf q)=\Eb{Y\mid\mathbf q,\mathcal H}$, and $h_{\mphi}(\mathbf q)=\mv_{\mphi}(\tilde{\xx},\cc)$.
Assume $h_{\mphi}(\mathbf q)$ is square integrable and define $G_{\rho}=(1-\rho)Y+\rho h_{\mphi}(\mathbf q)$ for $\rho\in[0,1]$.
The population tracker risk is $R(h)=\Eb{\norm{h(\mathbf q)-Y}_2^2\mid\mathcal H}$.

\begin{theorem}[Tracked scattering as a conditional-expectation projection]
    \label{thm:finite-tracker-decomposition}
    Over all square-integrable functions measurable with respect to the query and history, the conditional expectation $m_{\lambda}$ minimizes $R$, and the excess risk of the learned tracker is $R(h_{\mphi})-R(m_{\lambda})=\Eb{\norm{h_{\mphi}(\mathbf q)-m_{\lambda}(\mathbf q)}_2^2\mid\mathcal H}$.
    Moreover, almost surely, the mixed target has the exact error decomposition
    \begin{equation}
        \Eb{
            \norm{
                G_{\rho}
                -
                m_{\lambda}(\mathbf q)
            }_2^2
            \mid\mathbf q,\mathcal H
        }
        =
        (1-\rho)^2
        \operatorname{tr}\operatorname{Var}
        (Y\mid\mathbf q,\mathcal H)
        +
        \rho^2
        \norm{h_{\mphi}(\mathbf q)-m_{\lambda}(\mathbf q)}_2^2.
        \label{eq:app-finite-tracker-decomposition}
    \end{equation}
    At $\lambda=1$, $\mathbf q=(\xx_{\mathrm p},\cc)$ and $m_1(\mathbf q)=\vv_{\mtheta}(\xx_{\mathrm p},\cc)$.
    In this case, the expected tracked velocity remains a strict energy-descent direction whenever $\rho\norm{h_{\mphi}-\vv_{\mtheta}}_{L^2(P_{\mtheta})}<\norm{\vv_{\mtheta}}_{L^2(P_{\mtheta})}$, condition-wise.
\end{theorem}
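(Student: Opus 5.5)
The plan is to prove the four claims in order, conditioning on $\mathcal H$ throughout. Conditional on the history, the generator, tracker, and preprocessing are fixed functions, so $Y$, $\mathbf q$, and $h_{\mphi}(\mathbf q)$ are ordinary random variables on a single probability space. We have $\norm{Y}_2\le 1+\lambda\le 2$, so $Y$ is square integrable, and $h_{\mphi}(\mathbf q)$ is square integrable by assumption. Hence $m_{\lambda}$ exists in $L^2$, and the only tool needed is the Hilbert-space projection property of conditional expectation.

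\textbf{Minimization and excess risk.} For any square-integrable $h$ measurable with respect to $(\mathbf q,\mathcal H)$, expand
\[
\norm{h-Y}_2^2=\norm{h-m_{\lambda}}_2^2+2\langle h-m_{\lambda},m_{\lambda}-Y\rangle+\norm{m_{\lambda}-Y}_2^2 .
\]
Take conditional expectation given $(\mathbf q,\mathcal H)$. The vector $h-m_{\lambda}$ is measurable with respect to $(\mathbf q,\mathcal H)$, and $\Eb{m_{\lambda}-Y\mid\mathbf q,\mathcal H}=0$, so the cross term vanishes. Taking $\Eb{\cdot\mid\mathcal H}$ by the tower property then gives $R(h)=R(m_{\lambda})+\Eb{\norm{h-m_{\lambda}}_2^2\mid\mathcal H}$. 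This yields both minimality and the excess-risk formula at once, since $h_{\mphi}$ is one admissible $h$.

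\textbf{Mixed decomposition.} Write
\[
G_{\rho}-m_{\lambda}=(1-\rho)(Y-m_{\lambda})+\rho(h_{\mphi}-m_{\lambda}).
\]
Square this and take expectation conditional on $(\mathbf q,\mathcal H)$. The second summand is $(\mathbf q,\mathcal H)$-measurable, so it factors out. The cross term vanishes because $Y-m_{\lambda}$ has conditional mean zero. The first squared term becomes $(1-\rho)^2\operatorname{tr}\operatorname{Var}(Y\mid\mathbf q,\mathcal H)$, and the second stays as $\rho^2\norm{h_{\mphi}-m_{\lambda}}_2^2$. All of these identities hold almost surely, which is the claimed form.

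\textbf{Specialization to $\lambda=1$.} When $\lambda=1$, $\alpha$ is a point mass at $0$, so $\tilde{\xx}=\xx_{\mathrm p}$ and $\mathbf q=(\xx_{\mathrm p},\cc)$. Both sources are drawn independently of the projectile branch and of $\mathcal H$ given the current parameters. Theorem~\ref{thm:three-body-equivalence} therefore gives $m_1(\mathbf q)=\vv_{\mtheta}(\xx_{\mathrm p},\cc)$, and Corollary~\ref{cor:finite-tracker-error} follows as a special case.

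\textbf{Strict descent.} The expected mixed velocity at a projectile is
\[
\bar G=\Eb{G_\rho\mid\mathbf q,\mathcal H}=(1-\rho)\vv+\rho h_{\mphi}=\vv+\rho(h_{\mphi}-\vv).
\]
Along the flow driven by $\bar G$, the Wasserstein chain rule gives
\[
\frac{\dm}{\dm t}\cF=\int\Big\langle\nabla\frac{\delta\cF}{\delta P},\bar G\Big\rangle\,\dm P=-\int\langle\vv,\bar G\rangle\,\dm P_{\mtheta},
\]
since $\vv=-\nabla\frac{\delta\cF}{\delta P}$. Then
\[
\int\langle\vv,\bar G\rangle\,\dm P_{\mtheta}\ \ge\ \norm{\vv}^2_{L^2}-\rho\norm{\vv}_{L^2}\norm{h_{\mphi}-\vv}_{L^2}
\]
by Cauchy--Schwarz, and this is strictly positive under the stated condition, since that condition forces $\norm{\vv}_{L^2}>0$. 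It is therefore a strict descent direction, condition by condition.

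\textbf{Main obstacle.} The delicate step is the measurability bookkeeping, not the algebra. We must ensure that evaluating the tracker before its update makes $h_{\mphi}$ deterministic given $\mathcal H$. We must also ensure that the whitening EMA statistics are $\mathcal H$-measurable, so that the cross terms genuinely vanish conditionally. Finally, interpreting descent requires assuming the chain rule and square integrability of $h_{\mphi}$ in $L^2(P_{\mtheta})$, which the standing assumptions of \appref{app:theory} supply.
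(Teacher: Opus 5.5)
Your proposal is correct and follows essentially the same route as the paper. Conditional $L^2$ orthogonality removes the cross terms in both the risk expansion and the mixed-target decomposition, \thmref{thm:three-body-equivalence} identifies $m_1$ with $\vv_{\mtheta}$ at $\lambda=1$, and the Wasserstein chain rule with Cauchy--Schwarz gives strict descent, with your added bookkeeping (boundedness of $Y$, and the condition forcing $\norm{\vv_{\mtheta}}_{L^2(P_{\mtheta})}>0$) only making explicit what the paper leaves implicit.
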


\begin{proof}
    The conditional expectation is the $L^2$ orthogonal projection of $Y$ onto the sigma-algebra generated by $\mathbf q$ together with $\mathcal H$.
    Hence, for every square-integrable $h(\mathbf q)$, the cross term between $Y-m_{\lambda}(\mathbf q)$ and $h(\mathbf q)-m_{\lambda}(\mathbf q)$ vanishes conditional on $\mathcal H$.
    Expanding the tracker risk proves the minimizer and excess-risk statements.

    For the mixed target,
    $G_{\rho}-m_{\lambda}(\mathbf q)=(1-\rho)(Y-m_{\lambda}(\mathbf q))+\rho(h_{\mphi}(\mathbf q)-m_{\lambda}(\mathbf q))$.
    The same conditional orthogonality removes the cross term, proving \eqref{eq:app-finite-tracker-decomposition}.

    At $\lambda=1$, the tracker query is the projectile, and \thmref{thm:three-body-equivalence} identifies $m_1$ with the exact population scattering vector field $\vv_{\mtheta}$.
    Suppress the condition and write $\mepsilon_{\mphi}=h_{\mphi}-\vv_{\mtheta}$.
    The conditional expectation of $G_{\rho}$ is then $\vv_{\mtheta}+\rho\mepsilon_{\mphi}$.
    Whenever the Wasserstein chain rule is valid, the corresponding population flow satisfies
    \begin{equation}
        \frac{\dm}{\dm t}\cF(P_t)
        =
        -\norm{\vv_t}_{L^2(P_t)}^2
        -\rho\langle\vv_t,\mepsilon_{\mphi,t}\rangle_{L^2(P_t)}
        \leq
        -\norm{\vv_t}_{L^2(P_t)}
        \left(
        \norm{\vv_t}_{L^2(P_t)}
        -\rho\norm{\mepsilon_{\mphi,t}}_{L^2(P_t)}
        \right),
    \end{equation}
    where the inequality is Cauchy--Schwarz.
\end{proof}

At $\lambda=1$, full oracle tracking replaces $Y$ by $m_1=\Eb{Y\mid\xx_{\mathrm p},\cc,\mathcal H}$, its Rao--Blackwellization with respect to $\sigma(\xx_{\mathrm p},\cc,\mathcal H)$; this preserves the energy field and removes source-sampling variance but not projectile-sampling variance.

For any $\lambda\in[0,1]$, define the projectile-conditioned source expectation
$\overline{\vv}_{\lambda}(\xx_{\mathrm p},\cc)=\Eb{Y\mid\xx_{\mathrm p},\cc,\mathcal H}$, which is the negative Wasserstein-gradient field of \eqref{eq:lambda-weighted-functional} before tracker conditioning.
Then
\begin{equation}
    \Eb{G_{\rho}\mid\xx_{\mathrm p},\cc,\mathcal H}
    =
    (1-\rho)\overline{\vv}_{\lambda}(\xx_{\mathrm p},\cc)
    +\rho\Eb{h_{\mphi}(\mathbf q)\mid\xx_{\mathrm p},\cc,\mathcal H}.
    \label{eq:app-projectile-conditioned-mixed-mean}
\end{equation}
For $\lambda<1$, the query- and projectile-generated sigma-algebras are generally not nested, so even for an oracle tracker $h_{\mphi}=m_{\lambda}$, the second conditional expectation in \eqref{eq:app-projectile-conditioned-mixed-mean} need not equal $\overline{\vv}_{\lambda}$.
At $\lambda=1$, $\mathbf q=(\xx_{\mathrm p},\cc)$, so the two expectations coincide for an oracle tracker.
Thus \eqref{eq:app-finite-tracker-decomposition} remains exact around $m_{\lambda}$ for every $\lambda$, but for $\lambda<1$ it does not imply descent of either the energy distance or $\cF_{\lambda}$ after generator pullback.

If mixing can depend on the query, \eqref{eq:app-finite-tracker-decomposition} is minimized pointwise by
$\rho^\star(\mathbf q)=s_{\lambda}(\mathbf q)/(s_{\lambda}(\mathbf q)+e_{\mphi}(\mathbf q))$ whenever the denominator is positive, where
$s_{\lambda}(\mathbf q)=\operatorname{tr}\operatorname{Var}(Y\mid\mathbf q,\mathcal H)$ and
$e_{\mphi}(\mathbf q)=\norm{h_{\mphi}(\mathbf q)-m_{\lambda}(\mathbf q)}_2^2$.
For the query-independent weight used in the algorithm, let $S_{\lambda}=\Eb{s_{\lambda}(\mathbf q)\mid\mathcal H}$ and $E_{\mphi}=\Eb{e_{\mphi}(\mathbf q)\mid\mathcal H}$.
When $S_{\lambda}+E_{\mphi}>0$, integrating \eqref{eq:app-finite-tracker-decomposition} and minimizing its quadratic in $\rho$ gives $\rho^\star_{\mathrm{const}}=S_{\lambda}/(S_{\lambda}+E_{\mphi})$ and minimum error $S_{\lambda}E_{\mphi}/(S_{\lambda}+E_{\mphi})$.
Thus full tracking ($\rho=1$) improves on the instant target, relative to $m_{\lambda}$, precisely when its excess regression risk $E_{\mphi}$ is below the residual target variance $S_{\lambda}$.
At $\lambda=1$, this residual is exactly the source-sampling noise around the energy-distance field.
At this endpoint, \eqref{eq:app-finite-tracker-decomposition} specializes to \corref{cor:finite-tracker-error}; in the oracle case, $E_{\mphi}=0$ and $\rho^\star_{\mathrm{const}}=1$.

\paragraph{A two-draw tracker diagnostic.}
Fix $\lambda=1$ and condition on a history $\mathcal H$.
For each projectile query $\mathbf q=(\xx_{\mathrm p},\cc)$, hold the generator, tracker, query, and $h_{\mphi}(\mathbf q)$ fixed while independently resampling two complete source pairs, producing conditionally iid targets $Y_1,Y_2$.
Then
\begin{align}
    S_1
     & =\frac12\Eb{\norm{Y_1-Y_2}_2^2\mid\mathcal H},
     &
    E_{\mphi}
     & =\Eb{\left\langle h_{\mphi}(\mathbf q)-Y_1,
        h_{\mphi}(\mathbf q)-Y_2\right\rangle\mid\mathcal H}.
    \label{eq:app-two-draw-diagnostic}
\end{align}
Indeed, conditional on $(\mathbf q,\mathcal H)$, writing $Y_i=m_1(\mathbf q)+\xi_i$ with independent centered residuals gives the two identities after expansion and averaging over $\mathbf q$.
Consequently, when $S_1+E_{\mphi}>0$, the field-MSE-optimal constant mixture is
$\rho^\star_{\mathrm{const}}=S_1/(S_1+E_{\mphi})$, and full tracking has lower field MSE than the instant target exactly when $E_{\mphi}<S_1$.
Neither identity requires the unknown field $m_1$, so when the required conditional redraws are available, paired Monte Carlo averages provide unbiased snapshot estimates of $S_1$ and $E_{\mphi}$.
The resulting ratio is only a plug-in diagnostic (and may require clipping at finite sample size), not an unbiased estimate of $\rho^\star_{\mathrm{const}}$ or a certificate of generator convergence or sample quality.

At $\lambda=1$, the same orthogonal decomposition holds after the field is pulled back through the generator.
Let $J=\nabla_{\mtheta}\mg_{\mtheta}(\zz,\cc)$ and $\vv(\mathbf q)=m_1(\mathbf q)$.
Assume $J^{\top}(Y-\vv(\mathbf q))$ and $J^{\top}(h_{\mphi}(\mathbf q)-\vv(\mathbf q))$ are square integrable, and define, conditional on $\mathcal H$,
$S_J=\Eb{\norm{J^{\top}(Y-\vv(\mathbf q))}_2^2\mid\mathcal H}$ and
$E_{J,\mphi}=\Eb{\norm{J^{\top}(h_{\mphi}(\mathbf q)-\vv(\mathbf q))}_2^2\mid\mathcal H}$.

\Needspace{0.10\textheight}
\begin{corollary}[Generator-update tracker decomposition]
    \label{cor:parameter-tracker-decomposition}
    If $\Eb{Y-\vv(\mathbf q)\mid\mathbf q,J,\mathcal H}=0$, as ensured by the independent projectile and source draws in \algref{alg:tracked-three-body-training}, then
    \begin{equation}
        \Eb{
            \norm{J^{\top}(G_{\rho}-\vv(\mathbf q))}_2^2
            \mid\mathcal H
        }
        =
        (1-\rho)^2S_J+\rho^2E_{J,\mphi}.
        \label{eq:app-parameter-tracker-decomposition}
    \end{equation}
\end{corollary}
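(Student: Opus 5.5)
The plan mirrors the proof of \thmref{thm:finite-tracker-decomposition}, carried out after pullback through $J$. We first write
\begin{equation*}
    J^{\top}(G_{\rho}-\vv(\mathbf q))
    =
    (1-\rho)J^{\top}(Y-\vv(\mathbf q))
    +
    \rho J^{\top}(h_{\mphi}(\mathbf q)-\vv(\mathbf q)),
\end{equation*}
using $G_{\rho}=(1-\rho)Y+\rho h_{\mphi}(\mathbf q)$. Expanding the squared norm produces the two diagonal terms, whose conditional expectations given $\mathcal H$ are exactly $(1-\rho)^2S_J$ and $\rho^2E_{J,\mphi}$. It also produces the cross term
\begin{equation*}
    2\rho(1-\rho)\,
    \Eb{\big\langle J^{\top}(Y-\vv(\mathbf q)),\,J^{\top}(h_{\mphi}(\mathbf q)-\vv(\mathbf q))\big\rangle\mid\mathcal H}.
\end{equation*}
The whole task is to show this cross term vanishes.

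To kill the cross term, rewrite the inner product as $\langle Y-\vv(\mathbf q),\,JJ^{\top}(h_{\mphi}(\mathbf q)-\vv(\mathbf q))\rangle$. Conditional on $\mathcal H$, the current tracker is fixed, because it is evaluated before its update on the triplet. Hence $W:=JJ^{\top}(h_{\mphi}(\mathbf q)-\vv(\mathbf q))$ is measurable with respect to $\sigma(\mathbf q,J,\mathcal H)$. Applying the tower property with respect to this sigma-algebra gives
\begin{equation*}
    \Eb{\langle Y-\vv(\mathbf q),W\rangle\mid\mathcal H}
    =
    \Eb{\langle\Eb{Y-\vv(\mathbf q)\mid\mathbf q,J,\mathcal H},W\rangle\mid\mathcal H}
    =0
\end{equation*}
by the stated hypothesis. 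For the parenthetical claim that the hypothesis holds in \algref{alg:tracked-three-body-training}, we invoke \thmref{thm:three-body-equivalence}. At $\lambda=1$ it gives $\Eb{\widehat{\vv}\mid\zz,J,\cc}=\vv(\xx_{\mathrm p},\cc)$, because the real and generated sources are drawn independently of the projectile latent $\zz$, which determines both $\mathbf q$ and $J$; the same argument applies conditional on $\mathcal H$.

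The main obstacle is integrability: the tower step requires the inner product to be integrable. This follows from Cauchy--Schwarz applied to the two assumed square-integrable pulled-back vectors, since $|\langle J^{\top}a,J^{\top}b\rangle|\le\norm{J^{\top}a}\norm{J^{\top}b}$. We must therefore keep the inner product in the form $\langle J^{\top}(\cdot),J^{\top}(\cdot)\rangle$ when bounding it, and move $JJ^{\top}$ onto $W$ only inside the conditional expectation. The resulting product is integrable, so conditional expectations are well defined without requiring separate moment bounds on $J$. The identity then follows by summing the three terms.
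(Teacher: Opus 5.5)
Your proof is correct and follows the paper's own argument. You expand the pulled-back mixed error, observe that $J$ and $h_{\mphi}(\mathbf q)-\vv(\mathbf q)$ are fixed given $(\mathbf q,J,\mathcal H)$, and kill the cross term with the hypothesis $\Eb{Y-\vv(\mathbf q)\mid\mathbf q,J,\mathcal H}=0$; your Cauchy--Schwarz integrability check is a useful addition that the paper leaves implicit, and pulling $W$ out of the conditional expectation (since only the full inner product, not each coordinate product, is known to be integrable) is cleanest justified by truncating on $\{\norm{W}_2\le n\}$ and using the boundedness of $Y$ with conditional dominated convergence.
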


\begin{proof}
    Conditional on $(\mathbf q,J,\mathcal H)$, both $J$ and $h_{\mphi}(\mathbf q)-\vv(\mathbf q)$ are fixed.
    The assumed conditional-expectation identity makes the cross term vanish after multiplication by $J^{\top}$, proving \eqref{eq:app-parameter-tracker-decomposition}.
\end{proof}

Thus, when $S_J+E_{J,\mphi}>0$, the query-independent mixture minimizing the per-sample Jacobian-pullback error is
$\rho_J^\star=S_J/(S_J+E_{J,\mphi})$; full tracking improves on the instant update exactly when $E_{J,\mphi}<S_J$.
This parameter-space criterion can differ from the output-space criterion because the generator Jacobian suppresses or amplifies different field directions.

\subsection{Population Dissipation and Idealized Optimization Guarantees}

\subsubsection{Exact-flow dissipation}
\label{app:population-dissipation}
Let $P_t$ solve $\partial_t P_t+\nabla\cdot(P_t\vv_t)=0$ with
$\vv_t=-\nabla(\delta\cF/\delta P_t)$.

Whenever the $2$-Wasserstein chain rule is valid,
\begin{equation}
    \frac{\dm}{\dm t}\cF(P_t)
    =-\int\norm{\vv_t(\xx)}_2^2P_t(\dm\xx)\leq0.
    \label{eq:app-energy-dissipation}
\end{equation}
Indeed, writing $\phi_t=\delta\cF/\delta P_t$, the chain rule, continuity equation, and integration by parts give
$\frac{\dm}{\dm t}\cF(P_t)=\int\nabla\phi_t^{\top}\vv_t\,\dm P_t=-\int\norm{\vv_t}_2^2\,\dm P_t$.
Consequently, for every $T>0$,
\begin{equation}
    \inf_{0\leq t\leq T}\int\norm{\vv_t}_2^2\,\dm P_t
    \leq\frac1T\int_0^T\!\int\norm{\vv_t}_2^2\,\dm P_t\,\dm t
    \leq\frac{\cF(P_0)}{T}.
    \label{eq:app-population-stationarity-rate}
\end{equation}
This is field stationarity, not distributional or finite-generator convergence.
Properness by itself is insufficient to rule out nonoptimal stationary measures, so the slope condition used below is substantive.
A global slope inequality is not automatic because kernel discrepancies are generally not geodesically convex in Wasserstein geometry.
Related Riesz-flow theory proves global exponential convergence in the Coulomb regime under regularity assumptions~\citep{boufadene2025global} and local polynomial guarantees in higher-order Sobolev regimes under stronger assumptions~\citep{chizat2026quantitative}; neither directly supplies the energy-distance slope inequality used next.

\subsubsection{Tracked population flow}
\label{app:tracked-flow-convergence}

We now close the population-level chain for the exact energy-distance endpoint $\lambda=1$.
Let $Q$ be fixed, write $\cF(P)=\frac12D_E^2(P,Q)$, and let
$\vv_t=-\nabla(\delta\cF/\delta P_t)$ be its exact scattering field.
For a fixed $\rho\in[0,1]$, let $h_t$ be the current population tracker, set $e_t=h_t-\vv_t$, and suppose $P_t$ solves the continuity equation with the mean mixed velocity
$u_t=(1-\rho)\vv_t+\rho h_t=\vv_t+\rho e_t$.
Let $R_t$ denote the population tracker risk from \secref{app:tracker-conditional-expectation} at $P_t$.
The conditional-expectation projection gives
$R_t(h_t)-R_t(\vv_t)=\norm{e_t}_{L^2(P_t)}^2$.
In addition to the chain-rule regularity assumed above, suppose that along this trajectory there are constants $\gamma\in[0,1)$, $c>0$, and $q\geq1$ such that
\begin{equation}
    \rho\sqrt{R_t(h_t)-R_t(\vv_t)}
    \leq
    \gamma\norm{\vv_t}_{L^2(P_t)},
    \qquad
    \norm{\vv_t}_{L^2(P_t)}^2
    \geq
    c\,\cF(P_t)^q
    \quad\text{for all }t\geq0.
    \label{eq:app-tracked-convergence-conditions}
\end{equation}
The first inequality directly controls tracker excess risk relative to field energy; the second is a trajectory-wise slope inequality, with exponential and polynomial regimes at $q=1$ and $q>1$, respectively.

\begin{theorem}[Tracked population-flow convergence from excess risk]
    \label{thm:tracked-flow-convergence}
    Under the preceding assumptions, $\cF(P_t)\to0$ and hence $D_E(P_t,Q)\to0$.
    If $\cF(P_0)>0$, the quantitative bound is
    \begin{equation}
        \cF(P_t)
        \leq
        \begin{cases}
            \cF(P_0)\exp\!\left[-c(1-\gamma)t\right],
             & q=1, \\[3pt]
            \left[\cF(P_0)^{1-q}
                +c(q-1)(1-\gamma)t\right]^{-1/(q-1)},
             & q>1.
        \end{cases}
        \label{eq:app-tracked-flow-convergence}
    \end{equation}
\end{theorem}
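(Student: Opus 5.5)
The plan is to derive a scalar differential inequality for $f(t)=\cF(P_t)$ and then integrate it by standard comparison arguments.

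\textbf{Step 1: dissipation identity.} I start from the $2$-Wasserstein chain rule as in \secref{app:population-dissipation}. The flow is now driven by $u_t=\vv_t+\rho e_t$ instead of $\vv_t$. Writing $\phi_t=\delta\cF/\delta P_t$, so that $\nabla\phi_t=-\vv_t$, the continuity equation and integration by parts give
\begin{equation*}
    f'(t)=\int\nabla\phi_t^{\top}u_t\,\dm P_t
    =-\norm{\vv_t}_{L^2(P_t)}^2-\rho\langle\vv_t,e_t\rangle_{L^2(P_t)}.
\end{equation*}
This is exactly the identity in the proof of \thmref{thm:finite-tracker-decomposition}. Cauchy--Schwarz then yields
\begin{equation*}
    f'(t)\le-\norm{\vv_t}\bigl(\norm{\vv_t}-\rho\norm{e_t}\bigr).
\end{equation*}

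\textbf{Step 2: bounding the tracker error.} The excess-risk statement of \thmref{thm:finite-tracker-decomposition} at $\lambda=1$, where $m_1=\vv_t$, gives $R_t(h_t)-R_t(\vv_t)=\norm{e_t}_{L^2(P_t)}^2$. The first condition in \eqref{eq:app-tracked-convergence-conditions} therefore reads $\rho\norm{e_t}\le\gamma\norm{\vv_t}$. Substituting,
\begin{equation*}
    f'(t)\le-(1-\gamma)\norm{\vv_t}^2\le-c(1-\gamma)f(t)^q,
\end{equation*}
where the last step uses the slope inequality. In particular $f$ is nonincreasing and nonnegative.

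\textbf{Step 3: integrating.} For $q=1$, Grönwall's inequality gives $f(t)\le f(0)e^{-c(1-\gamma)t}$. For $q>1$, I work on the set where $f>0$. There $\frac{\dm}{\dm t}f^{1-q}=(1-q)f^{-q}f'\ge c(q-1)(1-\gamma)$. Integrating gives $f(t)^{1-q}\ge f(0)^{1-q}+c(q-1)(1-\gamma)t$, which rearranges to the stated bound.

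If $f$ hits zero at some time, monotonicity keeps it at zero afterwards, so the bound holds trivially. Both bounds tend to $0$. Since $D_E=\sqrt{2f}$, we get $D_E(P_t,Q)\to0$.

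\textbf{Main obstacle.} The delicate point is Step 1. One must justify that the chain rule applies with the perturbed velocity $u_t$, not just $\vv_t$: this needs $u_t\in L^2(P_t)$ and absolute continuity of $t\mapsto f(t)$. That is precisely the assumed chain-rule regularity, and $\norm{\vv_t}\le2$ together with square integrability of the tracker make the integrals finite. I would state this assumption explicitly. I would also note that the differential inequalities hold only for almost every $t$, and handle this by integrating them in integral form.
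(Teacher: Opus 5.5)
Your proposal is correct and follows essentially the paper's route. The paper also applies the Wasserstein chain rule with $u_t=\vv_t+\rho e_t$, then uses Cauchy--Schwarz and the excess-risk identity $R_t(h_t)-R_t(\vv_t)=\norm{e_t}_{L^2(P_t)}^2$ to reach $\frac{\dm}{\dm t}\cF(P_t)\le -c(1-\gamma)\cF(P_t)^q$, and finally integrates this scalar inequality. Your explicit handling of the $q>1$ integration on $\{f>0\}$, of a possible hitting time of zero, and of almost-everywhere differentiability spells out details the paper leaves implicit.
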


\begin{proof}
    Since $\vv_t=-\nabla(\delta\cF/\delta P_t)$, the Wasserstein chain rule and Cauchy--Schwarz give
    \begin{equation}
        \frac{\dm}{\dm t}\cF(P_t)
        =
        -\langle\vv_t,u_t\rangle_{L^2(P_t)}
        \leq
        -(1-\gamma)\norm{\vv_t}_{L^2(P_t)}^2
        \leq
        -c(1-\gamma)\cF(P_t)^q.
    \end{equation}
    If $\cF(P_0)>0$, integrating this scalar differential inequality proves \eqref{eq:app-tracked-flow-convergence}.
    If $\cF(P_0)=0$, the same inequality and nonnegativity imply $\cF(P_t)=0$ for all $t\geq0$.
    In either case, properness and $D_E^2(P_t,Q)=2\cF(P_t)$ give the stated convergence.
\end{proof}

Broader quantile-space analyses characterize negative-distance MMD flows on the real line~\citep{duong2024distance}; the following elementary bound isolates a sufficient slope regime for tracked convergence.
\begin{proposition}[A concrete one-dimensional slope regime]
    \label{prop:one-dimensional-slope}
    Let $P_t$ and $Q$ be supported on a common finite interval $I\subset\R$, and suppose $P_t$ has a density $p_t\geq m>0$ almost everywhere on $I$, uniformly in $t$.
    Writing $F_t$ and $F_Q$ for their cumulative distribution functions, the exact energy-distance field satisfies
    \begin{equation}
        \norm{\vv_t}_{L^2(P_t)}^2
        =4\int_I(F_t-F_Q)^2p_t\,\dm x
        \geq4m\,\cF(P_t).
        \label{eq:app-one-dimensional-slope}
    \end{equation}
    Hence the slope condition in \thmref{thm:tracked-flow-convergence} holds with $q=1$ and $c=4m$; under the theorem's remaining assumptions,
    $\cF(P_t)\leq\cF(P_0)\exp[-4m(1-\gamma)t]$.
\end{proposition}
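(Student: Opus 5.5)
The plan is to compute both sides of \eqref{eq:app-one-dimensional-slope} explicitly in one dimension, where bearings reduce to signs and everything can be written in terms of cumulative distribution functions. I then invoke \thmref{thm:tracked-flow-convergence} with $q=1$. There are three steps.

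\textbf{Step 1: the field.} For $d=1$, the bearing $(y-x)/|y-x|$ equals $\operatorname{sign}(y-x)$, with value $0$ at coincidence as fixed in \appref{app:theory}. By \eqref{eq:app-population-velocity},
\[
\vv_t(x)=\Eb{\operatorname{sign}(\xx_{\mathrm r}-x)}-\Eb{\operatorname{sign}(\xx_{\mathrm s}-x)}.
\]
For a law with CDF $F$, we have $\Eb{\operatorname{sign}(Y-x)}=1-F(x)-F(x^-)$.

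For $P_t$, which has a density, this equals $1-2F_t(x)$ for every $x$. For $Q$, atoms are allowed, and the expression equals $1-2F_Q(x)$ except on the at most countable atom set of $Q$. That set is $P_t$-null because $P_t$ is absolutely continuous. Hence $\vv_t=2(F_t-F_Q)$ holds $P_t$-almost everywhere, and
\[
\norm{\vv_t}_{L^2(P_t)}^2=4\int_I(F_t-F_Q)^2p_t\,\dm x.
\]
This gives the equality in \eqref{eq:app-one-dimensional-slope}.

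\textbf{Step 2: the energy.} I use the standard identity $\Eb{|X-Y|}=\int_{\R}\big[F_X(1-F_Y)+F_Y(1-F_X)\big]\dm x$ for independent $X,Y$ with finite first moments. It follows from $|X-Y|=\int_{\R}\big(\mathbf 1\{X\le x<Y\}+\mathbf 1\{Y\le x<X\}\big)\dm x$ and Fubini.

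Substituting into \eqref{eq:app-energy-functional}, all the $F_X+F_Y$ terms cancel and the products combine to
\[
D_E^2(P_t,Q)=2\int_{\R}(F_t-F_Q)^2\,\dm x,
\qquad\text{so}\qquad
\cF(P_t)=\int_{\R}(F_t-F_Q)^2\,\dm x.
\]
Since both laws are supported on the interval $I$, the two CDFs agree outside $I$: both are $0$ to the left and $1$ to the right. So the integral may be restricted to $I$.

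\textbf{Step 3: the bound and the rate.} Using $p_t\ge m$ almost everywhere on $I$,
\[
4\int_I(F_t-F_Q)^2p_t\,\dm x\;\ge\;4m\int_I(F_t-F_Q)^2\,\dm x\;=\;4m\,\cF(P_t).
\]
This is the slope inequality in \eqref{eq:app-tracked-convergence-conditions} with $q=1$ and $c=4m$. Because $m$ is uniform in $t$, the inequality holds along the whole trajectory. The $q=1$ case of \eqref{eq:app-tracked-flow-convergence} then yields $\cF(P_t)\le\cF(P_0)\exp[-4m(1-\gamma)t]$.

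The main point needing care is Step 1. The data law $Q$ may have atoms, where the zero-at-coincidence convention makes $\Eb{\operatorname{sign}(\xx_{\mathrm r}-x)}$ differ from $1-2F_Q(x)$. This is resolved by observing that the discrepancy sits on a countable set, which is invisible to $P_t$. Everything else is the classical Cramér-distance computation.
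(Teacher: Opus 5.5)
Your proposal is correct and follows the paper's route. You obtain $\cF(P_t)=\int_I(F_t-F_Q)^2\,\dm x$ from the one-dimensional (Cram\'er) energy-distance identity, show $\vv_t=2(F_t-F_Q)$ off a $P_t$-null set, apply $p_t\geq m$, and invoke \thmref{thm:tracked-flow-convergence} with $q=1$ and $c=4m$. You also supply details the paper leaves implicit: the Fubini derivation of the identity, and an explicit treatment of the atoms of $Q$ under the zero-at-coincidence convention, where the paper instead differentiates the first variation at Lebesgue-almost every $x$.
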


\begin{proof}
    The one-dimensional energy-distance identity gives
    $D_E^2(P_t,Q)=2\int_I(F_t-F_Q)^2\,\dm x$, and therefore
    $\cF(P_t)=\int_I(F_t-F_Q)^2\,\dm x$.
    Differentiating the first variation in \eqref{eq:first-variation-potential} at Lebesgue-almost every $x$ yields
    $\vv_t(x)=2(F_t(x)-F_Q(x))$.
    Integrating its square against $p_t$ and using $p_t\geq m$ proves \eqref{eq:app-one-dimensional-slope}; the rate then follows from \thmref{thm:tracked-flow-convergence}.
\end{proof}

\paragraph{Meaning for tracker training.}
The first condition in \eqref{eq:app-tracked-convergence-conditions} is a population-level certificate stated in the tracker's own regression geometry: full tracking requires its root excess risk to remain below the field norm, while mixing by $\rho<1$ attenuates tracker error.
The algorithm alone does not guarantee that a finite neural tracker maintains this relative certificate as the signal vanishes.
Absolute-error neighborhoods, smoothing, reweighting, and corridor conditioning are unified by \propref{prop:practical-mixed-field-stability} below.

\subsubsection{Finite-generator reference recursion}
\label{app:convergence-scope}

The results below analyze the stated first-order stochastic recursion and isolate how field-estimation error enters.
They do not cover AdamW momentum or preconditioning or the $\lambda<1$ corridor dynamics, and they do not establish the displayed bias and variance conditions for EMA-evolving quantities or the jointly trained tracker.

Suppose the parameterized objective $\cF(\mtheta)$ is lower bounded by $\cF_\star$ and $L$-smooth, with $L>0$, on a region containing every update segment between $\mtheta_t$ and $\mtheta_{t+1}$.
Let $\mathcal H_t$ be the history before iteration $t$, and absorb any fixed algorithmic loss scaling into the learning rate.
Write the parameter update as $\mtheta_{t+1}=\mtheta_t-\eta_t g_t$ and assume
$\Eb{g_t\mid\mathcal H_t}=\nabla\cF(\mtheta_t)+b_t$ and
$\Eb{\norm{g_t-\Eb{g_t\mid\mathcal H_t}}_2^2\mid\mathcal H_t}\leq\sigma^2$.
Here $b_t$ collects systematic field-estimation error, while $\sigma^2$ controls the residual stochastic variation.

\begin{proposition}[Optimization with finite field-estimation error]
    \label{prop:tracker-aware-optimization}
    For constant step size $\eta_t=\eta\leq 1/L$, the stochastic update satisfies
    \begin{equation}
        \min_{0\leq t<T}
        \Eb{\norm{\nabla_{\mtheta}\cF(\mtheta_t)}_2^2}
        \leq
        \frac{2(\cF(\mtheta_0)-\cF_\star)}{\eta T}
        +
        L\eta\sigma^2
        +
        \frac{1}{T}
        \sum_{t=0}^{T-1}
        \Eb{\norm{b_t}_2^2}.
        \label{eq:app-biased-sgd-stationarity}
    \end{equation}
\end{proposition}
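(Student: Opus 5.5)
The plan is the standard descent-lemma argument for biased SGD, with the bias absorbed by a Young-type inequality. Write $\nabla_t=\nabla_{\mtheta}\cF(\mtheta_t)$ and decompose $g_t=\nabla_t+b_t+\xi_t$, where $\xi_t=g_t-\Eb{g_t\mid\mathcal H_t}$ satisfies $\Eb{\xi_t\mid\mathcal H_t}=0$ and $\Eb{\norm{\xi_t}_2^2\mid\mathcal H_t}\leq\sigma^2$. By hypothesis $\cF$ is $L$-smooth on a region containing the segment from $\mtheta_t$ to $\mtheta_{t+1}$. The descent lemma therefore applies along that segment:
\begin{equation*}
\cF(\mtheta_{t+1})\leq\cF(\mtheta_t)-\eta\langle\nabla_t,g_t\rangle+\frac{L\eta^2}{2}\norm{g_t}_2^2 .
\end{equation*}

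Next take the conditional expectation given $\mathcal H_t$. Since $\mtheta_t$ and $b_t$ are $\mathcal H_t$-measurable, the inner product becomes $\norm{\nabla_t}^2+\langle\nabla_t,b_t\rangle$. The orthogonality of $\xi_t$ gives $\Eb{\norm{g_t}^2\mid\mathcal H_t}=\norm{\nabla_t+b_t}^2+\Eb{\norm{\xi_t}^2\mid\mathcal H_t}\leq\norm{\nabla_t+b_t}^2+\sigma^2$. Using $\eta\leq1/L$, so that $L\eta^2/2\leq\eta/2$, the deterministic part is bounded by
\begin{equation*}
-\eta\norm{\nabla_t}^2-\eta\langle\nabla_t,b_t\rangle+\frac{\eta}{2}\norm{\nabla_t+b_t}^2=-\frac{\eta}{2}\norm{\nabla_t}^2+\frac{\eta}{2}\norm{b_t}^2 .
\end{equation*}
This polarization identity is the key step: it cancels the cross term exactly, so no separate Young inequality is needed. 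The result is
\begin{equation*}
\Eb{\cF(\mtheta_{t+1})\mid\mathcal H_t}\leq\cF(\mtheta_t)-\frac{\eta}{2}\norm{\nabla_t}^2+\frac{\eta}{2}\norm{b_t}^2+\frac{L\eta^2\sigma^2}{2}.
\end{equation*}

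To finish, take total expectations and sum over $t=0,\dots,T-1$; the left side telescopes. Bound $\Eb{\cF(\mtheta_T)}\geq\cF_\star$, multiply through by $2/(\eta T)$, and lower-bound the average of $\Eb{\norm{\nabla_t}^2}$ by its minimum. This yields exactly \eqref{eq:app-biased-sgd-stationarity}: the initial gap term $2(\cF(\mtheta_0)-\cF_\star)/(\eta T)$, the variance term $L\eta\sigma^2$, and the averaged squared bias.

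The only delicate step is measure-theoretic bookkeeping. The descent lemma must apply almost surely, which the assumption that every update segment lies in the smooth region provides. The expectations must also be finite, and for this I would assume, or note as implicit, square integrability of $g_t$ and $b_t$. The coefficient matching that produces $\frac{\eta}{2}\norm{b_t}^2$ uses $L\eta\leq1$ and needs no further case analysis.
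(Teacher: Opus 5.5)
Your proposal is correct and follows the paper's proof essentially step for step. Both use the descent lemma on the smooth region, the bound $\Eb{\norm{g_t}_2^2\mid\mathcal H_t}\leq\norm{\nabla_t+b_t}_2^2+\sigma^2$, and the same polarization identity (the paper writes it as $-\langle a_t,\bar g_t\rangle=(\norm{b_t}_2^2-\norm{a_t}_2^2-\norm{\bar g_t}_2^2)/2$) combined with $\eta\leq1/L$ to cancel the cross term, then telescope, use $\cF_\star$, and bound the minimum by the average.
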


\begin{proof}
    Let $a_t=\nabla\cF(\mtheta_t)$, $\bar g_t=\Eb{g_t\mid\mathcal H_t}=a_t+b_t$, and let $\E_t$ condition on $\mathcal H_t$.
    By $L$-smoothness,
    \begin{equation}
        \begin{aligned}
            \E_t[\cF(\mtheta_{t+1})]
             & \leq
            \cF(\mtheta_t)
            -\eta\langle a_t,\bar g_t\rangle
            +\frac{L\eta^2}{2}\E_t[\norm{g_t}_2^2]
            \\
             & \leq
            \cF(\mtheta_t)
            -\frac{\eta}{2}\norm{a_t}_2^2
            +\frac{\eta}{2}\norm{b_t}_2^2
            +\frac{L\eta^2\sigma^2}{2},
        \end{aligned}
    \end{equation}
    where the second inequality uses
    $-\langle a_t,\bar g_t\rangle=(\norm{b_t}_2^2-\norm{a_t}_2^2-\norm{\bar g_t}_2^2)/2$,
    $\E_t\norm{g_t}_2^2\leq\norm{\bar g_t}_2^2+\sigma^2$, and $\eta\leq1/L$.
    Taking expectations, summing over $t=0,\ldots,T-1$, and using $\cF(\mtheta_T)\geq\cF_\star$ proves \eqref{eq:app-biased-sgd-stationarity}.
\end{proof}

Under the local-gradient regularity conditions of \thmref{thm:three-body-equivalence}, exact bearings have $b_t=0$ relative to $\cF$; at $\lambda=1$, smoothed bearings likewise have $b_t=0$ relative to $\cF_\varepsilon$.
For the exact-bearing tracked algorithm at $\lambda=1$, let $e_t(\xx,\cc)=h_{\mphi,t}(\xx,\cc)-\vv_t(\xx,\cc)$.
The conditional expectation is $\vv_t+\rho e_t$, so the parameter-gradient bias in \propref{prop:tracker-aware-optimization} is
$b_t=-\rho\Eb{J_t^{\top}e_t\mid\mathcal H_t}$, where $J_t=\nabla_{\mtheta}\mg_{\mtheta_t}(\zz_t,\cc_t)$.
Jensen's inequality gives
$\norm{b_t}_2^2\leq\rho^2\Eb{\norm{J_t^{\top}e_t}_2^2\mid\mathcal H_t}$.
Combining this estimate with \eqref{eq:app-biased-sgd-stationarity} makes the Jacobian-weighted dependence explicit:
\begin{equation}
    \min_{0\leq t<T}
    \Eb{\norm{\nabla_{\mtheta}\cF(\mtheta_t)}_2^2}
    \leq
    \frac{2(\cF(\mtheta_0)-\cF_\star)}{\eta T}
    +L\eta\sigma^2
    +\frac{\rho^2}{T}
    \sum_{t=0}^{T-1}\Eb{\norm{J_t^{\top}e_t}_2^2}.
    \label{eq:app-tracker-error-stationarity}
\end{equation}
If $\norm{J_t}_{\mathrm{op}}\leq\kappa_J$ almost surely, the final term is further bounded by
$\rho^2\kappa_J^2T^{-1}\sum_t\Eb{\norm{e_t}_2^2}$.
Thus the Jacobian-weighted tracker error in \corref{cor:parameter-tracker-decomposition} controls the optimization neighborhood; under the operator-norm bound, the output-space excess risk in \thmref{thm:finite-tracker-decomposition} does as well.
An oracle tracker eliminates this term, but residual stochastic variance $\sigma^2$ may remain.
At $\lambda\neq1$, intra-source reweighting changes the reference objective from $\cF$ to $\cF_{\lambda}$, and query conditioning can change the mean direction even relative to $\cF_{\lambda}$.
Choosing $\cF_\varepsilon$ as the reference removes smoothing bias at $\lambda=1$; relative to the unsmoothed $\cF$, the field difference enters through $b_t$ when bounded.

Under explicit additional assumptions, this reference recursion also yields an expected convergence result.
Consider the exact-bearing $\lambda=1$ objective, assume the target is realizable, i.e., some $\mtheta^\star$ satisfies $\cF(\mtheta^\star)=0$, and suppose the parameterized objective satisfies
$\norm{\nabla\cF(\mtheta)}_2^2\geq2\mu\cF(\mtheta)$ with $\mu>0$ throughout the training region.
Assume also that $\norm{b_t}_2\leq\chi\norm{\nabla\cF(\mtheta_t)}_2$ almost surely for some $\chi<1$.
For tracked scattering, the stronger but directly interpretable condition
$\rho^2\Eb{\norm{J_t^\top e_t}_2^2\mid\mathcal H_t}\leq\chi^2\norm{\nabla\cF(\mtheta_t)}_2^2$
is sufficient by Jensen's inequality.
Let positive deterministic step sizes satisfy
$\eta_t\leq\min\{(1-\chi)/(L(1+\chi)^2),[\mu(1-\chi)]^{-1}\}$.

\begin{corollary}[Finite-generator convergence under PL and relative tracking]
    \label{cor:finite-generator-convergence}
    Under the preceding assumptions, if $\sum_t\eta_t=\infty$ and $\sum_t\eta_t^2<\infty$, then
    \begin{equation}
        \Eb{\cF(\mtheta_{t+1})\mid\mathcal H_t}
        \leq
        \left[1-\mu(1-\chi)\eta_t\right]\cF(\mtheta_t)
        +\frac{L\eta_t^2\sigma^2}{2}.
        \label{eq:app-finite-generator-convergence}
    \end{equation}
    Consequently, $\Eb{D_E^2(P_{\mtheta_t},Q)}\to0$.
\end{corollary}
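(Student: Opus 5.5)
The plan is to follow the descent-lemma computation in the proof of \propref{prop:tracker-aware-optimization}, but to keep the bias relative to the gradient instead of bounding it crudely. With $a_t=\nabla\cF(\mtheta_t)$ and $\bar g_t=a_t+b_t$, $L$-smoothness along the update segment gives
\begin{equation*}
\E_t[\cF(\mtheta_{t+1})]\leq\cF(\mtheta_t)-\eta_t\langle a_t,\bar g_t\rangle+\frac{L\eta_t^2}{2}\left(\norm{\bar g_t}_2^2+\sigma^2\right).
\end{equation*}
The relative-bias condition $\norm{b_t}_2\leq\chi\norm{a_t}_2$ and Cauchy--Schwarz then yield $\langle a_t,\bar g_t\rangle\geq(1-\chi)\norm{a_t}_2^2$ and $\norm{\bar g_t}_2^2\leq(1+\chi)^2\norm{a_t}_2^2$. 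For the tracked case, the stated sufficient condition implies the almost-sure bias bound through the Jensen estimate $\norm{b_t}_2^2\leq\rho^2\Eb{\norm{J_t^\top e_t}_2^2\mid\mathcal H_t}$ derived just before the corollary. I would record this reduction first, so that afterwards only the abstract bias condition is used.

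Next I would insert the step-size restriction $\eta_t\leq(1-\chi)/(L(1+\chi)^2)$. It gives $\frac{L\eta_t^2}{2}(1+\chi)^2\norm{a_t}_2^2\leq\frac{\eta_t(1-\chi)}{2}\norm{a_t}_2^2$, so the net gradient coefficient is at most $-\frac{\eta_t(1-\chi)}{2}\norm{a_t}_2^2$. The PL inequality $\norm{a_t}_2^2\geq2\mu\cF(\mtheta_t)$ turns this into $-\mu(1-\chi)\eta_t\cF(\mtheta_t)$, which is exactly \eqref{eq:app-finite-generator-convergence}. Realizability makes $\cF_\star=0$, and together with $\cF\geq0$ it anchors the recursion at zero. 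The second step-size bound $\eta_t\leq[\mu(1-\chi)]^{-1}$ ensures the contraction factor $1-\mu(1-\chi)\eta_t$ lies in $[0,1)$.

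For convergence I would take total expectations to get $u_{t+1}\leq(1-\beta_t)u_t+c_t$, where $u_t=\Eb{\cF(\mtheta_t)}$, $\beta_t=\mu(1-\chi)\eta_t\in[0,1]$, $\sum_t\beta_t=\infty$ and $c_t=L\sigma^2\eta_t^2/2$ is summable. I would then invoke the standard deterministic lemma (Robbins--Siegmund / Polyak): if $c_t$ is summable and $\sum_t\beta_t=\infty$, then $u_t\to0$. In outline, unrolling gives $u_t\leq u_0\prod_s(1-\beta_s)+\sum_s c_s\prod_{r>s}(1-\beta_r)$. The first term vanishes because $\prod(1-\beta_s)\leq\exp(-\sum\beta_s)\to0$. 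The second term vanishes by splitting the sum at an index $K$: the tail $\sum_{s\geq K}c_s$ is small, and each of the finitely many earlier terms is damped by a vanishing product. Finally, $D_E^2=2\cF$ gives $\Eb{D_E^2(P_{\mtheta_t},Q)}\to0$.

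The main obstacle is mostly bookkeeping. The first point is that the smoothness bound must hold on every update segment, which is assumed via the training region, as is PL. The second is that the $\sigma^2$ assumption bounds the conditional variance, so $\E_t\norm{g_t}^2=\norm{\bar g_t}^2+(\text{variance})$ is used with that conditional variance rather than an unconditional one. Neither step needs new ideas beyond the earlier proof.
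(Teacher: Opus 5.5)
Your proposal is correct and follows the paper's proof up to the recursion. Both use the same smoothness bound with the conditional variance, the same relative-bias consequences $\langle a_t,a_t+b_t\rangle\geq(1-\chi)\norm{a_t}_2^2$ and $\norm{a_t+b_t}_2^2\leq(1+\chi)^2\norm{a_t}_2^2$, the first step-size cap, and then PL.

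The only difference is the final deterministic step. With $x_t=\Eb{\cF(\mtheta_t)}$ (your $u_t$), the paper observes that $x_t+\tfrac{L\sigma^2}{2}\sum_{k\geq t}\eta_k^2$ is nonincreasing and drops by at least $\mu(1-\chi)\eta_t x_t$ per step. This gives $\sum_t\eta_t x_t<\infty$ and convergence of $x_t$, hence $x_t\to0$, without needing $1-\mu(1-\chi)\eta_t\geq0$. Your unrolling-and-splitting argument does need that nonnegativity, and you correctly obtain it from the second step-size cap.
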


\begin{proof}
    Write $a_t=\nabla\cF(\mtheta_t)$.
    The relative-bias condition gives
    $\langle a_t,a_t+b_t\rangle\geq(1-\chi)\norm{a_t}_2^2$ and
    $\norm{a_t+b_t}_2^2\leq(1+\chi)^2\norm{a_t}_2^2$.
    Applying $L$-smoothness as in the proof of \propref{prop:tracker-aware-optimization} and using the step-size bound therefore yields
    \begin{equation}
        \Eb{\cF(\mtheta_{t+1})\mid\mathcal H_t}
        \leq
        \cF(\mtheta_t)
        -\frac{(1-\chi)\eta_t}{2}\norm{a_t}_2^2
        +\frac{L\eta_t^2\sigma^2}{2}.
    \end{equation}
    The parameter-space PL inequality proves \eqref{eq:app-finite-generator-convergence}.
    After taking expectations, set $x_t=\Eb{\cF(\mtheta_t)}$, $a=\mu(1-\chi)$, and $d=L\sigma^2/2$.
    Then $x_t+d\sum_{k=t}^{\infty}\eta_k^2$ decreases by at least $a\eta_t x_t$; convergence of this sequence, the vanishing tail, and $\sum_t\eta_t=\infty$ force $x_t\to0$.
    Finally, $D_E^2(P_{\mtheta_t},Q)=2\cF(\mtheta_t)$.
\end{proof}
The same proof applies to $\cF_\varepsilon$ under the corresponding smoothness, realizability, PL, and relative-tracking assumptions, yielding $\Eb{\cF_\varepsilon(\mtheta_t)}\to0$.

\paragraph{Interpretation.}
The unconstrained exact and smoothed $\lambda=1$ energies have the unique optimum $P=Q$; \thmref{thm:tracked-flow-convergence} and its smoothed counterpart show that sufficiently accurate tracking preserves convergence whenever the corresponding flow satisfies the stated slope condition.
A finite generator instead restricts the reachable distributions to $\{(\mg_{\mtheta})_\#p(\zz)\}$, and SGD optimizes a nonconvex parameterization; finite tracking, corridor conditioning, intra-source reweighting, and the choice of comparison geometry separate deployment from the exact full-space flow.
Accordingly, within this reference recursion, \eqref{eq:app-biased-sgd-stationarity} gives a finite-error stationarity bound, while \corref{cor:finite-generator-convergence} obtains expected energy-distance convergence only after adding realizability, parameter-space PL geometry, relative tracker accuracy, and diminishing step sizes.

\subsection{Additional Estimator Properties}

\subsubsection{Source-law necessity and variance optimality}
\label{app:minimal-three-body}
Fix a projectile $\xx_{\mathrm p}$ and let
$\beta_{\xx_{\mathrm p}}(\yy)=(\yy-\xx_{\mathrm p})/\norm{\yy-\xx_{\mathrm p}}$ away from coincidence and zero at coincidence.
For independent $\xx_{\mathrm r}\sim Q$ and $\xx_{\mathrm s}\sim P$, define
$T=\beta_{\xx_{\mathrm p}}(\xx_{\mathrm r})-\beta_{\xx_{\mathrm p}}(\xx_{\mathrm s})$ and
$\vv_P(\xx_{\mathrm p})=\Eb{T}$.
Consider square-integrable source-sampling rules whose measurable form and auxiliary-randomness law do not depend on the unknown $(P,Q)$; call such a rule universally unbiased when its expectation equals $\vv_P(\xx_{\mathrm p})$ for every $(P,Q)$.

\begin{proposition}[Source-law necessity and Rao--Blackwell optimality]
    \label{prop:triplet-minimality-optimality}
    No universally unbiased rule can omit either source law entirely.
    Among rules observing one independent draw from each law, $T$ is the unique deterministic universally unbiased rule and minimizes covariance in the positive-semidefinite order, hence also mean-squared error, among randomized universally unbiased rules.
    For the average $\bar T_K$ of $K$ independent copies,
    \begin{equation}
        \Eb{\norm{\bar T_K-\vv_P(\xx_{\mathrm p})}_2^2}
        =\frac{\operatorname{tr}\operatorname{Cov}_Q(\beta_{\xx_{\mathrm p}})
            +\operatorname{tr}\operatorname{Cov}_P(\beta_{\xx_{\mathrm p}})}{K}
        \leq\frac2K.
        \label{eq:app-triplet-optimal-variance}
    \end{equation}
\end{proposition}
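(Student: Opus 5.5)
I will prove the three assertions of \propref{prop:triplet-minimality-optimality} in order: necessity, then uniqueness and optimality, then the variance formula.

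\textbf{Necessity.} Suppose a universally unbiased rule uses no draw from $P$, so its law depends only on $Q$ and auxiliary randomness. Fix $Q$ and choose two laws $P\neq P'$ with $\vv_P(\xx_{\mathrm p})\neq\vv_{P'}(\xx_{\mathrm p})$. For example, take $P=\delta_{\xx_{\mathrm p}+\mathbf e}$ and $P'=\delta_{\xx_{\mathrm p}-\mathbf e}$, whose intra-source bearings are $\mathbf e$ and $-\mathbf e$. The rule then has the same expectation under both laws, which contradicts unbiasedness. The same argument with two choices of $Q$ handles omission of $Q$.

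\textbf{Uniqueness of the deterministic rule.} A deterministic rule observing one draw from each law is a measurable function $f(\xx_{\mathrm r},\xx_{\mathrm s})$. Unbiasedness for every pair of laws requires
\[
\iint f\,\dm Q\,\dm P=\int\beta\,\dm Q-\int\beta\,\dm P .
\]
Specializing to Dirac masses $Q=\delta_{\yy}$ and $P=\delta_{\yy'}$ gives $f(\yy,\yy')=\beta_{\xx_{\mathrm p}}(\yy)-\beta_{\xx_{\mathrm p}}(\yy')=T$ pointwise. I assume this class includes all point masses; this may be the delicate point if only restricted classes are intended.

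\textbf{Optimality among randomized rules.} A randomized rule is $f(\xx_{\mathrm r},\xx_{\mathrm s},U)$ with $U$ independent of the sources and having a fixed law. Its $U$-average $\bar f(\yy,\yy')=\E_U f$ is deterministic and universally unbiased, so $\bar f=T$ by the previous step. Hence $f=T+(f-\bar f)$, and the remainder has zero conditional mean given $(\xx_{\mathrm r},\xx_{\mathrm s})$. The cross covariance vanishes, so $\operatorname{Cov}(f)=\operatorname{Cov}(T)+\Eb{\operatorname{Cov}(f\mid\xx_{\mathrm r},\xx_{\mathrm s})}\succeq\operatorname{Cov}(T)$. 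This is the Rao--Blackwell step. Taking traces gives the MSE claim, since both rules share the mean $\vv_P$.

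\textbf{Variance formula.} The two bearings are independent, so
\[
\operatorname{Cov}(T)=\operatorname{Cov}_Q(\beta_{\xx_{\mathrm p}})+\operatorname{Cov}_P(\beta_{\xx_{\mathrm p}}).
\]
Averaging $K$ independent copies divides this by $K$. Each trace is at most $\Eb{\norm{\beta}^2}\le1$ because the bearings have norm at most one, which gives the bound $2/K$.

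\textbf{Main obstacle.} The delicate step is the pointwise identification $f=T$ in the uniqueness step. It needs universality over Dirac laws, and it needs the coincidence convention $\beta=0$ to match at $\yy=\xx_{\mathrm p}$. If only laws with finite first moments and zero collision probability are allowed, I would instead use mixtures of finitely many atoms plus a density argument. Bilinearity in $(Q,P)$ forces $f=T$ almost everywhere with respect to every product measure, which suffices for covariance statements.
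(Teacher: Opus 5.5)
Your proof is correct and follows essentially the same route as the paper: point masses with distinct bearings give necessity, Dirac laws force the mean of any one-draw-per-law rule to equal $T$ pointwise, the law of total covariance gives the Rao--Blackwell optimality, and independence together with $\norm{\beta_{\xx_{\mathrm p}}}_2\le 1$ gives the variance bound. The only difference is order of presentation. You prove deterministic uniqueness first and reduce randomized rules to it by averaging over $U$, whereas the paper applies the Dirac argument directly to $\Eb{A\mid\xx_{\mathrm r},\xx_{\mathrm s}}$ and reads off uniqueness from the covariance decomposition.
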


\begin{proof}
    Without a draw from $P$ (respectively $Q$), a rule's expectation cannot vary with that law, and two point masses with different bearings contradict universal unbiasedness.
    For any randomized one-draw-per-law rule $A$, taking $Q=\delta_{\rr}$ and $P=\delta_{\ss}$ forces
    $\Eb{A\mid\xx_{\mathrm r},\xx_{\mathrm s}}=T$ pointwise.
    Hence
    $\operatorname{Cov}(A)=\operatorname{Cov}(T)+\Eb{\operatorname{Cov}(A\mid\xx_{\mathrm r},\xx_{\mathrm s})}$, proving the optimality and deterministic uniqueness claims.
    Independence gives the variance identity, averaging supplies $1/K$, and each covariance trace is at most one because a bearing has norm at most one.
\end{proof}
Also $\norm{T}_2\leq2$ almost surely.
With zero collision probability, the numerator in \eqref{eq:app-triplet-optimal-variance} equals
$2-\norm{\EEb{\xx_{\mathrm r}\sim Q}{\beta_{\xx_{\mathrm p}}(\xx_{\mathrm r})}}_2^2
    -\norm{\EEb{\xx_{\mathrm s}\sim P}{\beta_{\xx_{\mathrm p}}(\xx_{\mathrm s})}}_2^2$.
The source-law statement concerns universal dependence on both laws, not whether every randomized realization must query both.

\subsection{Implementation and Interpretive Derivations}

\subsubsection{Denominator smoothing as a proper objective}
\label{app:denominator-smoothing}
Related work constructs differentiable approximations of negative-distance kernels for Wasserstein gradient flows~\citep{rux2026smoothed}; here we analyze the denominator form used in our implementation.
For $\varepsilon>0$, define
$\psi_\varepsilon(r)=r-\varepsilon\log(1+r/\varepsilon)$ and let $\cF_\varepsilon$ be \eqref{eq:app-energy-functional} with every distance $r$ replaced by $\psi_\varepsilon(r)$.

\begin{proposition}[Proper objective induced by denominator smoothing]
    \label{prop:denominator-smoothing}
    Wherever the first variation is valid, the denominator-smoothed $\lambda=1$ field satisfies
    \begin{equation}
        \vv_\varepsilon(\xx)
        =\EEb{\xx_{\mathrm r}\sim Q}{\frac{\xx_{\mathrm r}-\xx}{\norm{\xx_{\mathrm r}-\xx}_2+\varepsilon}}
        -\EEb{\xx_{\mathrm s}\sim P}{\frac{\xx_{\mathrm s}-\xx}{\norm{\xx_{\mathrm s}-\xx}_2+\varepsilon}}
        =-\nabla_{\xx}\frac{\delta\cF_\varepsilon}{\delta P}(\xx).
        \label{eq:app-smoothed-potential-field}
    \end{equation}
    Moreover, $\cF_\varepsilon(P)\geq0$ and, under finite first moments, equality holds if and only if $P=Q$.
\end{proposition}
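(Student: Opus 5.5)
The argument has two parts: the field identity (a calculus fact about the radial potential $\psi_\varepsilon$) and properness (a negative-type argument for the kernel $\psi_\varepsilon(\norm{\cdot})$).

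\textbf{Field identity.} Compute $\psi_\varepsilon'(r)=1-\frac{\varepsilon}{\varepsilon+r}=\frac{r}{r+\varepsilon}$. Hence, for $\yy\neq\xx$,
\[
\nabla_{\xx}\psi_\varepsilon(\norm{\xx-\yy})=\frac{\norm{\xx-\yy}}{\norm{\xx-\yy}+\varepsilon}\cdot\frac{\xx-\yy}{\norm{\xx-\yy}}=\frac{\xx-\yy}{\norm{\xx-\yy}+\varepsilon}.
\]
This expression extends continuously by zero at $\xx=\yy$, which matches the zero-at-coincidence convention; in fact $\psi_\varepsilon(\norm{\cdot})$ is $C^1$, since $\psi_\varepsilon(r)\sim r^2/(2\varepsilon)$ near $0$. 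The first variation of $\cF_\varepsilon$ is obtained exactly as in \eqref{eq:app-first-variation}, using the symmetry of the model--model term:
\[
\frac{\delta\cF_\varepsilon}{\delta P}(\xx)=\Eb{\psi_\varepsilon(\norm{\xx-\xx_{\mathrm r}})}-\Eb{\psi_\varepsilon(\norm{\xx-\xx_{\mathrm s}})}.
\]
Differentiation under the expectation is justified because the integrand gradient has norm at most $1$ and $\psi_\varepsilon(r)\le r$ gives integrability under finite first moments. Negating the gradient yields \eqref{eq:app-smoothed-potential-field}.

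\textbf{Properness.} Write $2\cF_\varepsilon(P)=2\E\,\psi_\varepsilon(\norm{X-Y})-\E\,\psi_\varepsilon(\norm{X-X'})-\E\,\psi_\varepsilon(\norm{Y-Y'})$, the energy-type discrepancy for the kernel $\kappa(\xx)=\psi_\varepsilon(\norm{\xx})$. The plan is to show that $-\kappa$ is conditionally strictly positive definite. Since $\kappa(0)=0$, it suffices to exhibit a Lévy--Khintchine-type representation. The key observation is that $\phi(s)=\psi_\varepsilon(\sqrt s)$ is a Bernstein function: $\phi(0)=0$, $\phi\ge0$, and
\[
\phi'(s)=\frac{1}{2(\sqrt s+\varepsilon)}
\]
is completely monotone, because $1/(\varepsilon+t)$ is completely monotone and $\sqrt s$ is a Bernstein function (a completely monotone function composed with a Bernstein function is completely monotone). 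Therefore
\[
\phi(s)=\int_0^\infty(1-e^{-ts})\,\nu(\dm t)
\]
for a nonzero Lévy measure $\nu$, so
\[
\psi_\varepsilon(\norm{\xx})=\int_0^\infty\bigl(1-e^{-t\norm{\xx}^2}\bigr)\,\nu(\dm t).
\]
Substituting into the discrepancy and applying Fubini gives
\[
2\cF_\varepsilon(P)=\int_0^\infty \mathrm{MMD}^2_{G_t}(P,Q)\,\nu(\dm t),
\]
where $G_t$ is the Gaussian kernel of bandwidth parameter $t$. Each term is nonnegative, so $\cF_\varepsilon(P)\ge0$. If the integral vanishes, then for $\nu$-a.e. $t$ the Gaussian MMD vanishes; Gaussian kernels are characteristic, so $P=Q$. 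The converse direction is trivial.

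\textbf{Main obstacle.} The hard part is justifying Fubini when the three expectations are only individually finite. I would handle this as in the standard energy-distance proof: first work with bounded truncations or with the Fourier form $\int|\hat P-\hat Q|^2\,\mu(\dm\omega)$, for which the Gaussian mixture gives a positive spectral density, and then pass to the limit by monotone convergence using $\psi_\varepsilon(r)\le r$ and finite first moments. An alternative route to properness bypasses the Bernstein step: $\psi_\varepsilon(\norm{\cdot})$ has a strictly positive generalized spectral measure, so one can directly reuse the Székely--Rizzo argument cited earlier for $D_E^2$.
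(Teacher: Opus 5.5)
Your proposal is correct and follows the paper's proof. Both derive the field identity from $\psi_\varepsilon'(r)=r/(r+\varepsilon)$, and both obtain properness from the Bernstein representation of $t\mapsto\psi_\varepsilon(\sqrt t)$, which writes $2\cF_\varepsilon$ as a L\'evy mixture of Gaussian-kernel $\mathrm{MMD}^2$ terms; the paper checks complete monotonicity with the explicit Laplace integral $(\sqrt t+\varepsilon)^{-1}=\int_0^\infty e^{-u\varepsilon}e^{-u\sqrt t}\,\dm u$ rather than the composition rule, and it uses $\psi_\varepsilon(\sqrt t)/t\to0$ to rule out a linear drift term, a step you should add (it follows from $\psi_\varepsilon(r)\le r$). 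The Fubini step you flag is not a real obstacle: apply Tonelli to each of the three nonnegative terms separately, each finite since $\psi_\varepsilon(r)\le r$, and combine the three finite $\nu$-integrals directly, with no truncation or Fourier argument needed.
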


\begin{proof}
    Since $\psi_\varepsilon'(r)=r/(r+\varepsilon)$,
    $\nabla_{\xx}\psi_\varepsilon(\norm{\xx-\yy}_2)=(\xx-\yy)/(\norm{\xx-\yy}_2+\varepsilon)$, with zero gradient at coincidence.
    Applying this identity to the first variation proves \eqref{eq:app-smoothed-potential-field}; the frozen-target pullback gives the corresponding local parameter gradient.

    For properness, write $\psi_\varepsilon(\sqrt t)=f_\varepsilon(t)$.
    Because $e^{-u\sqrt t}$ is completely monotone in $t$, the identity $(\sqrt t+\varepsilon)^{-1}=\int_0^\infty e^{-u\varepsilon}e^{-u\sqrt t}\,\dm u$ shows that $f_\varepsilon'(t)=1/[2(\sqrt t+\varepsilon)]$ is also completely monotone.
    Since $f_\varepsilon(0)=0$, $f_\varepsilon(t)/t\to0$ as $t\to\infty$, and $f_\varepsilon$ is nonconstant, the Bernstein-function representation theorem~\citep{schilling2012bernstein} gives a L\'evy--Khintchine representation with neither constant nor linear term:
    $f_\varepsilon(t)=\int_{(0,\infty)}(1-e^{-st})\,\nu_\varepsilon(\dm s)$ for a nonzero positive L\'evy measure $\nu_\varepsilon$.
    Consequently,
    $2\cF_\varepsilon(P)=\int_{(0,\infty)}\mathrm{MMD}_{k_s}^2(P,Q)\,\nu_\varepsilon(\dm s)$ with Gaussian kernels $k_s(\xx,\yy)=e^{-s\norm{\xx-\yy}_2^2}$.
    Each $k_s$ is characteristic for $s>0$, proving nonnegativity and identity of indiscernibles; finite first moments suffice because $0\leq\psi_\varepsilon(r)\leq r$.
\end{proof}
The map $\aa\mapsto\aa/(\norm{\aa}_2+\varepsilon)$ is globally $1/\varepsilon$-Lipschitz.
Consequently, $\vv_\varepsilon$ is $2/\varepsilon$-Lipschitz in its spatial argument and $1/\varepsilon$-Lipschitz in $P$ under $W_1$; for fixed $Q$, the associated McKean--Vlasov characteristic dynamics are therefore well posed under finite first moments.
The smoothed source estimator remains conditionally unbiased for $\vv_\varepsilon$; hence the frozen-target, tracker-decomposition, and tracked-convergence arguments apply with $(\cF,\vv)$ replaced by $(\cF_\varepsilon,\vv_\varepsilon)$ under the corresponding chain-rule and slope assumptions.
Replacing the model--model coefficient by $\lambda$ gives the analogous smoothed $\lambda$-weighted functional; before corridor conditioning, its negative-gradient field equals the corresponding source mean.
The properness claim above is specific to $\lambda=1$.
Away from coincidence, the exact and smoothed source bearings differ in norm by
$\varepsilon/(\norm{\yy-\xx}_2+\varepsilon)$; both are zero at coincidence. Therefore
\begin{equation}
    \norm{\vv_\varepsilon(\xx)-\vv(\xx)}_2
    \leq\EEb{\xx_{\mathrm r}\sim Q}{\frac{\varepsilon\mathbf{1}\{\xx_{\mathrm r}\neq\xx\}}
        {\norm{\xx_{\mathrm r}-\xx}_2+\varepsilon}}
    +\EEb{\xx_{\mathrm s}\sim P}{\frac{\varepsilon\mathbf{1}\{\xx_{\mathrm s}\neq\xx\}}
        {\norm{\xx_{\mathrm s}-\xx}_2+\varepsilon}}
    \longrightarrow0
    \quad(\varepsilon\downarrow0).
    \label{eq:app-smoothed-bearing-bias}
\end{equation}
The convergence follows by dominated convergence; if both source distances are at least $\delta>0$, the bound is at most $2\varepsilon/(\delta+\varepsilon)$.
Thus smoothing retains a proper objective while changing its geometry, and its field converges pointwise to the exact bearing field as $\varepsilon\downarrow0$.

\subsubsection{Near-endpoint surrogate control}
\label{app:near-endpoint-control}
Let $\psi$ be either the exact potential $\psi(r)=r$ or the smoothed potential $\psi_\varepsilon$ above, and define
$\beta_\psi(\aa)=\psi'(\norm{\aa}_2)\aa/\norm{\aa}_2$ away from zero and $\beta_\psi(0)=0$.
In both cases $\norm{\beta_\psi}_2\leq1$, and
\begin{align*}
    D_\psi^2(P,Q)
     & =2\Eb{\psi(\norm{\xx-\yy}_2)}
    -\Eb{\psi(\norm{\xx-\xx'}_2)}
    -\Eb{\psi(\norm{\yy-\yy'}_2)},   \\
    \cF_{\psi,\lambda}(P;Q)
     & =\Eb{\psi(\norm{\xx-\yy}_2)}
    -\frac{\lambda}{2}\Eb{\psi(\norm{\xx-\xx'}_2)},
\end{align*}
where $\xx,\xx'\sim P$ and $\yy,\yy'\sim Q$ are independent.
The discrepancy $D_\psi^2$ is proper for either choice of $\psi$.
Write
$\vv_{\psi,\lambda}(\xx)=\EEb{\yy\sim Q}{\beta_\psi(\yy-\xx)}-\lambda\EEb{\xx'\sim P}{\beta_\psi(\xx'-\xx)}$
for the corresponding source-mean field before tracker conditioning.

\paragraph{The instant attraction-only endpoint.}
For $\lambda=0$, define
$f_{\psi,Q}(\xx)=\EEb{\yy\sim Q}{\psi(\norm{\xx-\yy}_2)}$ and
$M_{\psi}(Q)=\operatorname*{arg\,min}_{\xx\in\R^d}f_{\psi,Q}(\xx)$.
If $Q$ has a finite first moment, continuity and coercivity make $M_\psi(Q)$ nonempty for either admissible potential, and
\begin{equation*}
    \cF_{\psi,0}(P;Q)=\int f_{\psi,Q}(\xx)\,P(\dm\xx)
\end{equation*}
is minimized exactly by laws supported on $M_\psi(Q)$.
For $\psi(r)=r$, this is the geometric-median set; if it is a singleton $\{\xx^\star\}$, the unique minimizer is $\delta_{\xx^\star}$ rather than $Q$, unless $Q$ is already that point mass.
Thus the instant $\rho=0,\lambda=0$ objective is generally not proper without generated-source repulsion.
This characterization concerns the functional before corridor conditioning and does not characterize the tracked $\rho>0$ update.

\begin{proposition}[Controlled intra-source reweighting]
    \label{prop:near-endpoint-reweighting}
    Before corridor conditioning, the source-mean field $\vv_{\psi,\lambda}$ obeys
    $\norm{\vv_{\psi,\lambda}(\xx)-\vv_{\psi,1}(\xx)}_2\leq1-\lambda$.
    Moreover, if $\delta\geq0$ and
    $\cF_{\psi,\lambda}(P;Q)\leq\cF_{\psi,\lambda}(Q;Q)+\delta$, then
    \begin{equation}
        D_\psi^2(P,Q)
        \leq
        (1-\lambda)\Eb{\psi(\norm{\yy-\yy'}_2)}+2\delta.
        \label{eq:app-near-endpoint-objective-control}
    \end{equation}
\end{proposition}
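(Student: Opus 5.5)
The plan has two parts: a pointwise field bound and an algebraic manipulation of the objective.

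For the field bound, subtract the definitions directly:
\[
\vv_{\psi,\lambda}(\xx)-\vv_{\psi,1}(\xx)=(1-\lambda)\EEb{\xx'\sim P}{\beta_\psi(\xx'-\xx)}.
\]
Since $\norm{\beta_\psi}_2\leq1$ for both admissible potentials, Jensen's inequality for the norm gives $\norm{\EEb{\xx'\sim P}{\beta_\psi(\xx'-\xx)}}_2\leq1$. Multiplying by $1-\lambda$ gives the first claim. The only check needed is $\norm{\beta_\psi}_2\leq 1$, which holds because $\psi'(r)=1$ or $r/(r+\varepsilon)\in[0,1)$.

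For the objective control, abbreviate $A=\Eb{\psi(\norm{\xx-\yy}_2)}$, $B=\Eb{\psi(\norm{\xx-\xx'}_2)}$ and $C=\Eb{\psi(\norm{\yy-\yy'}_2)}$. All three are finite under finite first moments, because $0\leq\psi(r)\leq r$. In this notation,
\[
D_\psi^2=2A-B-C,\qquad \cF_{\psi,\lambda}(P;Q)=A-\tfrac{\lambda}{2}B,\qquad \cF_{\psi,\lambda}(Q;Q)=C-\tfrac{\lambda}{2}C=\bigl(1-\tfrac{\lambda}{2}\bigr)C.
\]
The hypothesis then reads $A-\tfrac{\lambda}{2}B\leq(1-\tfrac{\lambda}{2})C+\delta$, and multiplying by two gives $2A\leq\lambda B+(2-\lambda)C+2\delta$. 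Substituting this into $D_\psi^2$ yields
\[
D_\psi^2\leq\lambda B+(2-\lambda)C+2\delta-B-C=(1-\lambda)(C-B)+2\delta.
\]

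The one step needing care is dropping the $-(1-\lambda)B$ term. It can be dropped because $B\geq0$, since $\psi\geq0$ (for $\psi_\varepsilon$, note $\log(1+u)\leq u$), and because $1-\lambda\geq0$. This gives \eqref{eq:app-near-endpoint-objective-control}. Properness of $D_\psi^2$ is not needed for the inequality itself, only for interpreting it. I expect no real obstacle beyond keeping the signs consistent and confirming the integrability that justifies splitting the expectations.
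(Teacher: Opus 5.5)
Your proposal is correct and follows essentially the same route as the paper. For the field bound, both use the identity $\vv_{\psi,\lambda}-\vv_{\psi,1}=(1-\lambda)\EEb{\xx'\sim P}{\beta_\psi(\xx'-\xx)}$ together with $\norm{\beta_\psi}_2\le1$. For the objective bound, your $A,B,C$ algebra is the computation the paper writes as $\cF_{\psi,\lambda}(P;Q)=\frac12D_\psi^2(P,Q)+\frac12\Eb{\psi(\norm{\yy-\yy'}_2)}+\frac{1-\lambda}{2}\Eb{\psi(\norm{\xx-\xx'}_2)}$, evaluated at $P$ and at $Q$, after which the nonnegative model--model term is dropped. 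Your explicit checks of $0\le\psi(r)\le r$ and $\psi'\in[0,1]$ fill in details the paper leaves implicit.
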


\begin{proof}
    The field difference is $(1-\lambda)\EEb{\xx'\sim P}{\beta_\psi(\xx'-\xx)}$, proving the uniform bound.
    Also,
    $\cF_{\psi,\lambda}(P;Q)=\frac12D_\psi^2(P,Q)+\frac12\Eb{\psi(\norm{\yy-\yy'}_2)}+\frac{1-\lambda}{2}\Eb{\psi(\norm{\xx-\xx'}_2)}$.
    Comparing this identity at $P$ and $Q$ and dropping the nonnegative model--model term proves \eqref{eq:app-near-endpoint-objective-control}.
\end{proof}
Consequently, if $\lambda_n\uparrow1$, $\delta_n\to0$, and
$\cF_{\psi,\lambda_n}(P_n;Q)\leq\cF_{\psi,\lambda_n}(Q;Q)+\delta_n$, then $D_\psi(P_n,Q)\to0$; this includes global minimizers whenever they exist.

For the full mixed target, let $Y_{\psi,\lambda}$ denote the instant vector, retain the query $\mathbf q$ and history $\mathcal H$ from \secref{app:tracker-conditional-expectation}, and write
$m_\lambda(\mathbf q)=\Eb{Y_{\psi,\lambda}\mid\mathbf q,\mathcal H}$ and
$\bar\vv_{\psi,\lambda}(\xx)=\Eb{Y_{\psi,\lambda}\mid\xx,\mathcal H}=\vv_{\psi,\lambda}(\xx)$.
Define the tracker excess risk
$E_{\mphi}=\Eb{\norm{h_{\mphi}(\mathbf q)-m_\lambda(\mathbf q)}_2^2\mid\mathcal H}$ and the oracle corridor mismatch
$C_\lambda=\norm{\Eb{m_\lambda(\mathbf q)\mid\xx,\mathcal H}-\bar\vv_{\psi,\lambda}(\xx)}_{L^2(P)}$.
Let $u_\rho(\xx)=\Eb{G_\rho\mid\xx,\mathcal H}$ be the mean mixed field.

\begin{proposition}[Stability of the deployed mixed field]
    \label{prop:practical-mixed-field-stability}
    The deviation from the proper endpoint field satisfies
    \begin{equation}
        \norm{u_\rho-\vv_{\psi,1}}_{L^2(P)}
        \leq
        \underbrace{(1-\lambda)+\rho\left(C_\lambda+\sqrt{E_{\mphi}}\right)}_{\displaystyle \Delta}.
        \label{eq:app-practical-mixed-field-bias}
    \end{equation}
\end{proposition}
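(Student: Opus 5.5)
The proof is a triangle-inequality decomposition of $u_\rho-\vv_{\psi,1}$ into three error sources: reweighting, corridor conditioning, and tracker error. Each source is controlled by a quantity already defined.

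First, use the definition of the mixed target $G_\rho=(1-\rho)Y_{\psi,\lambda}+\rho h_{\mphi}(\mathbf q)$ and take the conditional expectation given $(\xx,\mathcal H)$, as in \eqref{eq:app-projectile-conditioned-mixed-mean}. This gives
\[
u_\rho(\xx)=(1-\rho)\bar\vv_{\psi,\lambda}(\xx)+\rho\,\Eb{h_{\mphi}(\mathbf q)\mid\xx,\mathcal H}.
\]
Then write
\[
u_\rho-\vv_{\psi,1}
=(\bar\vv_{\psi,\lambda}-\vv_{\psi,1})
+\rho\bigl(\Eb{m_\lambda(\mathbf q)\mid\xx,\mathcal H}-\bar\vv_{\psi,\lambda}\bigr)
+\rho\,\Eb{h_{\mphi}(\mathbf q)-m_\lambda(\mathbf q)\mid\xx,\mathcal H}.
\]
This identity holds because the $(1-\rho)$ and $\rho$ coefficients on $\bar\vv_{\psi,\lambda}$ recombine to one. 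Apply Minkowski's inequality in $L^2(P)$ to bound the left side by the sum of the three norms.

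The first term is handled by \propref{prop:near-endpoint-reweighting}, using $\bar\vv_{\psi,\lambda}=\vv_{\psi,\lambda}$. That proposition gives the pointwise bound $1-\lambda$, and since $P$ is a probability measure the $L^2(P)$ norm is also at most $1-\lambda$. The second term has $L^2(P)$ norm exactly $\rho C_\lambda$ by the definition of the oracle corridor mismatch.

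The third term is the only real step. Apply conditional Jensen pointwise:
\[
\norm{\Eb{h_{\mphi}-m_\lambda\mid\xx,\mathcal H}}_2^2\le\Eb{\norm{h_{\mphi}-m_\lambda}_2^2\mid\xx,\mathcal H}.
\]
Then integrate over $\xx\sim P$ and use the tower property to obtain $\Eb{\norm{h_{\mphi}(\mathbf q)-m_\lambda(\mathbf q)}_2^2\mid\mathcal H}=E_{\mphi}$. This gives the bound $\rho\sqrt{E_{\mphi}}$.

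The point needing care is that the projectile $\xx$ in $u_\rho$ is distributed as $P$ conditional on $\mathcal H$. The tower step must use the same joint law of $(\xx,\mathbf q)$ under which $E_{\mphi}$ is defined, which holds because $\mathbf q$ is built from the sampled projectile. Square integrability of $h_{\mphi}$ and boundedness of $Y$, with norm at most $1+\lambda\le2$, justify all the conditional expectations. Summing the three bounds gives $\Delta$.
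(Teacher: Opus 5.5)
Your proposal is correct and matches the paper's proof: you use the same three-term decomposition of $u_\rho-\vv_{\psi,1}$, bound the reweighting term with \propref{prop:near-endpoint-reweighting}, get the corridor term directly from the definition of $C_\lambda$, and obtain $\rho\sqrt{E_{\mphi}}$ by conditional $L^2$ contraction (your conditional Jensen plus tower step). Your remarks on the joint law of $(\xx,\mathbf q)$ and the boundedness of $Y_{\psi,\lambda}$ just spell out what the paper's two-line proof leaves implicit.
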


\begin{proof}
    The exact decomposition is
    $u_\rho-\vv_{\psi,1}=(\bar\vv_{\psi,\lambda}-\vv_{\psi,1})+\rho\Eb{h_{\mphi}(\mathbf q)-m_\lambda(\mathbf q)\mid\xx,\mathcal H}+\rho(\Eb{m_\lambda(\mathbf q)\mid\xx,\mathcal H}-\bar\vv_{\psi,\lambda})$.
    Apply conditional $L^2$ contraction and \propref{prop:near-endpoint-reweighting}.
\end{proof}
At $\lambda=1$, the query is the projectile and $C_1=0$; for $\lambda<1$, $C_\lambda$ isolates the bias caused by conditioning on the corridor rather than the projectile.
Along a flow, let $\Delta_t$ denote the right-hand side of \eqref{eq:app-practical-mixed-field-bias} and write $\cF_\psi(P)=\frac12D_\psi^2(P,Q)$.
Under the corresponding chain rule and linear slope $\norm{\vv_{\psi,1,t}}_{L^2(P_t)}^2\geq c\cF_\psi(P_t)$, Young's inequality yields
\begin{equation}
    \cF_\psi(P_t)
    \leq
    e^{-ct/2}\cF_\psi(P_0)
    +\frac12\int_0^t e^{-c(t-s)/2}\Delta_s^2\,\dm s.
    \label{eq:app-practical-flow-stability}
\end{equation}
Hence bounded $\Delta_t\to0$ gives $D_\psi(P_t,Q)\to0$, while $\sup_t\Delta_t\leq\bar\Delta$ gives
$\limsup_{t\to\infty}\cF_\psi(P_t)\leq\bar\Delta^2/c$.
If instead $\Delta_t\leq\gamma\norm{\vv_{\psi,1,t}}_{L^2(P_t)}$ for some $\gamma<1$, the argument of \thmref{thm:tracked-flow-convergence} gives exact convergence under its remaining assumptions.
Thus \eqref{eq:app-near-endpoint-objective-control} controls the weighted objective, while \eqref{eq:app-practical-flow-stability} quantifies stability of the deployed flow around the proper endpoint.

\subsubsection{Line-integral interpretation of the displacement tracker}
\label{app:gan-like-line-integral}
For the modified displacement analogue in \secref{sec:path-conditioned-fallback}, let
$\mv_{\mphi}=\nabla u_{\mphi}$,
$\Delta=\xx_{\mathrm r}-\xx_{\mathrm p}$, and
$\xx_\alpha=(1-\alpha)\xx_{\mathrm p}+\alpha\xx_{\mathrm r}$.
The fundamental theorem for line integrals gives
\begin{equation}
    \int_0^1\nabla_{\xx}u_{\mphi}(\xx_\alpha,\cc)^\top\Delta\,\dm\alpha
    =u_{\mphi}(\xx_{\mathrm r},\cc)-u_{\mphi}(\xx_{\mathrm p},\cc).
    \label{eq:app-gan-like-line-integral}
\end{equation}
Expanding $\Eb{\norm{\nabla u_{\mphi}(\xx_\alpha,\cc)-\Delta}_2^2}$ for
$\alpha\sim\mathcal U(0,1)$ and applying this identity gives \eqref{eq:gan-like-critic-objective}; the remaining $\Eb{\norm{\Delta}_2^2}$ is constant for fixed generator parameters.
Hence displacement regression maximizes the real--generated potential gap minus one half of the corridor-averaged squared gradient norm.
The infimum over conservative fields is characterized by the $L^2$ projection of $\Eb{\Delta\mid\xx_\alpha,\cc}$ onto their closure under the corridor measure; a finite network is further restricted by its parameterization.
This is not an identity for the deployed unit-bearing update, and a general vector tracker retains the conditional-expectation decomposition but not the scalar line-integral interpretation.

\end{document}